\documentclass{article}

\RequirePackage{fancyhdr}
\RequirePackage{color}
\RequirePackage{algorithm}
\RequirePackage{algorithmic}
\RequirePackage{natbib}
\RequirePackage{forloop}

\usepackage[margin=1in]{geometry}
\usepackage{fancyhdr}
\pagestyle{fancy}
\fancyhf{}
\fancyhead[LE,LO]{\leftmark}
\fancyfoot[CE,CO]{\thepage}


\usepackage{amsmath,amsfonts,bm}









\def\eqref#1{equation~\ref{#1}}









\def\1{\bm{1}}








\def\vu{{\bm{u}}}
\def\vv{{\bm{v}}}
\def\vw{{\bm{w}}}



\DeclareMathAlphabet{\mathsfit}{\encodingdefault}{\sfdefault}{m}{sl}
\SetMathAlphabet{\mathsfit}{bold}{\encodingdefault}{\sfdefault}{bx}{n}











\newcommand{\E}{\mathbb{E}}



\DeclareMathOperator*{\argmin}{argmin}

\usepackage[utf8]{inputenc} 
\usepackage[T1]{fontenc}    
\usepackage{hyperref}       
\usepackage{url}            
\usepackage{booktabs}       
\usepackage{amsfonts}       
\usepackage{nicefrac}       
\usepackage{microtype}      

\usepackage{amsmath}
\usepackage{amsfonts}
\usepackage{amssymb}
\usepackage{mleftright}
\usepackage{xparse}

\usepackage{float}
\usepackage{algorithm}
\usepackage{graphicx}
\usepackage{subcaption}
\usepackage{dirtytalk}
\usepackage{multirow}
\usepackage{bbm}
\usepackage{bm}
\usepackage{mdframed}
\usepackage{wrapfig}

\usepackage{url}
\usepackage{booktabs}
\usepackage{amsfonts}
\usepackage{nicefrac}
\usepackage{microtype}
\usepackage{natbib}
\usepackage{apalike}
\usepackage{dsfont}
\usepackage{xcolor}
\usepackage{siunitx}
\usepackage{appendix}
\usepackage{xspace}

\usepackage{tikz}
\usepackage{pgfplots}
\pgfplotsset{compat=1.12}
\usepackage{adjustbox}
\usepgfplotslibrary{groupplots}

\allowdisplaybreaks

\newcolumntype{R}[2]{%
    >{\adjustbox{angle=#1,lap=\width-(#2)}\bgroup}%
    l%
    <{\egroup}%
}

\setcitestyle{citesep={,}}  

\usepackage{float}
\usepackage{stmaryrd}
\usepackage{amsthm}

\usepackage{thmtools}
\usepackage{thmbox}
\usepackage{thm-restate}

\newmdenv[topline=false,rightline=false,bottomline=false,nobreak=false]{leftlinebox}

\declaretheorem[name=Theorem]{theorem}

\declaretheorem[name=Definition]{definition}

\declaretheorem[name=Lemma]{lemma}

\NewDocumentCommand{\evalat}{sO{\big}mm}{%
  \IfBooleanTF{#1}
   {\mleft. #3 \mright|_{#4}}
   {#3#2|_{#4}}%
}

\newcommand{\eqcomment}[1]{\quad \text{(#1)}}

\newcommand{\w}{\mathbf{w}}

\newcommand{\wbar}{\underline{\mathbf{w}}}

\newcommand{\wstar}{\mathbf{w_\star}}

\newcommand{\Z}{\mathcal{Z}}

\newcommand{\fstar}{f_\star}

\newcommand{\vwhat}{\hat{\vw}}

\pdfsuppresswarningpagegroup=1

\newsavebox{\leftbox}
\newsavebox{\rightbox}

\newcommand{\Arrow}[1]{%
\parbox{#1}{\tikz{\draw[->](0,0)--(#1,0);\draw[<-](0,-15)--(#1,-15);}}
}

\raggedbottom

\title{Training Neural Networks for and by Interpolation}

\author{Leonard Berrada$^1$\footnote{Work performed while at University of Oxford. Corresponding address: \texttt{lberrada@google.com}.} , Andrew Zisserman$^2$ and M. Pawan Kumar$^{2}$ \\
  $^1$DeepMind, London\\
  $^2$Department of Engineering Science, University of Oxford\\
}

\begin{document}

\maketitle

\section*{Abstract}
\addcontentsline{toc}{section}{Abstract}

In modern supervised learning, many deep neural networks are able to interpolate the data: the empirical loss can be driven to near zero on all samples simultaneously.
In this work, we explicitly exploit this interpolation property for the design of a new optimization algorithm for deep learning, which we term Adaptive Learning-rates for Interpolation with Gradients (ALI-G).
ALI-G retains the two main advantages of Stochastic Gradient Descent (SGD), which are (i) a low computational cost per iteration and (ii) good generalization performance in practice.
At each iteration, ALI-G exploits the interpolation property to compute an adaptive learning-rate in closed form.
In addition, ALI-G clips the learning-rate to a maximal value, which we prove to be helpful for non-convex problems.
Crucially, in contrast to the learning-rate of SGD, the maximal learning-rate of ALI-G does not require a decay schedule, which makes it considerably easier to tune.
We provide convergence guarantees of ALI-G in various stochastic settings.
Notably, we tackle the realistic case where the interpolation property is satisfied up to some tolerance.
We provide experiments on a variety of architectures and tasks:
(i) learning a differentiable neural computer;
(ii) training a wide residual network on the SVHN data set;
(iii) training a Bi-LSTM on the SNLI data set;
and (iv) training wide residual networks and densely connected networks on the CIFAR data sets.
ALI-G produces state-of-the-art results among adaptive methods, and even yields comparable performance with SGD, which requires manually tuned learning-rate schedules.
Furthermore, ALI-G is simple to implement in any standard deep learning framework and can be used as a drop-in replacement in existing code.

\section{Introduction}
Training a deep neural network is a challenging optimization problem: it involves minimizing the average of many high-dimensional non-convex functions.
In practice, the main algorithms of choice are Stochastic Gradient Descent (SGD) \citep{Robbins1951} and adaptive gradient methods such as AdaGrad \citep{Duchi2011} or Adam \citep{Kingma2015}.
It has been observed that SGD tends to provide better generalization performance than adaptive gradient methods \cite{Wilson2017}.
However, the downside of SGD is that it requires the manual design of a learning-rate schedule, which is widely regarded as an onerous and time consuming task.
In this work, we alleviate this issue with the design of an adaptive learning-rate algorithm that needs minimal tuning for good performance.
Indeed, we postulate that by using the same descent direction as SGD while automatically adapting its learning-rate, the resulting algorithm can offer similar generalization performance while requiring considerably less tuning.

In this work, we build on the following two ideas.
First, an adaptive learning-rate can be computed for the non-stochastic gradient direction when the minimum value of the objective function is known \citep{Polyak1969,Shor1985,Brannlund1995,Nedic2001, Nedic2001a}.
And second, one such minimum value is usually approximately known for interpolating models: for instance, it is close to zero for a model trained with the cross-entropy loss.
By carefully combining these two ideas, we create a stochastic algorithm that (i) provably converges fast in convex or Restricted Secant Inequality (RSI) settings, and (ii) obtains state-of-the-art empirical results with neural networks.
We refer to this algorithm as Adaptive Learning-rates for Interpolation with Gradients (ALI-G).

Procedurally, ALI-G is close to many existing algorithms, such as Deep Frank-Wolfe \citep{Berrada2019}, {\sc aProx} \citep{Asi2019} and $L_4$ \citep{Rolinek2018}.
And yet uniquely, thanks to its careful design and analysis, the learning-rate of ALI-G effectively requires a single hyper-parameter that does not need to be decayed.
Since ALI-G is easy to implement in any deep learning framework, we believe that it can prove to be a practical and reliable optimization tool for deep learning.

\paragraph{Contributions.}
We summarize the contributions of this work as follows: \\
- We design an adaptive learning-rate algorithm that uses a single hyper-parameter and does need any decaying schedule.
In contrast, the closely related {\sc aProx} \citep{Asi2019} and $L_4$ \citep{Rolinek2018} use respectively two and four hyper-parameters for their learning-rate.
\\
- We provide convergence rates of ALI-G in various stochastic convex settings.
Importantly, our theoretical results take into account the error in the estimate of the minimum objective value.
To the best of our knowledge, our work is the first to establish convergence rates for interpolation with approximate estimates. \\
- We prove that using a maximal learning-rate helps with convergence for a class of non-convex problems. \\
- We demonstrate state-of-the-art results for ALI-G on learning a differentiable neural computer; training variants of residual networks on the SVHN and CIFAR data sets; and training a Bi-LSTM on the Stanford Natural Language Inference data set.

\section{The Algorithm}

\subsection{Problem Setting}
\label{sec:pb_setting}

\paragraph{Loss Function.}
We consider a supervised learning task where the model is parameterized by $\vw \in \mathbb{R}^p$.
Usually, the objective function can be expressed as an expectation over $z \in \Z$, a random variable indexing the samples of the training set:
\begin{equation}
    f(\vw) \triangleq \mathbb{E}_{z \in \Z}[\ell_z(\vw)],
\end{equation}
where each $\ell_z$ is the loss function associated with the sample $z$.
We assume that each $\ell_z$ is non-negative, which is the case for the large majority of loss functions used in machine learning.
For instance, suppose that the model is a deep neural network with weights $\vw$ performing classification.
Then for each sample $z$, $\ell_z(\vw)$ can represent the cross-entropy loss, which is always non-negative.
Other non-negative loss functions include the structured or multi-class hinge loss, and the $\ell_1$ or $\ell_2$ loss functions for regression.

\paragraph{Regularization.}
It is often desirable to employ a regularization function $\phi$ in order to promote generalization.
In this work, we incorporate such regularization as a constraint on the feasible domain: $\Omega = \left\{ \vw \in \mathbb{R}^p: \ \phi(\vw) \leq r \right\}$ for some value of $r$.
In the deep learning setting, this will allow us to assume that the objective function can be driven close to zero without unrealistic assumptions about the regularization.
Our framework can handle any constraint set $\Omega$ on which Euclidean projections are computationally efficient.
This includes the feasible set induced by $\ell_2$ regularization: $\Omega = \left\{ \vw \in \mathbb{R}^p: \  \| \vw \|_2^2 \leq r \right\}$, for which the projection is given by a simple rescaling of $\vw$.
Finally, note that if we do not wish to use any regularization, we define $\Omega = \mathbb{R}^p$ and the corresponding projection is the identity.

\paragraph{Problem Formulation.}
The learning task can be expressed as the problem $(\mathcal{P})$ of finding a feasible vector of parameters $\wstar \in \Omega$ that minimizes $f$:
\begin{equation} \tag{$\mathcal{P}$} \label{eq:main_problem}
    \wstar \in \argmin\limits_{\vw \in \Omega} f(\vw).
\end{equation}
Also note that $\fstar$ refers to the minimum value of $f$ over $\Omega$: $\fstar \triangleq \min_{\vw \in \Omega} f(\vw)$.

\paragraph{Interpolation.} 
We say that the problem (\ref{eq:main_problem}) satisfies the interpolation assumption if there exist a solution $\wstar$ that simultaneously minimizes all individual loss functions:
\begin{equation}
    \forall z \in \Z, \: \ell_z(\wstar) = 0.
\label{eq:def_interpolation}
\end{equation}
The condition (\ref{eq:def_interpolation}) can be equivalently expressed as $\fstar = 0$.
We also point out that in some cases, it can be more realistic to relax (\ref{eq:def_interpolation}) to $\forall z \in \Z, \: \ell_z(\wstar) \leq \varepsilon$ for a small positive $\varepsilon$.

\subsection{The Polyak Step-Size}
\label{sec:polyak_step_size}

Before outlining the ALI-G algorithm, we begin with a brief description of the Polyak step-size, from which ALI-G draws some fundamental ideas.

\paragraph{Setting.}
We assume that $\fstar$ is known and we use non-stochastic updates: at each iteration, the full objective $f$ and its derivative are evaluated.
We denote by $\nabla f(\vw)$ the first-order derivative of $f$ at $\vw$ (e.g. $\nabla f(\vw)$ can be a sub-gradient or the gradient).
In addition, $\| \cdot \|$ is the standard Euclidean norm in $\mathbb{R}^p$, and $\Pi_{\Omega}(\vw)$ is the Euclidean projection of the vector $\vw \in \mathbb{R}^p$ on the set $\Omega$.

\paragraph{Polyak Step-Size.}
At time-step $t$, using the Polyak step-size \citep{Polyak1969,Shor1985,Brannlund1995,Nedic2001,Nedic2001a} yields the following update:
\begin{equation} \label{eq:polyak_step}
    \vw_{t+1} = \Pi_\Omega\left(\vw_{t} - \gamma_t \nabla f(\vw_t) \right), \text{ where } \gamma_t \triangleq \tfrac{f(\vw_t) - \fstar}{\|\nabla f(\vw_t) \|^2},
\end{equation}
where we loosely define $\frac{0}{0}=0$ for simplicity purposes.

\begin{figure}[H]
\centering
\scriptsize
\input{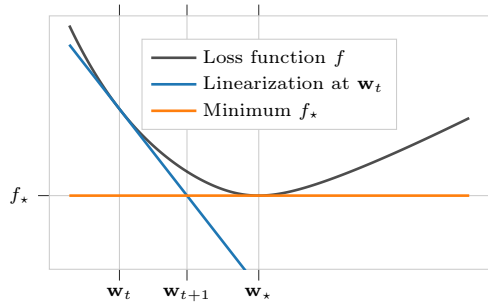}
\caption{\em Illustration of the Polyak step-size in 1D. In this case, and further assuming that $\fstar = 0$, the algorithm coincides with the Newton-Raphson method for finding roots of a function.}
\label{fig:nr_step_simple}
\end{figure}

\paragraph{Interpretation.}
It can be shown that $\vw_{t+1}$ lies on the intersection between the linearization of $f$ at $\vw_t$ and the horizontal plane $z=\fstar$ (see Figure \ref{fig:nr_step_simple}, more details in Proposition 1 in the appendix).
Note that since $\fstar$ is the minimum of $f$, the Polyak step-size $\gamma_t$ is necessarily non-negative.

\paragraph{Limitations.}
Equation (\ref{eq:polyak_step}) has two major short-comings that prevent its applicability in a machine learning setting.
First, each update requires a full evaluation of $f$ and its derivative.
Stochastic extensions have been proposed in \citep{Nedic2001,Nedic2001a}, but they still require frequent evaluations of $f$.
This is expensive in the large data setting, and even computationally infeasible when using massive data augmentation.
Second, when applying this method to the non-convex setting of deep neural networks, the method sometimes fails to converge.

Therefore we would like to design an extension of the Polyak step-size that (i) is inexpensive to compute in a stochastic setting (e.g. with a computational cost that is independent of the total number of training samples), and (ii) converges in practice when used with deep neural networks.
The next section introduces the ALI-G algorithm, which achieves these two goals in the interpolation setting.

\subsection{The ALI-G Algorithm}

We now present the ALI-G algorithm.
For this, we suppose that we are in an interpolation setting: the model is assumed to be able to drive the loss function to near zero on all samples simultaneously.

\paragraph{Algorithm.} The main steps of the ALI-G algorithm are provided in Algorithm \ref{algo:prox}. ALI-G iterates over three operations until convergence. First, it computes a stochastic approximation of the learning objective and its derivative (line 3). Second, it computes a step-size decay parameter $\gamma_t$ based on the stochastic information (line 4). Third, it updates the parameters by moving in the negative derivative direction by an amount specified by the step-size and projecting the resulting vector on to the feasible region (line 5).

\begin{algorithm}[ht]
\caption{\em The ALI-G algorithm}\label{algo:prox}
\begin{algorithmic}[1]
\REQUIRE maximal learning-rate $\eta$, initial feasible $\vw_0 \in \Omega$, small constant $\delta > 0$
\STATE $t=0$
\WHILE {not converged}
\STATE Get $\ell_{z_t}(\vw_t)$, $\nabla \ell_{z_t}(\vw_t)$ with $z_t$ drawn i.i.d.
\STATE $\gamma_t = \min \left\{ \frac{\ell_{z_t}(\vw_t)}{\| \nabla \ell_{z_t}(\vw_t) \|^2 + \delta}, \eta \right\}$
\STATE $\vw_{t+1} = \Pi_{\Omega}\left(\vw_t - \gamma_t \nabla \ell_{z_t}(\vw_t) \right)$
\STATE $t=t+1$
\ENDWHILE
\end{algorithmic}
\end{algorithm}

\paragraph{Comparison with the Polyak Step-Size.} There are three main differences to the update in equation (\ref{eq:polyak_step}).
First, each update only uses the loss $\ell_{z_t}$ and its derivative rather than the full objective $f$ and its derivative. Second, the learning-rate $\gamma_t$ is clipped to $\eta$, the maximal learning-rate hyper-parameter.
We emphasize that $\eta$ remains constant throughout the iterations, therefore it is a single hyper-parameter and does not need a schedule like SGD learning-rate.
Third, the minimum $\fstar$ has been replaced by the lower-bound of $0$.
All these modifications will be justified in the next section.

\paragraph{The ALI-G$^\infty$ Variant.}
When ALI-G uses no maximal learning-rate, we refer to the algorithm as ALI-G$^\infty$, since it is equivalent to use an infinite maximal learning-rate.
Note that ALI-G$^\infty$ requires no hyper-parameter for its step-size.

\paragraph{Momentum.}
In some of our experiments, we accelerate ALI-G with Nesterov momentum.
The update step at line 5 of algorithm \ref{algo:prox} is then replaced by (i) a velocity update $\vv_{t} = \mu \vv_{t-1} - \gamma_t \nabla \ell_{z_t}(\vw_t)$ and (ii) a parameter update $\vw_{t+1} = \Pi_{\Omega}\left(\vw_t + \mu \vv_{t} \right)$.

\section{Justification and Analysis}

\subsection{Stochasticity}

By definition, the interpolation setting gives $\fstar = 0$, which we used in ALI-G to simplify the formula of the learning-rate $\gamma_t$.
More subtly, the interpolation property also allows the updates to rely on the stochastic estimate $\ell_{z_t}(\vw_t)$ rather than the exact but expensive $f(\vw_t)$.
Intuitively, this is possible because in the interpolation setting, we know the minimum of the loss function for each individual training sample.
Recall that ALI-G$^\infty$ is the variant of ALI-G that uses no maximal learning-rate.
The following result formalizes the convergence guarantee of ALI-G$^\infty$ in the stochastic convex setting.

\begin{theorem}[Convex and Lipschitz] \label{th:alig_cvx}
We assume that $\Omega$ is a convex set, and that for every $z \in \Z$, $\ell_z$ is convex and $C$-Lipschitz.
Let $\wstar$ be a solution of (\ref{eq:main_problem}), and assume that the interpolation property is approximately satisfied: $\forall z \in \Z, \: \ell_z(\wstar) \leq \varepsilon$, for some interpolation tolerance $\varepsilon \geq 0$.
Then ALI-G$^\infty$ applied to $f$ satisfies:
\begin{equation}
\E \left[f\left(\tfrac{1}{T+ 1} \sum\limits_{t=0}^T \vw_t \right) \right]
    \leq \tfrac{ \| \vw_{0} - \wstar \| \sqrt{C^2 + \delta}}{\sqrt{T + 1}} + \varepsilon \sqrt{\left(\tfrac{C^2}{\delta} + 1 \right)}.
\end{equation}
\end{theorem}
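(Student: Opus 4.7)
The plan is a projected-subgradient telescoping argument, adapted to absorb the interpolation tolerance $\varepsilon$ that enters only through $\ell_{z_t}(\wstar) \leq \varepsilon$. Write $g_t = \|\nabla\ell_{z_t}(\vw_t)\|$; the ALI-G$^\infty$ step satisfies $\gamma_t(g_t^2+\delta) = \ell_{z_t}(\vw_t)$, which I will use through the identity $\gamma_t^2 g_t^2 = \gamma_t\ell_{z_t}(\vw_t) - \gamma_t^2\delta$.

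First, I would invoke non-expansiveness of $\Pi_\Omega$ (valid since $\Omega$ is convex and $\wstar \in \Omega$) together with convexity of $\ell_{z_t}$ to get the standard one-step bound
\[
\|\vw_{t+1}-\wstar\|^2 \leq \|\vw_t-\wstar\|^2 - 2\gamma_t(\ell_{z_t}(\vw_t)-\ell_{z_t}(\wstar)) + \gamma_t^2 g_t^2.
\]
Substituting $\gamma_t^2 g_t^2 = \gamma_t\ell_{z_t}(\vw_t) - \gamma_t^2\delta$ and using $\ell_{z_t}(\wstar)\leq\varepsilon$ gives
\[
\|\vw_{t+1}-\wstar\|^2 \leq \|\vw_t-\wstar\|^2 - \gamma_t\ell_{z_t}(\vw_t) + \left[2\gamma_t\varepsilon - \gamma_t^2\delta\right].
\]
Completing the square in $\gamma_t$ yields $2\gamma_t\varepsilon - \gamma_t^2\delta = -\delta(\gamma_t-\varepsilon/\delta)^2 + \varepsilon^2/\delta \leq \varepsilon^2/\delta$, producing the clean recursion
\[
\|\vw_{t+1}-\wstar\|^2 \leq \|\vw_t-\wstar\|^2 - \gamma_t\ell_{z_t}(\vw_t) + \frac{\varepsilon^2}{\delta}.
\]

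Next, Lipschitzness $g_t \leq C$ supplies the lower bound $\gamma_t\ell_{z_t}(\vw_t) = \ell_{z_t}(\vw_t)^2/(g_t^2+\delta) \geq \ell_{z_t}(\vw_t)^2/(C^2+\delta)$; telescoping $t=0,\ldots,T$ and rearranging gives
\[
\sum_{t=0}^T \ell_{z_t}(\vw_t)^2 \leq (C^2+\delta)\left[\|\vw_0-\wstar\|^2 + (T+1)\frac{\varepsilon^2}{\delta}\right].
\]
I would then apply the Jensen/Cauchy-Schwarz inequality $\left(\tfrac{1}{T+1}\sum_t\ell_{z_t}(\vw_t)\right)^2 \leq \tfrac{1}{T+1}\sum_t\ell_{z_t}(\vw_t)^2$, use subadditivity of the square root $\sqrt{a+b}\leq\sqrt{a}+\sqrt{b}$ to separate the two contributions, take expectations with $\E_{z_t}[\ell_{z_t}(\vw_t)\mid\vw_t]=f(\vw_t)$ via the tower rule, and finally apply Jensen's inequality on the convex $f$ so that $\E[f(\bar\vw)] \leq \tfrac{1}{T+1}\sum_t\E[f(\vw_t)]$; this yields exactly the stated bound.

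The main obstacle, in the sense of what has to be spotted rather than what requires a long calculation, is the identity $\gamma_t^2 g_t^2 = \gamma_t\ell_{z_t}(\vw_t) - \gamma_t^2\delta$ combined with the completion of the square in $\gamma_t$. Using only the weaker bound $\gamma_t^2 g_t^2 \leq \gamma_t\ell_{z_t}(\vw_t)$ (as in the exact-interpolation case) leaves an unabsorbed $2\gamma_t\varepsilon$ noise term which, after Young's inequality, inflates the coefficient on $\varepsilon$ by an extra factor and fails to give the claimed $\sqrt{C^2/\delta+1}$. It is precisely the extra $-\gamma_t^2\delta$ that allows the $\varepsilon$-noise and the step-size to be combined into the clean additive error $\varepsilon^2/\delta$ per iteration, from which a single Cauchy-Schwarz step delivers the theorem.
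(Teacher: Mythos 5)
Your proposal is correct and follows essentially the same route as the paper's proof (Theorem \ref{th:alig_cvx_large_eta} specialized to $\eta=\infty$): non-expansiveness of $\Pi_\Omega$ plus convexity, exploiting $\gamma_t(\|\nabla\ell_{z_t}(\vw_t)\|^2+\delta)=\ell_{z_t}(\vw_t)$ to get a per-iteration error of $\varepsilon^2/\delta$ absorbed by the $\delta$ in the denominator, telescoping, Cauchy--Schwarz on the squared losses, subadditivity of the square root, and the tower rule with Jensen. The only difference is cosmetic: you complete the square in $\gamma_t$ and bound $\sum_t \ell_{z_t}(\vw_t)^2$, whereas the paper writes $\ell_{z_t}(\vw_t)^2-2\ell_{z_t}(\vw_t)\ell_{z_t}(\wstar)=(\ell_{z_t}(\vw_t)-\ell_{z_t}(\wstar))^2-\ell_{z_t}(\wstar)^2$ and bounds $\sum_t(\ell_{z_t}(\vw_t)-\ell_{z_t}(\wstar))^2$ --- both extract the same $\varepsilon^2/\delta$ from the same source.
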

In other words, by assuming interpolation, ALI-G provably converges while requiring only $\ell_{z_t}(\vw_t)$ and $\nabla \ell_{z_t}(\vw_t)$ (stochastic estimation per sample) to compute its learning-rate.
In contrast, the Polyak step-size would require $f(\vw_t)$ and $\nabla f(\vw_t)$ to compute the learning-rate (deterministic computation over all training samples).
This is because the Polyak step-size exploits the knowledge of $\fstar$ only, which is weaker information than knowing the minimum of all individual loss functions $\ell_z$ (as ALI-G does in the interpolation setting).
This difference induces a major computational advantage of ALI-G over the usual Polyak step-size.

We emphasize that in Theorem \ref{th:alig_cvx}, our careful analysis explicitly shows the dependency of the convergence result on the interpolation tolerance $\varepsilon$.
It is reassuring to note that convergence is exact when the interpolation property is exactly satisfied ($\varepsilon = 0$).

In the appendix, we also establish convergence rates of $\mathcal{O}(1 / T)$ for smooth convex functions, and $\mathcal{O}(\exp(-\alpha T / 8 \beta ))$ for $\alpha$-strongly convex and $\beta$-smooth functions.
Similar results can be proved when using a maximal learning-rate $\eta$: the convergence speed then remains unchanged provided that $\eta$ is large enough, and it is lowered when $\eta$ is small.
We refer the interested reader to the appendix for the formal results and their proofs.

\paragraph{Interpolation and Gradient Variance.}
In the literature, most convergence results of SGD depend on the variance of the gradient, which we denote by $\upsilon$ here.
The reader may have noticed that our convergence results depends only the interpolation tolerance $\varepsilon$ rather than $\upsilon$.
We briefly compare how these two quantities help convergence in their own distinct ways.
The gradient variance $\upsilon$ globally characterizes how much the gradient direction can differ across individual samples $z$, at any point $\vw$ of the parameter space.
In particular, a low value for $\upsilon$ implies that the loss functions $\ell_z$ agree in the steepest descent direction at any point of the trajectory $\vw_0, ..., \vw_T$. 
In contrast, the interpolation tolerance $\varepsilon$ locally characterizes the behavior of all loss functions near a global minimum $\wstar$ only.
More specifically, a low value for $\varepsilon$ ensures that all loss functions $\ell_z$ agree in a common minimizer $\wstar$.
Thus these two mechanisms are distinct ways of ensuring convergence of SGD.
Importantly, a low interpolation tolerance $\varepsilon$ does not necessarily imply a low gradient variance $\upsilon$ and vice-versa.

\subsection{Maximal Learning-Rate}

\paragraph{Non-Convexity.}
The Polyak step-size may fail to converge when the objective is non-convex, as figure \ref{fig:rsi} illustrates: in this (non-convex) setting, gradient descent with Polyak step-size oscillates between two symmetrical points because its step-size is too large.
A similar behavior can be observed on the non-convex problem of training deep neural networks.

\begin{figure}[h]
\centering
\footnotesize
\input{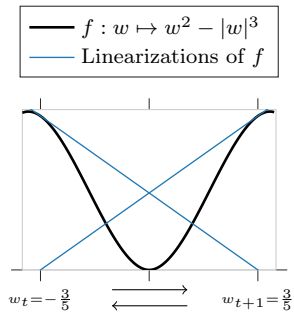}
\caption{\em
A simple example where the Polyak step-size oscillates due to non-convexity.
On this problem, ALI-G converges whenever its maximal learning-rate is lower than $10$.
}
\label{fig:rsi}
\end{figure}

In order to analyze the convergence of ALI-G in a non-convex setting, we introduce the Restricted Secant Inequality (RSI) \cite{Zhang2013}:
\begin{definition}
Let $\phi: \mathbb{R}^p \to \mathbb{R}$ be a lower-bounded differentiable function achieving its minimum at $\wstar$.
We say that $\phi$ satisfies the RSI if there exists $\alpha > 0$ such that:
\begin{equation}
    \forall \vw \in \mathbb{R}^p, \: \nabla \phi (\w)^\top (\vw - \wstar) \geq \alpha \| \vw - \wstar \|^2.
\end{equation}
\end{definition}

The RSI does not require convexity and is a weaker assumption in the sense that all strongly convex functions satisfy the RSI \cite{Zhang2013}.
In particular, the example in figure \ref{fig:rsi} does satisfy the RSI (proof in the appendix).
In other words, the example above shows that the Polyak step-size can fail to converge under the RSI assumption.
In contrast, we prove that with an appropriate maximal learning-rate, ALI-G converges (exponentially fast) on all interpolating problems that satisfy the RSI:

\begin{restatable}{theorem}{thrsismooth}\label{th:alig_rsi}
We assume that $\Omega = \mathbb{R}^p$, and that for every $z \in \Z$, $\ell_z$ is $\beta$-smooth and satisfies the RSI with constant $\mu$.
Let $\wstar$ be a solution of (\ref{eq:main_problem}) such that $\forall z \in \Z, \: \ell_z(\wstar) = 0$.
Further assume that $\frac{1}{2 \beta} \leq \eta \leq \frac{2 \mu}{\beta^2}$.
Then if we apply ALI-G with a maximal learning-rate of $\eta$ to $f$, we have:
\begin{equation}
f(\w_{T+1}) - \fstar
    \leq  \tfrac{\beta}{2} \exp \left( \tfrac{-(2\mu - \eta \beta^2)T }{2\beta} \right) \| \w_{0} - \wstar \|^2.
\end{equation}
\end{restatable}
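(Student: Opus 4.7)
The plan is to prove the theorem by establishing a one-step contraction in the squared distance $\|\w_t - \wstar\|^2$, iterating it, and then converting to function values via smoothness. The key is to exploit both the lower and upper bounds on the effective step-size $\gamma_t$ that are guaranteed by the hypothesis $\frac{1}{2\beta} \leq \eta \leq \frac{2\mu}{\beta^2}$.

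First I would collect the consequences of the assumptions at the iterate $\w_t$ (for a fixed realization of $z_t$). Exact interpolation together with $\ell_{z_t} \geq 0$ implies that $\wstar$ minimizes $\ell_{z_t}$, so $\nabla \ell_{z_t}(\wstar) = 0$. Combined with $\beta$-smoothness this gives $\|\nabla \ell_{z_t}(\w_t)\| \leq \beta\|\w_t - \wstar\|$ and the standard inequality $\ell_{z_t}(\w_t) \geq \frac{1}{2\beta}\|\nabla \ell_{z_t}(\w_t)\|^2$. The latter lower-bounds the unclipped Polyak step by $\frac{1}{2\beta}$; since $\eta \geq \frac{1}{2\beta}$ the clipped step-size chosen by ALI-G therefore satisfies $\gamma_t \in [\frac{1}{2\beta},\eta]$ (for simplicity I would work in the $\delta\to 0$ regime and note that a finite $\delta$ only perturbs the lower bound by a vanishing amount). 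The RSI with constant $\mu$ applied to $\ell_{z_t}$, whose minimizer is $\wstar$, yields $\nabla \ell_{z_t}(\w_t)^\top (\w_t - \wstar) \geq \mu \|\w_t - \wstar\|^2$.

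Next I would expand the Euclidean update (the projection is absent since $\Omega = \R^p$):
\begin{equation*}
\|\w_{t+1} - \wstar\|^2 = \|\w_t - \wstar\|^2 - 2\gamma_t \nabla \ell_{z_t}(\w_t)^\top(\w_t - \wstar) + \gamma_t^2 \|\nabla \ell_{z_t}(\w_t)\|^2.
\end{equation*}
Applying RSI to the cross-term, the smoothness bound $\|\nabla \ell_{z_t}(\w_t)\|^2 \leq \beta^2 \|\w_t - \wstar\|^2$ to the last term, and the crucial trick $\gamma_t^2 \leq \gamma_t \eta$ to convert one factor of the step-size into the maximal learning-rate, gives
\begin{equation*}
\|\w_{t+1} - \wstar\|^2 \leq \bigl(1 - \gamma_t(2\mu - \eta\beta^2)\bigr)\|\w_t - \wstar\|^2.
\end{equation*}
Since $\eta \leq \frac{2\mu}{\beta^2}$ the coefficient $2\mu - \eta\beta^2$ is non-negative, so the bracket is minimized (worst contraction) by the smallest admissible $\gamma_t$, namely $\frac{1}{2\beta}$. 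Using $1-x \leq e^{-x}$ and iterating from $t=0$ to $T$ produces
\begin{equation*}
\|\w_{T+1} - \wstar\|^2 \leq \exp\!\left(-\tfrac{(2\mu - \eta\beta^2)(T+1)}{2\beta}\right) \|\w_0 - \wstar\|^2.
\end{equation*}

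Finally, since $f = \E_z[\ell_z]$ and each $\ell_z$ is $\beta$-smooth with $\nabla \ell_z(\wstar) = 0$ and $\ell_z(\wstar) = 0$, the descent lemma gives $f(\w) \leq \frac{\beta}{2}\|\w - \wstar\|^2$ pointwise, which applied to $\w_{T+1}$ and combined with the contraction above (and $\fstar = 0$) yields the claimed exponential bound. The main obstacle in the argument is the quadratic-in-$\gamma_t$ term $\gamma_t^2 \|\nabla \ell_{z_t}\|^2$: without the upper bound $\gamma_t \leq \eta$ one would be forced to compete with an $O(\gamma_t^2 \beta^2)$ term that can swamp the $O(\gamma_t\mu)$ descent gained from RSI (precisely the oscillation pathology of the unclipped Polyak step illustrated in Figure~\ref{fig:rsi}); the $\gamma_t^2 \leq \gamma_t \eta$ substitution is what converts the bound into a clean one-term contraction $1 - \gamma_t(2\mu - \eta\beta^2)$, and it is here that the upper bound on $\eta$ is needed to keep the contraction factor strictly below one, while the lower bound $\eta \geq \frac{1}{2\beta}$ is needed so that the Polyak lower bound $\frac{1}{2\beta}$ transfers to $\gamma_t$.
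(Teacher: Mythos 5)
Your proof is correct and follows essentially the same route as the paper's: expand the squared distance, apply the RSI to the cross term, control the quadratic term via smoothness together with the sandwich $\tfrac{1}{2\beta} \le \gamma_t \le \eta$, iterate the resulting contraction, and convert back to function values with the descent lemma. The only (immaterial) difference is that you bound $\gamma_t^2\|\nabla \ell_{z_t}(\w_t)\|^2$ by $\gamma_t \eta \beta^2 \|\w_t - \wstar\|^2$, whereas the paper first uses $\gamma_t \le \ell_{z_t}(\w_t)/\|\nabla \ell_{z_t}(\w_t)\|^2$ to get $\gamma_t \ell_{z_t}(\w_t) \le \gamma_t \tfrac{\beta}{2}\|\w_t - \wstar\|^2$; both collapse to the identical per-step factor $1 - \tfrac{\mu}{\beta} + \tfrac{\eta\beta}{2}$ once the bounds on $\gamma_t$ are applied.
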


Note that the above theorem assumes perfect interpolation, that is, the tolerance $\epsilon = 0$.
Nonetheless, it demonstrates the importance of a maximal learning rate, which does not need a manual decaying schedule.
It is currently an open question whether a similar result to theorem \ref{th:alig_rsi} can be proved with some interpolation tolerance $\varepsilon > 0$ on the value of all $\ell_z (\wstar)$.

\paragraph{Proximal Interpretation.}
Interestingly, using a maximal learning-rate can be seen as a natural extension of SGD when using a non-negative loss function:

\begin{restatable}{proposition}{thproxstep}[Proximal Interpretation] \label{th:prox_step}
Suppose that $\Omega = \mathbb{R}^p$ and let $\delta = 0$.
We consider the update performed by SGD: $\vw_{t+1}^{\text{SGD}} = \vw_t - \eta_t \nabla \ell_{z_t}(\vw_t)$; and the update performed by ALI-G: $\vw_{t+1}^{\text{ALI-G}} = \vw_t - \gamma_t \nabla \ell_{z_t}(\vw_t)$, where $\gamma_t = \min\left\{\frac{\ell_{z_t}(\vw_t)}{\| \nabla \ell_{z_t}(\vw_t)\|^2}, \eta \right\}$.
Then we have:
\begin{equation}
\begin{split}
&\vw_{t+1}^{\text{SGD}} = \argmin_{\vw \in \mathbb{R}^p} \Big\{ \tfrac{1}{2 \eta_t} \| \vw - \vw_t \|^2 + \ell_{z_t}(\vw_t) + \nabla \ell_{z_t}(\vw_t)^\top (\vw - \vw_t) \Big\}, \\
&\vw_{t+1}^{\text{ALI-G}} = \argmin_{\vw \in \mathbb{R}^p} \Big\{ \tfrac{1}{2 \eta} \| \vw - \vw_t \|^2 + \max \left\{\ell_{z_t}(\vw_t) + \nabla \ell_{z_t}(\vw_t)^\top (\vw - \vw_t), 0 \right\}  \Big\}. \label{eq:prox_pb}
\end{split}
\end{equation}
\end{restatable}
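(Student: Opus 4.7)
Both displayed right-hand sides are unique minimizers of strictly convex functions of $\vw \in \mathbb{R}^p$: the quadratic $\tfrac{1}{2\eta}\|\vw - \vw_t\|^2$ is strongly convex, and the remaining terms are convex in $\vw$ (the function $\max\{\cdot,0\}$ is convex and is being composed with an affine map). Each minimizer is therefore characterized by the subdifferential optimality condition $\vzero \in \partial Q(\vw^\star)$, where $Q$ denotes the corresponding objective. For SGD the objective is smooth, so this reduces to $\tfrac{1}{\eta_t}(\vw^\star - \vw_t) + \nabla \ell_{z_t}(\vw_t) = \vzero$, which yields $\vw_{t+1}^{\text{SGD}}$ immediately.

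For ALI-G, I would abbreviate $\vg = \nabla \ell_{z_t}(\vw_t)$ and $\ell = \ell_{z_t}(\vw_t) \geq 0$ (using the non-negativity assumption from Section~\ref{sec:pb_setting}). The subdifferential of the map $\vw \mapsto \max\{\ell + \vg^\top(\vw - \vw_t),\,0\}$ equals $\{\vg\}$ when the argument is positive, $\{\vzero\}$ when it is negative, and $\{\lambda \vg : \lambda \in [0,1]\}$ at the kink where the argument equals zero. The optimality condition therefore reads $\vw^\star - \vw_t = -\eta \lambda \vg$ for some admissible $\lambda$, and substituting this back into the linearization gives $\ell + \vg^\top(\vw^\star - \vw_t) = \ell - \eta\lambda\|\vg\|^2$.

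A short case split then finishes the proof. If $\eta \leq \ell/\|\vg\|^2$, the choice $\lambda = 1$ keeps the linearization non-negative and is admissible, yielding $\vw^\star = \vw_t - \eta\vg$. If $\eta > \ell/\|\vg\|^2$, then $\lambda = 1$ would make the linearization strictly negative and thus inadmissible, so the kink condition $\ell - \eta\lambda\|\vg\|^2 = 0$ must hold, forcing $\lambda = \ell/(\eta\|\vg\|^2) \in [0,1)$ and $\vw^\star = \vw_t - (\ell/\|\vg\|^2)\vg$. The two cases combine into $\vw^\star = \vw_t - \gamma_t\vg$ with $\gamma_t = \min\{\ell/\|\vg\|^2,\,\eta\}$, matching Algorithm~\ref{algo:prox} for $\delta = 0$; the degenerate case $\vg = \vzero$ is handled by the convention $0/0 = 0$. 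The only subtle step is the treatment of the non-smooth kink via the set-valued subdifferential, but strict convexity guarantees uniqueness of $\vw^\star$ and the admissibility condition uniquely pins down $\lambda$, so no genuine obstacle arises.
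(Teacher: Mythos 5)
Your proof is correct, but it takes a genuinely different route from the paper. You work entirely in the primal: you invoke strong convexity for uniqueness, compute the subdifferential of $\vw \mapsto \max\{\ell + \vg^\top(\vw-\vw_t),0\}$ (singleton $\{\vg\}$ off the kink on the positive side, $\{\vzero\}$ on the negative side, the segment $\{\lambda\vg:\lambda\in[0,1]\}$ at the kink), and then pin down the admissible $\lambda$ by a two-case consistency check, which directly produces $\gamma_t=\min\{\ell/\|\vg\|^2,\eta\}$. The paper instead reformulates the $\max$ with a slack variable $\upsilon$, passes to the Lagrangian dual by strong duality, writes the KKT conditions, and solves the resulting one-dimensional concave quadratic in the dual variable $\nu\in[0,1]$, recovering the same clipping as a projection of the unconstrained dual optimum onto $[0,1]$. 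Your argument is shorter and more elementary (no duality machinery), and the case analysis makes transparent exactly when the constraint $\gamma_t\leq\eta$ versus the Polyak-type step is active; the paper's dual derivation is longer but is what underpins the discussion in the related-work section connecting ALI-G to Frank--Wolfe and dual coordinate ascent methods, where the dual variable $\nu$ has an interpretation in its own right. One small point worth making explicit in your write-up: in the sub-case $\lambda=0$ of your second case you should note that it is only consistent when $\ell=0$ (if $\ell>0$ the affine part at $\vw^\star=\vw_t$ is strictly positive, forcing $\lambda=1$), which is exactly the degenerate situation you already cover; with that remark the admissibility argument is airtight.
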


In other words, at each iteration, ALI-G solves a proximal problem in closed form in a similar way to SGD.
In both cases, the loss function $\ell_{z_t}$ is locally approximated by a first-order Taylor expansion at $\vw_t$.
The difference is that ALI-G also exploits the fact that $\ell_{z_t}$ is non-negative.
This allows ALI-G to use a constant value for $\eta$ in the interpolation setting, while the learning-rate $\eta_t$ of SGD needs to be manually decayed.

\section{Related Work}

\paragraph{Interpolation in Deep Learning.}
As mentioned in the introduction, recent works have successfully exploited the interpolation assumption to prove convergence of SGD in the context of deep learning \citep{Ma2018a,Vaswani2019,Zhou2019}.
Such works are complementary to ours in the sense that they provide a convergence analysis of an existing algorithm for deep learning.
In a different line of work, \cite{Liu2019b} propose to exploit interpolation to prove convergence of a new acceleration method for deep learning. 
However, their experiments suggest that the method still requires the use of a hand-designed learning-rate schedule.

\paragraph{Adaptive Gradient Methods.}
Similarly to ALI-G, most adaptive gradient methods also rely on tuning a single hyper-parameter, thereby providing a more pragmatic alternative to SGD that needs a specification of the full learning-rate schedule.
While the most popular ones are Adagrad \citep{Duchi2011}, RMSPROP \citep{Tieleman2012}, Adam \citep{Kingma2015} and AMSGrad \citep{Reddi2018}, there have been many other variants
\citep{Zeiler2012,Orabona2015,Defossez2017,Levy2017,Mukkamala2017,Zheng2017,Bernstein2018,Chen2018,Shazeer2018,Zaheer2018,Chen2019,Loshchilov2019,Luo2019}.
However, as pointed out in \citep{Wilson2017}, adaptive gradient methods tend to give poor generalization in supervised learning.
In our experiments, the results provided by ALI-G are significantly better than those obtained by the most popular adaptive gradient methods.
Recently, \citet{Liu2019a} have proposed to \say{rectify} Adam with a learning-rate warmup, which partly bridges the gap in generalization performance between Adam and SGD.
However, their method still requires a learning-rate schedule, and thus remains difficult to tune on new tasks.

\paragraph{Adaptive Learning-Rate Algorithms.}
\citet{Vaswani2019a} show that one can use line search in a stochastic setting for interpolating models while guaranteeing convergence.
This work is complementary to ours, as it provides convergence results with weaker assumptions on the loss function, but is less practically useful as it requires up to four hyper-parameters, instead of one for ALI-G.
Less closely related methods, included second-order ones, adaptively compute the learning-rate without using the minimum \citep{Schaul2013,Martens2015,Tan2016,Zhang2017a,Baydin2018,Wu2018,Li2019,Henriques2019}, but do not demonstrate competitive generalization performance against SGD with a well-tuned hand-designed schedule.

\paragraph{$L_4$ Algorithm.}
The $L_4$ algorithm \citep{Rolinek2018} also uses a modified version of the Polyak step-size.
However, the $L_4$ algorithm computes an online estimate of $\fstar$ rather  than relying on a fixed value.
This requires three hyper-parameters, which are in practice sensitive to noise and crucial for empirical convergence of the method.
In addition, $L_4$ does not come with convergence guarantees.
In contrast, by utilizing the interpolation property and a maximal learning-rate, our method is able to (i) provide reliable and accurate minimization with only a single hyper-parameter, and (ii) offer guarantees of convergence in the stochastic convex setting.

\paragraph{Frank-Wolfe Methods.}
The proximal interpretation in Proposition \ref{th:prox_step} allows us to draw additional parallels to existing methods.
In particular, the formula of the learning-rate $\gamma_t$ may remind the reader of the Frank-Wolfe algorithm \citep{Frank1956} in some of its variants \citep{Locatello2017}, or other dual methods \citep{Lacoste-Julien2013,Shalev-Shwartz2016}.
This is because such methods solve in closed form the dual of problem (\ref{eq:prox_pb}), and problems in the form of (\ref{eq:prox_pb}) naturally appear in dual coordinate ascent methods \citep{Shalev-Shwartz2016}.

When no regularization is used, ALI-G and Deep Frank-Wolfe (DFW) \citep{Berrada2019} are procedurally identical algorithms.
This is because in such a setting, one iteration of DFW also amounts to solving (\ref{eq:prox_pb}) in closed-form -- more generally, DFW is designed to train deep neural networks by solving proximal linear support vector machine problems approximately.
However, we point out the two fundamental advantages of ALI-G over DFW:
(i) ALI-G can handle arbitrary (lower-bounded) loss functions, while DFW can only use convex piece-wise linear loss functions;
and (ii) as seen previously, ALI-G provides convergence guarantees in the convex setting.

\paragraph{SGD with Polyak's Learning-Rate.}
\citep{Oberman2019} extend the Polyak step-size to rely on a stochastic estimation of the gradient $\nabla \ell_{z_t}(\vw_t)$ only, instead of the expensive deterministic gradient $\nabla f(\vw_t)$.
However, they still require to evaluate $f(\vw_t)$, the objective function over the entire training data set, in order to compute its learning-rate, which makes the method impractical.
In addition, since they do not do exploit the interpolation setting nor the fact that regularization can be expressed as a constraint, they also require the knowledge of the optimal objective function value $\fstar$.
We also refer the interested reader to the recent analysis of \cite{Loizou2020}, which appeared after this work and provides a set of improved theoretical results.

\paragraph{{\sc aProx} Algorithm.}
\citep{Asi2019} have recently introduced the {\sc aProx} algorithm, a family of proximal stochastic optimization algorithms for convex problems.
Notably, the {\sc aProx} \say{truncated model} version is similar to ALI-G.
However, there are four clear advantages of our work over \citep{Asi2019} in the interpolation setting, in particular for training neural networks.
First, our work is the first to empirically demonstrate the applicability and usefulness of the algorithm on varied modern deep learning tasks -- most of our experiments use several orders of magnitude more data and model parameters than the small-scale convex problems of \citep{Asi2019}.
Second, our analysis and insights allow us to make more aggressive choices of learning rate than \citep{Asi2019}.
Indeed, \citep{Asi2019} assume that the maximal learning-rate is exponentially decaying, even in the interpolating convex setting.
In contrast, by avoiding the need for an exponential decay, the learning-rate of ALI-G requires only one hyper-parameters instead of two for {\sc aProx}.
Third, our analysis takes into account the interpolation tolerance $\varepsilon \geq 0$ rather than unrealistically assuming the perfect case $\varepsilon = 0$ (that would require infinite weights when using the cross-entropy loss for instance).
Fourth, our analysis proves fast convergence in function space rather than iterate space.

\section{Experiments}

We empirically compare ALI-G to the optimization algorithms most commonly used in deep learning.
Our experiments span a variety of architectures and tasks: (i) learning a differentiable neural computer;
(ii) training wide residual networks on SVHN;
(iii) training a Bi-LSTM on the Stanford Natural Language Inference data set;
and (iv) training wide residual networks and densely connected networks on the CIFAR data sets.
Note that the tasks of training wide residual networks on SVHN and CIFAR-100 are part of the DeepOBS benchmark \citep{Schneider2019}, which aims at standardizing baselines for deep learning optimizers.
In particular, these tasks are among the most difficult ones of the benchmark because the SGD baseline benefits from a manual schedule for the learning rate.
Despite this, our set of experiments demonstrate that ALI-G obtains competitive performance with SGD.
In addition, ALI-G significantly outperforms adaptive gradient methods.

The code to reproduce our results is publicly available\footnote{\url{https://github.com/oval-group/ali-g}}.
In the TensorFlow \citep{Abadi2015} experiment, we use the official and publicly available implementation of $L_4$\footnote{\url{https://github.com/martius-lab/l4-optimizer}}.
In the PyTorch \citep{Paszke2017} experiments, we use our implementation of $L_4$, which we unit-test against the official TensorFlow implementation.
In addition, we employ the official implementation of DFW\footnote{\url{https://github.com/oval-group/dfw}} and we re-use their code for the experiments on SNLI and CIFAR.
All experiments are performed either on a 12-core CPU (differentiable neural computer), on a single GPU (SVHN, SNLI, CIFAR) or on up to 4 GPUs (ImageNet).
We emphasize that all methods approximately have the same cost per iteration.
Consequently, faster convergence in terms of number of iterations or epochs translates into faster convergence in terms of wall-clock time.

\subsection{Differentiable Neural Computers}

\paragraph{Setting.}
The Differentiable Neural Computer (DNC) \citep{Graves2016} is a recurrent neural network that aims at performing computing tasks by learning from examples rather than by executing an explicit program.
In this case, the DNC learns to repeatedly copy a fixed size string given as input.
Although this learning task is relatively simple, the complex architecture of the DNC makes it an interesting benchmark problem for optimization algorithms.

\begin{figure}[H]
\centering
\includegraphics[width=0.65\linewidth]{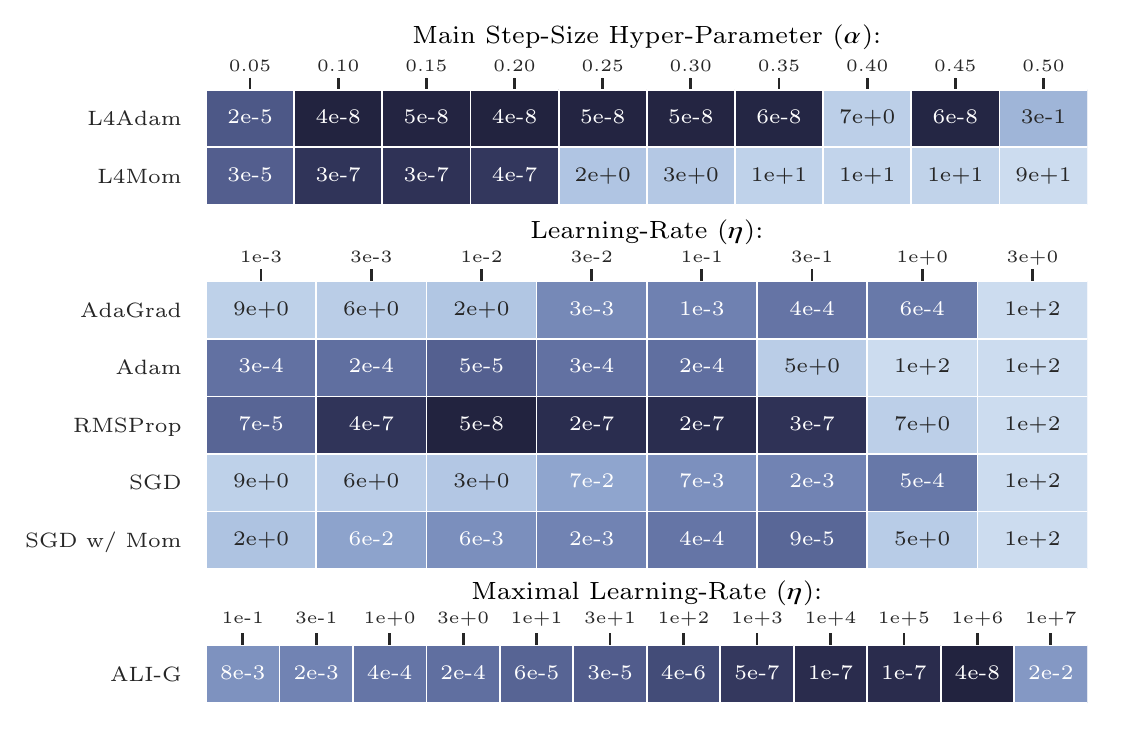}
\caption{
    \em Final objective function when training a Differentiable Neural Computer for $10$k steps (lower is better).
    The intensity of each cell is log-proportional to the value of the objective function (darker is better).
    ALI-G obtains good performance for a very large range of $\eta$ ($10^{-1} \leq \eta \leq 10^6$).
    }
\label{fig:dnc}
\end{figure}

\paragraph{Methods.}
We use the official and publicly available implementation of DNC\footnote{\url{https://github.com/deepmind/dnc}}.
We vary the initial learning rate as powers of ten between $10^{-4}$ and $10^{4}$ for each method except for $L_4$Adam and $L_4$Mom.
For $L_4$Adam and $L_4$Mom, since the main hyper-parameter $\alpha$ is designed to lie in $(0, 1)$, we vary it between $0.05$ and $0.095$ with a step of $0.1$.
The gradient norm is clipped for all methods except for ALI-G, $L_4$Adam and $L_4$Mom (as recommended by \citep{Rolinek2018}).

\paragraph{Results.} We present the results in Figure \ref{fig:dnc}.
ALI-G provides accurate optimization for any $\eta$ within $[10^{-1}, 10^6]$, and is among the best performing methods by reaching an objective function of $4.10^{-8}$.
On this task, RMSProp, $L_4$Adam and $L_4$Mom also provide accurate and robust optimization.
In contrast to ALI-G and the $L_4$ methods, the most commonly used algorithms such as SGD, SGD with momentum and Adam are very sensitive to their main learning-rate hyper-parameter.
Note that the difference between well-performing methods is not significant here because these reach the numerical precision limit of single-precision float numbers.

\subsection{Wide Residual Networks on SVHN}

\paragraph{Setting.}
The SVHN data set contains 73k training samples, 26k testing samples and 531k additional easier samples.
From the 73k difficult training examples, we select 6k samples for validation; we use all remaining (both difficult and easy) examples for training, for a total of 598k samples.
We train a wide residual network 16-4 following \citep{Zagoruyko2016}.

\paragraph{Method.}
For SGD, we use the manual schedule for the learning rate of \citep{Zagoruyko2016}.
For $L_4$Adam and $L_4$Mom, we cross-validate the main learning-rate hyper-parameter $\alpha$ to be in $\{0.0015, 0.015, 0.15\}$ ($0.15$ is the value recommended by \citep{Rolinek2018}).
For other methods, the learning rate hyper-parameter is tuned as a power of 10.
The $\ell_2$ regularization is cross-validated in $\{0.0001, 0.0005\}$ for all methods but ALI-G.
For ALI-G, the regularization is expressed as a constraint on the $\ell_2$-norm of the parameters, and its maximal value is set to $50$.
SGD, ALI-G and BPGrad use a Nesterov momentum of 0.9.
All methods use a dropout rate of 0.4 and a fixed budget of 160 epochs, following \citep{Zagoruyko2016}.

\begin{table}[ht]
\centering
\begin{tabular}{lc|lc}
    \toprule
    \multicolumn{4}{c}{Test Accuracy on SVHN (\%)} \\
    \midrule
    Adagrad & 98.0 & Adam & 97.9 \\
    AMSGrad & 97.9 & BPGrad & 98.1 \\
    DFW & 98.1 &$L_4$Adam &{\bf 98.2} \\
    $L_4$Mom & 19.6 & ALI-G & 98.1 \\
    \cmidrule(lr){1-2} \cmidrule(lr){3-4}
    {\color{red} SGD} &98.3 &{\color{red} SGD$^\dagger$} & 98.4 \\
    \bottomrule
    \end{tabular}
\caption{\em
    In red, SGD benefits from a hand-designed schedule for its learning-rate.
    In black, adaptive methods, including ALI-G, have a single hyper-parameter for their learning-rate.
    $SGD^\dagger$ refers to the performance reported by \citep{Zagoruyko2016}.
    }
\label{tab:svhn}
\end{table}

\paragraph{Results.}
The results are presented in Table \ref{tab:svhn}.
On this relatively easy task, most methods achieve about 98\% test accuracy.
Despite the cross-validation, $L_4$Mom does not converge on this task.
Even though SGD benefits from a hand-designed schedule, ALI-G and other adaptive methods obtain close performance to it.

\subsection{Bi-LSTM on SNLI}

\paragraph{Setting.}
We train a Bi-LSTM of 47M parameters on the Stanford Natural Language Inference (SNLI) data set \citep{Bowman2015}.
The SNLI data set consists in 570k pairs of sentences, with each pair labeled as entailment, neutral or contradiction.
This large scale data set is commonly used as a pre-training corpus for transfer learning to many other natural language tasks where labeled data is scarcer \citep{Conneau2017} -- much like ImageNet is used for pre-training in computer vision.
We follow the protocol of \citep{Berrada2019}; we also re-use their results for the baselines.

\paragraph{Method.}
For $L_4$Adam and $L_4$Mom, the main hyper-parameter $\alpha$ is cross-validated in $\{0.015, 0.15\}$ -- compared to the recommended value of 0.15, this helped convergence and considerably improved performance.
The SGD algorithm benefits from a hand-designed schedule, where the learning-rate is decreased by 5 when the validation accuracy does not improve.
Other methods use adaptive learning-rates and do not require such schedule.
The value of the main hyper-parameter $\eta$ is cross-validated as a power of ten for the ALI-G algorithm and for previously reported adaptive methods.
Following the implementation by \citep{Conneau2017}, no $\ell_2$ regularization is used.
The algorithms are evaluated with the Cross-Entropy (CE) loss and the multi-class hinge loss (SVM), except for DFW which is designed for use with an SVM loss only.
For all optimization algorithms, the model is trained for 20 epochs, following \citep{Conneau2017}.

\begin{table}[h]
\centering
\begin{tabular}{lcc|lcc}
    \toprule
    \multicolumn{6}{c}{Test Accuracy on SNLI (\%)} \\
    \midrule
    & CE & SVM & & CE & SVM \\
    \cmidrule(lr){2-2} \cmidrule(lr){3-3} \cmidrule(lr){5-5} \cmidrule(lr){6-6}
    Adagrad$^*$ &83.8 &84.6 &Adam$^*$ &84.5 &85.0 \\
    AMSGrad$^*$ &84.2 &85.1 &BPGrad$^*$ &83.6 &84.2 \\
    DFW$^*$ & - &{\bf 85.2} & $L_4$Adam &83.3 &82.5 \\
    $L_4$Mom &83.7 &83.2 & {\color{blue}ALI-G$^\infty$} &84.6 &84.7 \\
    ALI-G & {\bf 84.8} &{\bf 85.2} & & \\
    \cmidrule(lr){1-3} \cmidrule(lr){4-6}
    {\color{red} SGD$^*$} &84.7 &85.2 &{\color{red} SGD$^\dagger$} &84.5 & - \\
    \bottomrule
    \end{tabular}
\caption{\em
    In red, SGD benefits from a hand-designed schedule for its learning-rate.
    In black, adaptive methods have a single hyper-parameter for their learning-rate.
    In blue, {\color{blue} ALI-G$^\infty$} does not have any hyper-parameter for its learning-rate.
    With an SVM loss, DFW and ALI-G are procedurally identical algorithms
    -- but in contrast to DFW, ALI-G can also employ the CE loss.
    Methods in the format $X^*$ re-use results from \citep{Berrada2019}.
    $SGD^\dagger$ is the result from \citep{Conneau2017}.
    }
\label{tab:snli}
\end{table}

\paragraph{Results.} We present the results in Table \ref{tab:snli}.
ALI-G$^\infty$ is the only method that requires no hyper-parameter for its learning-rate.
Despite this, and the fact that SGD employs a learning-rate schedule that has been hand designed for good validation performance, ALI-G$^\infty$ is still able to obtain results that are competitive with SGD.
Moreover, ALI-G, which requires a single hyper-parameter for the learning-rate, outperforms all other methods for both the SVM and the CE loss functions.

\subsection{Wide Residual Networks and Densely Connected Networks on CIFAR}

\paragraph{Setting.}
We follow the methodology of \citep{Berrada2019}, and we reproduce their results.
We test two architectures: a Wide Residual Network (WRN) 40-4 \citep{Zagoruyko2016} and a bottleneck DenseNet (DN) 40-40 \citep{Huang2017a}.
We use 45k samples for training and 5k for validation.
The images are centered and normalized per channel.
We apply standard data augmentation with random horizontal flipping and random crops.
AMSGrad was selected in \citep{Berrada2019} because it was the best adaptive method on similar tasks, outperforming in particular Adam and Adagrad.
In addition to the baselines from \citep{Berrada2019}, we also provide the performance of $L_4$Adam, $L_4$Mom, AdamW \citep{Loshchilov2019} and Yogi \citep{Zaheer2018}.

\paragraph{Method.}
All optimization methods employ the cross-entropy loss, except for the DFW algorithm, which is designed to use an SVM loss.
For DN and WRN respectively, SGD uses the manual learning rate schedules from \citep{Huang2017a} and \citep{Zagoruyko2016}.
Following \citep{Berrada2019}, the batch-size is cross-validated in $\{64, 128, 256\}$ for the DN architecture, and $\{128, 256, 512\}$ for the WRN architecture.
For $L_4$Adam and $L_4$Mom, the learning-rate hyper-parameter $\alpha$ is cross-validated in $\{0.015, 0.15\}$.
For AMSGrad, AdamW, Yogi, DFW and ALI-G, the learning-rate hyper-parameter $\eta$ is cross-validated as a power of 10 (in practice $\eta \in \{0.1, 1\}$ for ALI-G).
SGD, DFW and ALI-G use a Nesterov momentum of 0.9.
Following \citep{Berrada2019}, for all methods but ALI-G and AdamW, the $\ell_2$ regularization is cross-validated in $\{0.0001, 0.0005\}$ on the WRN architecture, and is set to $0.0001$ for the DN architecture.
For AdamW, the weight-decay is cross-validated as a power of 10.
For ALI-G, $\ell_2$ regularization is expressed as a constraint on the norm on the vector of parameters; its maximal value is set to $100$ for the WRN models, $80$ for DN on CIFAR-10 and $75$ for DN on CIFAR-100.
For all optimization algorithms, the WRN model is trained for 200 epochs and the DN model for 300 epochs, following respectively \citep{Zagoruyko2016} and \citep{Huang2017a}.

\paragraph{Results.}
We present the results in Table \ref{tab:cifar}.
In this setting again, ALI-G obtains competitive performance with manually decayed SGD.
ALI-G largely outperforms AMSGrad, AdamW and Yogi.

\begin{table}[ht]
\centering
\begin{tabular}{lcccc}
    \toprule
    \multicolumn{5}{c}{Test Accuracy on CIFAR (\%)} \\
    \midrule
    &\multicolumn{2}{c}{CIFAR-10} &\multicolumn{2}{c}{CIFAR-100} \\
    \cmidrule(lr){2-3} \cmidrule(lr){4-5}
    & WRN & DN & WRN & DN \\
    \cmidrule(lr){2-2} \cmidrule(lr){3-3} \cmidrule(lr){4-4} \cmidrule(lr){5-5}
    AMSGrad & 90.8 & 91.7 &  68.7 &  69.4 \\
    AdamW & 92.1 & 92.6 &  69.6 &  69.5 \\
    Yogi & 91.2 & 92.1 &  68.7 &  69.6 \\
    DFW & 94.2 & 94.6 & {\bf 76.0} &  73.2 \\
    $L_4$Adam & 90.5 & 90.8 &  61.7 &  60.5 \\
    $L_4$Mom & 91.6 & 91.9 &  61.4 &  62.6 \\
    ALI-G & {\bf 95.2} & {\bf 95.0} &  75.8 &  {\bf 76.3} \\
    \midrule
    {\color{red} SGD} & 95.3 & 95.1 &  77.8 &  76.3 \\
    {\color{red} SGD$^\dagger$} & 95.4 & - &  78.8 & - \\
    \bottomrule
    \end{tabular}
\caption{\em
    In red, SGD benefits from a hand-designed schedule for its learning-rate.
    In black, adaptive methods, including ALI-G, have a single hyper-parameter for their learning-rate.
    $SGD^\dagger$ refers to the result from \citep{Zagoruyko2016}.
    Each reported result is an average over three independent runs;
    the standard deviations are reported in Appendix (they are at most $0.3$ for ALI-G and SGD).
    }
\label{tab:cifar}
\end{table}

\subsection{Comparing Training Performance on CIFAR-100}

In this section, we empirically assess the performance of ALI-G and its competitors in terms of training objective on CIFAR-100.
In order to have comparable objective functions, the $\ell_2$ regularization is deactivated.
The learning-rate is selected as a power of ten for best final objective value, and the batch-size is set to its default value.
For clarity, we only display the performance of SGD, Adam, Adagrad and ALI-G (DFW does not support the cross-entropy loss).
The $L_4$ methods diverge in this setting.
Here SGD uses a constant learning-rate to emphasize the need for adaptivity.
Therefore all methods use one hyper-parameter for their learning-rate.
All methods use a fixed budget of 200 epochs for WRN-CIFAR-100 and 300 epochs for DN-CIFAR-100.
As can be seen, ALI-G provides better training performance than the baseline algorithms on all tasks.

\begin{figure}[h]
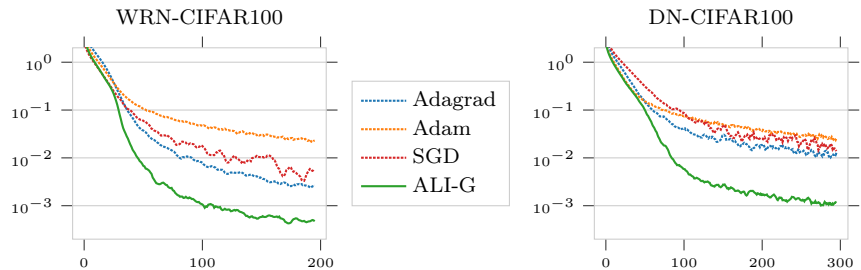

\centering
\footnotesize
\input{include/cifar_train_wrn_cifar100_arxiv.tex}
\input{include/cifar_train_dn_cifar100_arxiv.tex}
\caption{\em
    Objective function over the epochs on CIFAR-100 (smoothed with a moving average over 5 epochs).
    ALI-G reaches a value that is an order of magnitude better than the baselines.
}
\label{fig:cifar100_training}
\vspace{-10pt}
\end{figure}

\subsection{Training at Large Scale}

We demonstrate the scalability of ALI-G by training a ResNet-18 \citep{He2016} on the ImageNet data set.
In order to satisfy the interpolation assumption, we employ a loss function tailored for top-5 classification \cite{Lapin2016}, and we do not use data augmentation.
Our focus here is on the training objective and accuracy.
ALI-G uses the following training setup: a batch-size of 1024 split over 4 GPUs, a $\ell_2$ maximal norm of 400 for $\w$, a maximal learning-rate of 10 and no momentum.
SGD uses the state-of-the-art hyper-parameters and learning-rate schedule from \cite{He2016}.
As can be seen in figure \ref{fig:imagenet_training}, ALI-G reaches 99\% top-5 accuracy in 12 epochs (faster than SGD), and minimizes the objective function as well as SGD with its custom schedule.

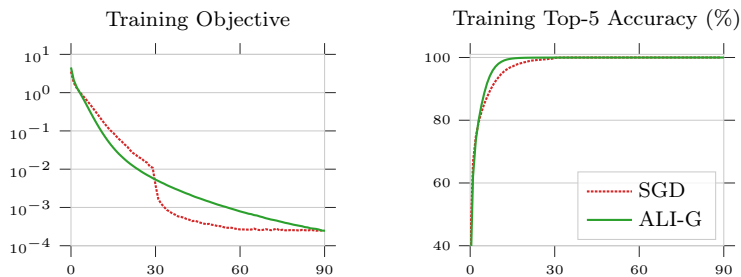
\begin{figure}[H]
\centering
\footnotesize
\begin{tikzpicture}

\definecolor{color0}{rgb}{0.83921568627451,0.152941176470588,0.156862745098039}
\definecolor{color1}{rgb}{0.172549019607843,0.627450980392157,0.172549019607843}

\begin{axis}[
axis line style={white!80!black},
compat=newest,
height=0.25\textwidth,
log basis y={10},
tick align=outside,
tick label style={font=\tiny},
title={Training Objective},
width=0.3\textwidth,
x grid style={white!80!black},
x grid style={white},
xmajorgrids,
xmajorticks=false,
xmajorticks=true,
xmin=0, xmax=90,
xtick style={color=white!15!black},
xtick={0,30, 60, 90, 120},
y grid style={white!80!black},
ymajorgrids,
ymajorticks=false,
ymajorticks=true,
ymin=0.0001, ymax=10,
yminorticks=false,
ymode=log,
ytick style={color=white!15!black},
ytick={1e-05,0.0001,0.001,0.01,0.1,1,10,100},
yticklabels={\(\displaystyle {10^{-5}}\),\(\displaystyle {10^{-4}}\),\(\displaystyle {10^{-3}}\),\(\displaystyle {10^{-2}}\),\(\displaystyle {10^{-1}}\),\(\displaystyle {10^{0}}\),\(\displaystyle {10^{1}}\),\(\displaystyle {10^{2}}\)}
]
\addplot [thick, color0, dash pattern=on 1pt off 0.5pt]
table {%
0 3.56218794807964
1 1.80976105429876
2 1.32796717169974
3 1.0579292802679
4 0.865861204041686
5 0.711024470750385
6 0.581730241624138
7 0.471714242099012
8 0.376202609569677
9 0.299388408312991
10 0.239807596809485
11 0.190727300658545
12 0.152895371309284
13 0.124358817692513
14 0.103014586819191
15 0.0889487066284911
16 0.0733268459090934
17 0.0615252531765347
18 0.0530590608018887
19 0.044558595244939
20 0.0386503625018383
21 0.0312040283539232
22 0.0264668257815269
23 0.0237397754151231
24 0.0204431977657062
25 0.0192206294520743
26 0.0162187577886811
27 0.0148282137715615
28 0.0118264060024434
29 0.0108911619011016
30 0.00384652371790734
31 0.00165183377048425
32 0.00125496535782551
33 0.00103831465558433
34 0.000861668467942745
35 0.000785460334036239
36 0.000691212697278362
37 0.000646683675010597
38 0.000584489823272833
39 0.000566526530588372
40 0.000542326422170098
41 0.000500160002989038
42 0.000477288733342208
43 0.000439722852210073
44 0.000449410144988034
45 0.00042373690217724
46 0.000397738560030273
47 0.000379801383707171
48 0.000366489275480967
49 0.000372239546407915
50 0.000352934064156761
51 0.000343703712444161
52 0.000326003991174887
53 0.000324919822501381
54 0.000308549529584391
55 0.000294243823099667
56 0.000294622587705621
57 0.000294449784295962
58 0.00028217943104919
59 0.000272445417717736
60 0.00026665435644428
61 0.000266332508581392
62 0.000264795395346029
63 0.000263262952383993
64 0.000264626205578198
65 0.000279708853129435
66 0.000261881760397025
67 0.000256717321059738
68 0.000258364437336687
69 0.000270318245798519
70 0.000263257388761695
71 0.000254555071400841
72 0.000265615481631742
73 0.000279911240146635
74 0.000252117951652648
75 0.000261954403007904
76 0.000252838397550059
77 0.000252938735409646
78 0.000256496015784561
79 0.000254870663978308
80 0.000254464851998618
81 0.000248754616507794
82 0.00025911783052865
83 0.000254661117869002
84 0.00025001417666557
85 0.000257607362499746
86 0.000253657124990042
87 0.000246471774304586
88 0.000250572195802781
89 0.000248662269524756
};
\addplot [thick, color1]
table {%
0 4.55035919139139
1 2.10688564945709
2 1.4060710139985
3 1.03853890755776
4 0.771308380206587
5 0.566036432206252
6 0.409990099090191
7 0.297964179192613
8 0.218534901016302
9 0.161987864386746
10 0.121431849594919
11 0.0924459955225649
12 0.0715915050997548
13 0.0562206667497634
14 0.04491502182217
15 0.0364611406202497
16 0.0300282106365918
17 0.0250817345371199
18 0.0213656306210843
19 0.018288414581797
20 0.0157662215856065
21 0.0138071533894157
22 0.0121849807075638
23 0.0107939749217358
24 0.00968128659118792
25 0.00861762258143769
26 0.00777202159792511
27 0.00704163543641877
28 0.00643207237392386
29 0.00583286092209404
30 0.00532352139413057
31 0.00491766869215366
32 0.0045154257481354
33 0.00418936503213818
34 0.00383313903448445
35 0.00355041539202782
36 0.00330336680303832
37 0.00306541639269628
38 0.00286773223732603
39 0.00266925166526478
40 0.00248259720096395
41 0.00231952362366454
42 0.0021872443305994
43 0.00202871149930055
44 0.00189438405860282
45 0.00179005073296576
46 0.00169388784672327
47 0.00159258662581667
48 0.00149649234848095
49 0.00142079225991213
50 0.00133732219907028
51 0.00126261072721079
52 0.00118997663199074
53 0.00113385943234026
54 0.00107408322549552
55 0.00102796591587916
56 0.000969598714768412
57 0.000923700451534594
58 0.000877663994986136
59 0.000832270908344357
60 0.000789785998457367
61 0.000754927499049815
62 0.000721828637265459
63 0.000685803049283153
64 0.00066845497566692
65 0.000630701977755612
66 0.000616196906853159
67 0.000581715429747221
68 0.000557683266238051
69 0.000526290000939725
70 0.000505382839478454
71 0.000483159807930327
72 0.000469054391343421
73 0.000452835505934124
74 0.000432540517484864
75 0.000415775518528054
76 0.000396495218221306
77 0.00038458203876617
78 0.000369102068748304
79 0.000353897062414864
80 0.000343279686504409
81 0.000332423930921641
82 0.000323288447156822
83 0.000306720146772059
84 0.000300744174751984
85 0.00028500904424542
86 0.000280157732590869
87 0.000274703084633062
88 0.000264575554226565
89 0.000248674672987894
90 0.00024716545112175
91 0.000243937377863323
92 0.000239139135384678
93 0.000221452452213698
};
\end{axis}

\end{tikzpicture}
\hspace{20pt}
\begin{tikzpicture}

\definecolor{color0}{rgb}{0.83921568627451,0.152941176470588,0.156862745098039}
\definecolor{color1}{rgb}{0.172549019607843,0.627450980392157,0.172549019607843}

\begin{axis}[
axis line style={white!80!black},
compat=newest,
height=0.25\textwidth,
legend cell align={left},
legend style={fill opacity=0.8, draw opacity=1, text opacity=1, at={(0.97,0.03)}, anchor=south east, draw=white!80!black},
tick align=outside,
tick label style={font=\tiny},
title={Training Top-5 Accuracy (\%)},
width=0.3\textwidth,
x grid style={white!80!black},
x grid style={white},
xmajorgrids,
xmajorticks=false,
xmajorticks=true,
xmin=0, xmax=90,
xtick style={color=white!15!black},
xtick={0,30, 60, 90, 120},
y grid style={white!80!black},
ymajorgrids,
ymajorticks=false,
ymajorticks=true,
ymin=40, ymax=101,
yminorticks=false,
ytick style={color=white!15!black}
]
\addplot [thick, color0, dash pattern=on 1pt off 0.5pt]
table {%
0 39.0376277445578
1 66.658192315128
2 74.7781371479131
3 79.3510379590882
4 82.5761920000544
5 85.2320483183362
6 87.4169689544474
7 89.3552128630301
8 91.0596134070163
9 92.4662474432572
10 93.6352205955741
11 94.7183401749745
12 95.5361015491516
13 96.2133457225612
14 96.7579514051679
15 97.131012389633
16 97.5733572884852
17 97.9276555715767
18 98.180505309353
19 98.4273444843634
20 98.6243122371317
21 98.8822790750723
22 99.0327868048949
23 99.1187216020867
24 99.2370647010563
25 99.2826312617101
26 99.3872475360383
27 99.4360630489267
28 99.552131881237
29 99.5912817605427
30 99.8677676284113
31 99.959388092264
32 99.9737647076028
33 99.9822119844119
34 99.9840801321678
35 99.9873290845758
36 99.9895221278042
37 99.9922837373197
38 99.9931771995001
39 99.9943955565221
40 99.9942331091019
41 99.9941518852862
42 99.9948828996256
43 99.9948828994148
44 99.9961824806728
45 99.9961824806727
46 99.996832271197
47 99.995532690149
48 99.9963449283039
49 99.9954514663337
50 99.9963449283036
51 99.9969134950124
52 99.9970759424326
53 99.9972383902739
54 99.9968322711969
55 99.9970759426433
56 99.9972383902741
57 99.9972383902742
58 99.9969134948018
59 99.9976445093514
60 99.9976445093515
61 99.9974820617208
62 99.9969947188276
63 99.9977257331671
64 99.9976445093515
65 99.9978069569823
66 99.9973196140895
67 99.9976445093515
68 99.9980506284289
69 99.9975632853257
70 99.9974820617207
71 99.9979694046136
72 99.9977257331671
73 99.9970759426432
74 99.9977257331669
75 99.9978069569826
76 99.9976445093515
77 99.9978069569825
78 99.9974008379052
79 99.9974008379049
80 99.9971571664585
81 99.9978881807979
82 99.9976445091407
83 99.997725733167
84 99.9978069569824
85 99.9978881807979
86 99.9978881805869
87 99.9976445093517
88 99.9973196140896
89 99.9978069569822
};
\addlegendentry{SGD}
\addplot [thick, color1]
table {%
0 25.7351161411653
1 61.9282858687005
2 73.6203728815731
3 79.9756490994398
4 84.7884850618184
5 88.6567692731624
6 91.7946889378478
7 94.1505857053299
8 95.8775664694722
9 97.1102190913476
10 97.9972643805565
11 98.6070927866903
12 99.0479756578431
13 99.3417621968703
14 99.5393797421209
15 99.671124769615
16 99.764775828421
17 99.8285365252029
18 99.8666304934572
19 99.8966020821352
20 99.917557826527
21 99.9309597560797
22 99.9406253901211
23 99.9481792041917
24 99.9525652897934
25 99.9554893483526
26 99.9593880902931
27 99.9625558190964
28 99.9644239680546
29 99.9666170110722
30 99.9662108912266
31 99.9688912779054
32 99.9702720815664
33 99.9711655435366
34 99.9727900198459
35 99.9745769437864
36 99.9750642878816
37 99.9747393914174
38 99.9764450931783
39 99.9771761058816
40 99.9781507916671
41 99.9787193583754
42 99.9788005833932
43 99.9798564917922
44 99.9803438351186
45 99.9806687303806
46 99.9806687311491
47 99.9808311775779
48 99.9816434169344
49 99.9821307586253
50 99.9822932062561
51 99.9820495348099
52 99.982374431274
53 99.9831866682267
54 99.9830242205955
55 99.9835115634883
56 99.9839176837679
57 99.9840801301965
58 99.9839989075833
59 99.9840801301967
60 99.9841613552144
61 99.9845674730895
62 99.9848111457382
63 99.9848923695536
64 99.9852172636135
65 99.9853797112445
66 99.9852172648155
67 99.9856233826906
68 99.9859482779524
69 99.9859482779526
70 99.9862731732144
71 99.986191949399
72 99.9865980684764
73 99.9861107260173
74 99.9864356220475
75 99.9868417399229
76 99.9868417399229
77 99.9870041875538
78 99.9867605161074
79 99.986841741125
80 99.9869229637383
81 99.9870854125713
82 99.987247859
83 99.9873290840178
84 99.987735201893
85 99.9875727542619
86 99.9878976499576
87 99.9879788733395
88 99.9883037686013
89 99.9879788733395
90 99.9882225447858
91 99.9882225452195
92 99.9881413226063
93 99.9885474400478
};
\addlegendentry{ALI-G}
\end{axis}

\end{tikzpicture}
\caption{\em
    Training a ResNet-18 on ImageNet.
    The final performance of ALI-G is as good as that of SGD, even though SGD benefits from a custom learning-rate schedule.
    In addition, ALI-G reaches a high training accuracy faster than SGD.
}
\label{fig:imagenet_training}
\end{figure}

\section{Discussion}

We have introduced ALI-G, an optimization algorithm that automatically adapts the learning-rate in the interpolation setting.
ALI-G provides convergence guarantees in the stochastic setting, including for a class of non-convex problems.
By using the same descent direction as SGD, it offers comparable generalization performance while requiring significantly less tuning.
In future work, it would be interesting to extend ALI-G to the non-interpolating setting by adapting the minimum $\fstar$ online while requiring few hyper-parameters.

\subsection*{Acknowledgements}

This work was supported by the EPSRC grants AIMS CDT EP/L015987/1, Seebibyte EP/M013774/1, EP/P020658/1 and TU/B/000048, and by YouGov. 
We also thank the Nvidia Corporation for the GPU donation.

\bibliography{../bibliography/standardstrings,../bibliography/oval}
\bibliographystyle{icml2020}

\pagebreak
\appendix
\section{Local Interpretation of the Polyak Step-Size}

In this section, we provide two results that shed light on a geometrical interpretation of the Polyak step-size.
First, proposition \ref{prop:nr_step} provides a proximal interpretation for the standard Polyak step-size.
Second, proposition \ref{prop:prox_step} gives a similar result when using a maximal learning-rate, which corresponds to the update used by ALI-G.

\begin{restatable}{proposition}{propnrstep} \label{prop:nr_step}
Suppose that the problem is unconstrained: $\Omega = \mathbb{R}^p$.
Let $\w_{t+1} = \w_{t} - \frac{f(\w_t) - \fstar}{\|\nabla f(\w_t) \|^2 } \nabla f(\w_t)$.
Then $\w_{t+1}$ verifies:
\begin{equation}
\w_{t+1} = \argmin_{\w \in \mathbb{R}^p} \|\w - \w_t\| \text{ subject to: } f(\w_t) + \nabla f(\w_t)^\top (\w - \w_t) = \fstar,
\end{equation}
where we remind that $\fstar$ is the minimum of $f$, and $\w \mapsto f(\w_t) + \nabla f(\w_t)^\top (\w - \w_t)$ is the linearization of $f$ at $\w_t$.
In other words, $\w_{t+1}$ is the closest point to $\w_{t}$ that lies on the hyper-plane $f(\w_t) + \nabla f(\w_t)^\top (\w - \w_t) = \fstar$.
\end{restatable}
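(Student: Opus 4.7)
The plan is to recognize this as a standard projection problem onto an affine hyperplane, which admits a closed-form solution.

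First, I would rewrite the feasibility constraint as $\nabla f(\w_t)^\top (\w - \w_t) = \fstar - f(\w_t)$. This defines an affine hyperplane $H$ with normal vector $\nabla f(\w_t)$ (assuming $\nabla f(\w_t) \neq 0$; the degenerate case is trivial since then the constraint is satisfied only if $f(\w_t) = \fstar$, in which case the loose convention $0/0 = 0$ gives $\w_{t+1} = \w_t \in H$). Minimizing $\|\w - \w_t\|$ subject to $\w \in H$ is therefore exactly the Euclidean projection of $\w_t$ onto $H$.

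Second, I would compute this projection via Lagrange multipliers. Forming
\begin{equation}
L(\w, \lambda) = \tfrac{1}{2} \|\w - \w_t\|^2 + \lambda \bigl( f(\w_t) + \nabla f(\w_t)^\top (\w - \w_t) - \fstar \bigr),
\end{equation}
the stationarity condition $\nabla_{\w} L = 0$ gives $\w = \w_t - \lambda \nabla f(\w_t)$. Substituting back into the linear constraint yields $f(\w_t) - \lambda \|\nabla f(\w_t)\|^2 = \fstar$, hence $\lambda = (f(\w_t) - \fstar)/\|\nabla f(\w_t)\|^2$. Plugging this value of $\lambda$ back into the expression for $\w$ recovers precisely the Polyak update $\w_{t+1}$.

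Finally, since the objective $\w \mapsto \tfrac{1}{2}\|\w - \w_t\|^2$ is strictly convex and the constraint is affine, the stationary point is the unique global minimizer; squaring the objective does not change the $\argmin$, so this is also the minimizer of $\|\w - \w_t\|$. There is no real obstacle here: the argument is a direct computation. The only thing to be careful about is stating the non-degeneracy assumption on $\nabla f(\w_t)$ and handling the convention $0/0 = 0$ consistently with the definition of the Polyak step-size from equation~(\ref{eq:polyak_step}).
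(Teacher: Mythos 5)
Your proof is correct, but it takes a genuinely different route from the paper's. The paper proceeds by verification: it first plugs $\w_{t+1}$ into the linear constraint to check feasibility, and then, for an arbitrary feasible $\hat{\w}$, computes $\|\w_{t+1} - \w_t\| = \tfrac{f(\w_t) - \fstar}{\|\nabla f(\w_t)\|} = \tfrac{|\nabla f(\w_t)^\top(\hat{\w} - \w_t)|}{\|\nabla f(\w_t)\|}$ and invokes Cauchy--Schwarz to conclude $\|\w_{t+1} - \w_t\| \leq \|\hat{\w} - \w_t\|$. You instead derive the minimizer from scratch via Lagrange multipliers: stationarity forces $\w = \w_t - \lambda \nabla f(\w_t)$, and the constraint then pins down $\lambda = (f(\w_t) - \fstar)/\|\nabla f(\w_t)\|^2$. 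Both arguments are rigorous. The paper's certificate-style proof is shorter given the candidate solution is already in hand; your KKT derivation is more systematic, explains \emph{why} the Polyak step has this closed form, and is the same machinery the paper later reuses in Proposition~\ref{prop:prox_step} to handle the clipped (ALI-G) update — so it arguably unifies the two results. You also explicitly handle the degenerate $\nabla f(\w_t) = \bm{0}$ case with the $0/0 = 0$ convention, which the paper leaves implicit; this is a small but genuine improvement in care.
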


\begin{proof}
First we show that $\w_{t+1}$ satisfies the linear equality constraint:
\begin{leftlinebox}[nobreak=false]
\begin{equation}
\begin{split}
&\kern-1em f(\w_t) + \nabla f(\w_t)^\top (\w_{t+1} - \w_t) \\
&=f(\w_t) + \nabla f(\w_t)^\top \left(-\dfrac{f(\w_t) - \fstar}{\|\nabla f(\w_t) \|^2} \nabla f(\w_t) \right), \\
&=f(\w_t) - f(\w_t) + \fstar, \\
&= \fstar.
\end{split}
\end{equation}
\end{leftlinebox}
Now let us show that it has a minimal distance to $\w_t$.
\begin{leftlinebox}[nobreak=false]
We take $\vwhat \in \mathbb{R}^p$ a solution of the linear equality constraint, and we will show that $\|\w_{t+1} - \w_t\| \leq \|\vwhat - \w_t\|$.
By definition, we have that $\vwhat$ satisfies:
\begin{equation}
f(\w_t) + \nabla f(\w_t)^\top (\vwhat - \w_t) = \fstar.
\end{equation}
Now we can write:
\begin{equation}
\begin{split}
\|\w_{t+1} - \w_t\|
    &= \| \dfrac{f(\w_t) - \fstar}{\|\nabla f(\w_t) \|^2} \nabla f(\w_t) \|, \\
    &= \dfrac{f(\w_t) - \fstar}{\|\nabla f(\w_t) \|}, \\
    &= \dfrac{|\nabla f(\w_t)^\top (\vwhat - \w_t)|}{\|\nabla f(\w_t) \|}, \\
    &\leq \dfrac{||\nabla f(\w_t)\| \|\vwhat - \w_t\|}{\|\nabla f(\w_t) \|},  \quad \text{(Cauchy-Schwarz)} \\
    &= \|\vwhat - \w_t\|.
\end{split}
\end{equation}
\end{leftlinebox}
\end{proof}

\begin{restatable}{proposition}{thproxstep}[Proximal Interpretation] \label{prop:prox_step}
Suppose that $\Omega = \mathbb{R}^p$ and let $\delta = 0$.
We consider the update performed by SGD: $\vw_{t+1}^{\text{SGD}} = \vw_t - \eta_t \nabla \ell_{z_t}(\vw_t)$; and the update performed by ALI-G: $\vw_{t+1}^{\text{ALI-G}} = \vw_t - \gamma_t \nabla \ell_{z_t}(\vw_t)$, where $\gamma_t = \min\left\{\frac{\ell_{z_t}(\vw_t)}{\| \nabla \ell_{z_t}(\vw_t)\|^2 + \delta}, \eta \right\}$.
Then we have:
\begin{align}
\vw_{t+1}^{\text{SGD}} &= \argmin_{\vw \in \mathbb{R}^p} \Big\{
        \frac{1}{2 \eta_t} \| \vw - \vw_t \|^2 + \ell_{z_t}(\vw_t) + \nabla \ell_{z_t}(\vw_t)^\top (\vw - \vw_t) \Big\}, \\
\vw_{t+1}^{\text{ALI-G}} &= \argmin_{\vw \in \mathbb{R}^p} \Big\{
    \frac{1}{2 \eta} \| \vw - \vw_t \|^2 +
    \max \left\{\ell_{z_t}(\vw_t) + \nabla \ell_{z_t}(\vw_t)^\top (\vw - \vw_t), 0 \right\}  \Big\}. \label{eq:prox_pb}
\end{align}
\end{restatable}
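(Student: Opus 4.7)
The plan is to observe that both right-hand sides are strongly convex in $\vw$ and therefore admit unique minimizers, and then to verify those minimizers via first-order (subdifferential) optimality.

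\textbf{SGD identity.} The objective $\vw \mapsto \tfrac{1}{2\eta_t}\|\vw - \vw_t\|^2 + \ell_{z_t}(\vw_t) + \nabla \ell_{z_t}(\vw_t)^\top (\vw - \vw_t)$ is a strongly convex quadratic in $\vw$. Its gradient vanishes precisely at $\vw = \vw_t - \eta_t \nabla \ell_{z_t}(\vw_t) = \vw_{t+1}^{\text{SGD}}$, settling the first identity.

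\textbf{ALI-G identity.} For brevity set $\vg := \nabla \ell_{z_t}(\vw_t)$ and $\ell := \ell_{z_t}(\vw_t) \geq 0$. Then the objective $h(\vw) := \tfrac{1}{2\eta}\|\vw - \vw_t\|^2 + m(\vw)$, with $m(\vw) := \max\{\ell + \vg^\top(\vw - \vw_t),\, 0\}$, is the sum of a strongly convex quadratic and a convex piecewise-affine function, hence strongly convex with a unique minimizer. I will verify that the candidate $\vw^\star := \vw_t - \gamma_t \vg$ satisfies $0 \in \partial h(\vw^\star) = \tfrac{1}{\eta}(\vw^\star - \vw_t) + \partial m(\vw^\star)$. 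Two regimes arise. If $\eta < \ell/\|\vg\|^2$, then $\gamma_t = \eta$ and the linearized loss $\ell + \vg^\top(\vw^\star - \vw_t) = \ell - \eta \|\vg\|^2 > 0$, so $m$ is differentiable at $\vw^\star$ with $\nabla m(\vw^\star) = \vg$, making the condition $-\vg + \vg = 0$ immediate. If $\eta \geq \ell/\|\vg\|^2$ and $\vg \neq 0$, then $\gamma_t = \ell/\|\vg\|^2$ and $\ell + \vg^\top(\vw^\star - \vw_t) = 0$; here $\vw^\star$ lies on the kink of $m$, where $\partial m(\vw^\star) = \{t\vg : t \in [0,1]\}$, and choosing $t = \gamma_t/\eta \in [0,1]$ yields $-\tfrac{\gamma_t}{\eta}\vg + t\vg = 0$. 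The degenerate case $\vg = 0$ collapses trivially under the convention $0/0 = 0$, since then $\vw^\star = \vw_t$ and $0 \in \partial m(\vw_t)$.

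\textbf{Main obstacle.} The only subtlety is the nonsmoothness of $m$ at the kink, which blocks a purely gradient-based argument; the subdifferential calculus above handles it cleanly, and the built-in bound $\gamma_t \leq \eta$ from the $\min$ is precisely what ensures the required multiplier $t = \gamma_t/\eta$ lies in $[0,1]$. An equivalent route, should a reader prefer, is to project the problem onto the one-dimensional subspace spanned by $\vg$ — the perpendicular component of $\vw - \vw_t$ enters only through the quadratic and so vanishes at the optimum — and then solve the resulting 1D minimization by elementary calculus on the two linear pieces of $m$, recovering the same $\gamma_t$.
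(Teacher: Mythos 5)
Your proof is correct, and it takes a genuinely different route from the paper's. The paper derives the ALI-G update constructively via Lagrangian duality: it lifts the $\max$ into an epigraph constraint with an auxiliary scalar $\upsilon$, introduces multipliers $\mu,\nu \geq 0$, invokes strong duality, writes the KKT conditions, and solves the resulting one-dimensional quadratic in $\nu$ over $[0,1]$ to recover $\gamma_t = \min\{\ell_{z_t}(\vw_t)/\|\nabla \ell_{z_t}(\vw_t)\|^2, \eta\}$. You instead \emph{verify} the known candidate by checking $0 \in \partial h(\vw^\star)$ directly, using the subdifferential of the piecewise-affine $\max$ term; the clipping $\gamma_t \leq \eta$ is exactly what places the required convex-combination coefficient $t = \gamma_t/\eta$ in $[0,1]$ at the kink. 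Your argument is shorter and more elementary (no duality, no KKT machinery), and your case analysis — strictly positive linearization when the step is clipped, exactly zero linearization when it is not, plus the degenerate $\nabla \ell_{z_t}(\vw_t)=0$ case — is complete and matches the paper's conventions. The trade-off is that the verification approach presupposes the closed form of $\gamma_t$, whereas the paper's dual derivation produces it; your closing remark about reducing to the one-dimensional problem along $\nabla \ell_{z_t}(\vw_t)$ sketches an equally constructive alternative, so nothing essential is lost.
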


\begin{proof} \
In order to make the notation simpler, we use $\bm{d}_t \triangleq \nabla \ell_{z_t} (\w_t)$ and $l_t \triangleq \ell_{z_t} (\w_t)$. \\
First, let us consider $\bm{d}_t = \bm{0}$.
\begin{leftlinebox}[nobreak=false]
Then we choose $\gamma_t=0$ and it is clear that $\w_{t+1} = \w_t - \eta \gamma_t \bm{d}_t = \w_t$ is the optimal solution of problem (\ref{eq:prox_pb}).
\end{leftlinebox}
We now assume $\bm{d}_t \neq \bm{0}$.

\begin{leftlinebox}[nobreak=false]
We can successively re-write the proximal problem (\ref{eq:prox_pb}) as :
\begin{align*}
    \min_{\w \in \mathbb{R}^p}
        &\left\{ \dfrac{1}{2 \eta} \| \w - \w_t \|^2 + \max \left\{\ell_{z_t} (\w_t) + \nabla \ell_{z_t} (\w_t)^\top (\w - \w_t), 0 \right\}  \right\}, \\
    \min_{\w \in \mathbb{R}^p}
        &\left\{ \dfrac{1}{2 \eta} \| \w - \w_t \|^2 + \max \left\{l_t + \bm{d}_t^\top (\w - \w_t), 0 \right\}  \right\}, \\
    \min_{\w \in \mathbb{R}^p, \upsilon}
        &\left\{ \dfrac{1}{2 \eta} \| \w - \w_t \|^2 + \upsilon \right\} \: \text{subject to: } \: \upsilon \geq 0, \: \upsilon \geq l_t + \bm{d}_t^\top (\w - \w_t) \\
    \min_{\w \in \mathbb{R}^p, \upsilon} \sup_{\mu, \nu \geq 0}
         &\left\{ \dfrac{1}{2 \eta} \| \w - \w_t \|^2 + \upsilon - \mu \upsilon - \nu (\upsilon - l_t - \bm{d}_t^\top (\w - \w_t)) \right\} \\
    \sup_{\mu, \nu \geq 0} \min_{\w \in \mathbb{R}^p, \upsilon}
         &\left\{ \dfrac{1}{2 \eta} \| \w - \w_t \|^2 + \upsilon - \mu \upsilon - \nu (\upsilon - l_t - \bm{d}_t^\top (\w - \w_t)) \right\}, \stepcounter{equation}\tag{\theequation} \\
\end{align*}
where the last equation uses strong duality.
The inner problem is now smooth in $\w$ and $\upsilon$. We write its KKT conditions:
\begin{equation}
    \dfrac{\partial \cdot}{\partial \upsilon} = 0: \quad 1 - \mu - \nu = 0
\end{equation}
\begin{equation}
    \dfrac{\partial \cdot}{\partial \w} = 0: \quad \frac{1}{\eta}(\w - \w_t) + \nu \bm{d}_t = \bm{0}
\end{equation}
We plug in these results and obtain:
\begin{align*}
    \sup_{\mu, \nu \geq 0}
        &\left\{ \dfrac{1}{2 \eta} \| \eta \nu \bm{d}_t \|^2 + \nu (l_t + \bm{d}_t^\top (-\eta \nu \bm{d}_t)) \right\} \\
        \text{st: } \quad &\mu + \nu = 1 \\
        \sup_{\nu \in [0, 1]}
        &\left\{ \dfrac{\eta}{2} \nu^2 \| \bm{d}_t \|^2 + \nu l_t - \eta \nu^2 \| \bm{d}_t^\top\|^2 \right\} \\
        \sup_{\nu \in [0, 1]}
        &\left\{ -\dfrac{\eta}{2} \nu^2 \| \bm{d}_t \|^2 + \nu l_t \right\} \stepcounter{equation}\tag{\theequation}
\end{align*}
This is a one-dimensional quadratic problem in $\nu$.
It can be solved in closed-form by finding the global maximum of the quadratic objective, and projecting the solution on $[0, 1]$.
We have:
\begin{equation}
    \dfrac{\partial \cdot}{\partial \nu} = 0: - \eta \nu \| \bm{d}_t \|^2 + l_t = 0
\end{equation}
Since $\bm{d}_t \neq \bm{0}$ and $\eta \neq 0$, this gives the optimal solution:
\begin{equation}
\nu = \min \left\{ \max \left\{\dfrac{l_t}{\eta \|\bm{d}_t\|^2}, 0 \right\}, 1 \right\} = \min \left\{\dfrac{l_t}{\eta \|\bm{d}_t\|^2}, 1 \right\},
\end{equation}
since $l_t, \eta, \|\bm{d}_t\|^2 \geq 0$. \\
Plugging this back in the KKT conditions, we obtain that the solution $\w_{t+1}$ of the primal problem can be written as:
\begin{equation}
\begin{split}
\w_{t+1}
    &= \w_t - \eta \nu \bm{d}_t, \\
    &= \w_t - \eta \min \left\{\dfrac{l_t}{\eta \|\bm{d}_t\|^2}, 1 \right\} \bm{d}_t, \\
    &= \w_t - \eta \min \left\{\dfrac{\ell_{z_t}(\w_t)}{\eta \|\nabla \ell_{z_t}(\w_t)\|^2}, 1 \right\} \nabla \ell_{z_t}(\w_t), \\
    &= \w_t - \min \left\{\dfrac{\ell_{z_t}(\w_t)}{\|\nabla \ell_{z_t}(\w_t)\|^2}, \eta \right\} \nabla \ell_{z_t}(\w_t). \\
\end{split}
\end{equation}
\end{leftlinebox}
\end{proof}

\section{Summary of Convergence Results}
\label{app:sec:convergence}

\paragraph{Problem Formulation.}
We remind the problem setting as follows.
The learning task can be expressed as the problem $(\mathcal{P})$ of finding a feasible vector of parameters $\wstar \in \Omega$ that minimizes $f$:
\begin{equation} \tag{$\mathcal{P}$} \label{eq:main_problem}
    \wstar \in \argmin\limits_{\vw \in \Omega} f(\vw).
\end{equation}
Also note that $\fstar$ refers to the minimum value of $f$ over $\Omega$: $\fstar \triangleq \min_{\vw \in \Omega} f(\vw)$.

In the remainder of this section, we give an overview of convergence results of ALI-G in various stochastic settings.
First, we summarize convergence results in the convex setting in section \ref{subsec:summary_cvx_results}.
Notably, these results show convergence for any maximal learning-rate $\eta$, including $\eta = \infty$, which is equivalent to not using any clipping to a maximal value.
Second, we give results for a class of non-convex problems.
These results show that a maximal learning-rate is necessary and sufficient for convergence of the Polyak step-size.
Indeed we show that the Polyak step-size can oscillate indefinitely without a maximal learning-rate, and that using a maximal learning-rate provably leads to (exponentially fast) convergence.

\subsection{Convex Setting}
\label{subsec:summary_cvx_results}

For simplicity purposes, we assume that we are in the perfect interpolation setting: $\forall z, \: \ell_z(\wstar) = 0$.
Detailed results with an interpolation tolerance $\varepsilon > 0$ are given in section \ref{app:sec:detailed_cvx_results}.
Since we are in the perfect interpolation setting, note that we can safely set the small constant for numerical stability to zero: $\delta = 0$.
The summary of the results is presented in table \ref{tab:cvx_results}.

\begin{table}[ht]
\centering
\footnotesize
\begin{tabular}{llcc}
    \toprule
    Assumption on Loss Functions &Distance Considered & \multicolumn{2}{c}{Convergence Rate} \\
    \cmidrule(lr){1-1} \cmidrule(lr){2-2} \cmidrule(lr){3-4}
    && Small $\eta$ & Large $\eta$ (potentially $\infty$) \\
    \cmidrule(lr){3-3} \cmidrule(lr){4-4}
    Convex and $C$-Lipschitz & $\E \left[f\left(\tfrac{1}{T+ 1} \sum\limits_{t=0}^T \w_t \right) \right] - \fstar$ & $\tfrac{\| \w_{0} - \wstar \|^2}{\eta (T + 1)} + \sqrt{ \tfrac{C^2 \| \w_{0} - \wstar \|^2}{T + 1}}$  & $\sqrt{ \tfrac{C^2 \| \w_{0} - \wstar \|^2}{T + 1}} $ \\
    Convex and $\beta$-Smooth & $\E \left[f\left(\tfrac{1}{T+ 1} \sum\limits_{t=0}^T \w_t \right) \right] - \fstar$ & $\tfrac{\| \w_{0} - \wstar \|^2}{\eta (T+1)}$  & $\tfrac{2 \beta \|\w_{0} - \wstar\|^2}{T + 1}$ \\
    $\alpha$-Strongly Convex and $\beta$-Smooth & $\E[f(\w_{T+1})] - \fstar$ & $\tfrac{\beta}{2} \exp \left(\tfrac{-\alpha \eta T }{2} \right)  \| \w_{0} - \wstar \|^2$ & $\tfrac{\beta}{2} \exp\left(- \tfrac{\alpha t}{4 \beta} \right)  \| \w_{0} - \wstar \|^2$ \\
    \bottomrule
    \end{tabular}
\caption{\em
    Summary of convergence rates for convex problems in the perfect interpolation setting.
    We remind that $\eta$ denotes the hyper-parameter used by ALI-G to clip its learning-rate to a maximal value.
    Our convergence results yield different results when $\eta$ has a small value (middle column), and when $\eta$ has a large, possibly even infinite, value (right column).
    The formal statements of these results are available in section \ref{app:sec:detailed_cvx_results}, along with their proofs.
    }
\label{tab:cvx_results}
\end{table}

The overall convergence speed is similar to that of \emph{non-stochastic} Polyak step-size, which is itself the same as the optimal rate of \emph{non-stochastic} gradient descent: $\mathcal{O}(1/\sqrt{T})$ for convex Lipschitz functions, $\mathcal{O}(1/T)$ for convex and smooth functions, and $\mathcal{O}(\exp(-kT))$ (for some constant $k$) for smooth and strongly convex functions \citep{Hazan2019}.

\subsection{Non-Convex Setting}

We also assume that we are in the perfect interpolation setting and thus we set the constant for numerical stability $\delta$ to zero.
We further assume that the problem is unconstrained.
The summary of the results is presented in table \ref{tab:noncvx_results}.

\begin{table}[H]
\centering
\footnotesize
\begin{tabular}{cc}
    \toprule
    \multicolumn{2}{c}{Convergence Result} \\
    \midrule
     $0 < \eta \leq \tfrac{2 \alpha}{\beta^2}$  & $\eta=\infty$ \\
     \cmidrule(lr){1-1} \cmidrule(lr){2-2}
     $ f(\w_{T+1}) - \fstar \leq \tfrac{\beta}{2} \exp \left( - \kappa T \right) \| \w_{0} - \wstar \|^2$ &  Can Fail to Converge (Proved) \\
    \bottomrule
    \end{tabular}
\caption{\em
    Summary of convergence results for $\alpha$-RSI and $\beta$-smooth loss functions in the perfect interpolation setting.
    We remind that $\eta$ denotes the hyper-parameter used by ALI-G to clip its learning-rate to a maximal value.
    The constant $\kappa$ depends on $\alpha$, $\beta$ and $\eta$.
    These results show that using a maximal learning-rate is necessary and sufficient for convergence.
    The formal statements of these results are available in section \ref{app:sec:detailed_noncvx_results}, along with their proofs.
    }
\label{tab:noncvx_results}
\end{table}

\section{Detailed Convex Results}
\label{app:sec:detailed_cvx_results}

\subsection{Lipschitz Convex Functions}

\begin{restatable}{theorem}{thaligcvxlargeeta}\label{th:alig_cvx_large_eta}
We assume that $\Omega$ is a convex set, and that for every $z \in \Z$, $\ell_z$ is convex and $C$-Lipschitz.
Let $\wstar$ be a solution of (\ref{eq:main_problem}) such that $\forall z \in \Z, \: \ell_z(\wstar) \leq \varepsilon$.
We further assume that $\eta > \frac{\varepsilon}{\delta}$.
Then if we apply ALI-G with a maximal learning-rate of $\eta$ to $f$, we have:
\begin{equation}
\begin{split}
\E \left[f\left(\frac{1}{T+ 1} \sum\limits_{t=0}^T \w_t \right) \right] - \fstar
    &\leq \dfrac{\| \w_{0} - \wstar \|^2}{(\eta - \frac{\varepsilon}{\delta}) (T + 1)} + \dfrac{\varepsilon^2}{\delta (\eta - \frac{\varepsilon}{\delta})} \\
    &\quad + \sqrt{ \dfrac{(C^2 + \delta) \| \w_{0} - \wstar \|^2}{T + 1}} + \varepsilon \sqrt{\dfrac{C^2}{\delta} + 1}.
\end{split}
\end{equation}
\end{restatable}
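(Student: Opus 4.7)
The plan is to adapt the squared-distance potential argument for the Polyak step-size, handling simultaneously the clipping to $\eta$, the stochastic per-sample updates, and the interpolation tolerance $\varepsilon$. Writing $D_t := \|\w_t - \wstar\|^2$, non-expansivity of $\Pi_\Omega$, convexity of $\ell_{z_t}$, and $\ell_{z_t}(\wstar) \leq \varepsilon$ give the one-step inequality $D_{t+1} \leq D_t - 2\gamma_t(\ell_{z_t}(\w_t) - \varepsilon) + \gamma_t^2 \|\nabla \ell_{z_t}(\w_t)\|^2$. The structural property that distinguishes ALI-G from vanilla SGD is that $\gamma_t(\|\nabla \ell_{z_t}(\w_t)\|^2 + \delta) \leq \ell_{z_t}(\w_t)$ in both branches of the min (equality on the Polyak branch; a consequence of the clipping condition on the other). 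Multiplying by $\gamma_t\geq 0$ and substituting yields the clean per-step recursion
\[
\gamma_t\ell_{z_t}(\w_t) + \gamma_t^2\delta - 2\gamma_t\varepsilon \leq D_t - D_{t+1}.
\]

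The next stage is a case split on which branch of the $\min$ is active; call the corresponding sets of steps $A$ (clipping) and $B$ (Polyak). On $A$, plugging in $\gamma_t = \eta$ and using the completed-square identity $\eta^2\delta - 2\eta\varepsilon = \delta(\eta - \varepsilon/\delta)^2 - \varepsilon^2/\delta$ followed by dropping the non-negative $\delta(\eta - \varepsilon/\delta)^2$, the recursion implies $(\eta - \varepsilon/\delta)\ell_{z_t}(\w_t) \leq D_t - D_{t+1} + \varepsilon^2/\delta$, where the assumption $\eta > \varepsilon/\delta$ is what makes this prefactor positive and the bound meaningful. On $B$, the Polyak definition combined with $C$-Lipschitzness gives $\gamma_t\ell_{z_t}(\w_t) \geq \ell_{z_t}(\w_t)^2/(C^2+\delta)$, while the analogous AM-GM $2\gamma_t\varepsilon - \gamma_t^2\delta \leq \varepsilon^2/\delta$ substituted into the recursion yields $\ell_{z_t}(\w_t)^2/(C^2+\delta) \leq D_t - D_{t+1} + \varepsilon^2/\delta$.

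Summing these two case-specific inequalities telescopes $\sum_t(D_t - D_{t+1}) \leq D_0$ into the joint budget $(\eta - \varepsilon/\delta)\sum_{t \in A} \ell_{z_t}(\w_t) + \sum_{t \in B}\ell_{z_t}(\w_t)^2/(C^2+\delta) \leq D_0 + (T+1)\varepsilon^2/\delta =: U$. Since both terms on the left are non-negative, each can be independently bounded by $U$: the first gives $\sum_{t \in A}\ell_{z_t}(\w_t) \leq U/(\eta - \varepsilon/\delta)$, and the second together with Cauchy--Schwarz gives $\sum_{t \in B}\ell_{z_t}(\w_t) \leq \sqrt{(T+1)(C^2+\delta)U}$. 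Adding the two, dividing by $T+1$, applying $\sqrt{a+b}\leq\sqrt{a}+\sqrt{b}$ to separate $D_0/(T+1)$ from $\varepsilon^2/\delta$ inside the square root, and invoking Jensen's inequality on the convex $f$ together with $\E[\ell_{z_t}(\w_t)] = \E[f(\w_t)]$ yields the claimed bound. The main obstacle I expect is obtaining the exact constant $(\eta - \varepsilon/\delta)$: one must perform the precise completion of the square $\delta(\eta - \varepsilon/\delta)^2$ specifically on the clipping branch, rather than a generic AM-GM absorption of $2\gamma_t\varepsilon$ against $\gamma_t^2\delta$ (which would only yield the looser $1/\eta$ prefactor), and it is this manipulation that pins down the role of the hypothesis $\eta > \varepsilon/\delta$ in the final rate.
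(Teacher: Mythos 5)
Your proposal is correct and follows essentially the same argument as the paper's proof: the squared-distance potential, the key inequality $\gamma_t(\|\nabla\ell_{z_t}(\w_t)\|^2+\delta)\le\ell_{z_t}(\w_t)$ valid on both branches of the $\min$, a case split on the active branch yielding a per-step error of $\varepsilon^2/\delta$, and separate handling of the two resulting sums (linear on the clipped steps, Cauchy--Schwarz on the squares for the Polyak steps) before telescoping and Jensen. The only differences are cosmetic: you track $\ell_{z_t}(\w_t)\ge 0$ rather than $\ell_{z_t}(\w_t)-\ell_{z_t}(\wstar)$, which collapses the paper's three-way case analysis into two cases, and you produce the $\varepsilon^2/\delta$ term by retaining $\gamma_t^2\delta$ and completing the square, whereas the paper discards that term early and instead bounds $\gamma_t\ell_{z_t}(\wstar)$ directly --- the resulting constants are identical.
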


\begin{proof} \
\begin{leftlinebox}[nobreak=false]
We consider the update at time $t$, which we condition on the draw of $z_t \in \Z$:
\begin{align*}
&\kern-1em \| \w_{t+1} - \wstar \|^2 \\
&= \| \Pi_\Omega(\w_t - \gamma_t \nabla \ell_{z_t}(\w_t)) - \wstar \|^2 \\
&\leq \| \w_t - \gamma_t \nabla \ell_{z_t}(\w_t) - \wstar \|^2 \eqcomment{$\Pi_\Omega$ projection} \\
&= \| \w_{t} - \wstar \|^2 - 2 \gamma_t \nabla \ell_{z_t}(\w_t)^\top(\w_{t} - \wstar) + \gamma_t^2 \| \nabla \ell_{z_t}(\w_t) \|^2 \\
&\leq \| \w_{t} - \wstar \|^2 - 2 \gamma_t \nabla \ell_{z_t}(\w_t)^\top(\w_{t} - \wstar) + \gamma_t \frac{\ell_{z_t}(\w_t)}{\| \nabla \ell_{z_t}(\w_t) \|^2 + \delta} \| \nabla \ell_{z_t}(\w_t) \|^2 \\
&\quad \eqcomment{because $\gamma_t \leq \frac{\ell_{z_t}(\w_t)}{\| \nabla \ell_{z_t}(\w_t) \|^2 + \delta}$} \\
&\leq \| \w_{t} - \wstar \|^2 - 2 \gamma_t \nabla \ell_{z_t}(\w_t)^\top(\w_{t} - \wstar) + \gamma_t \frac{\ell_{z_t}(\w_t)}{\| \nabla \ell_{z_t}(\w_t) \|^2} \| \nabla \ell_{z_t}(\w_t) \|^2 \\
&\quad \eqcomment{because $\ell_{z_t}(\w_t) \geq 0$ and $\delta \geq 0$} \\
&\leq \| \w_{t} - \wstar \|^2 - 2 \gamma_t (\ell_{z_t}(\w_t) - \ell_{z_t}(\wstar)) + \gamma_t \ell_{z_t}(\w_t) \eqcomment{convexity of $\ell_{z_t}$} \\
&= \| \w_{t} - \wstar \|^2 - 2\gamma_t (\ell_{z_t}(\w_t) - \ell_{z_t}(\wstar)) + \gamma_t (\ell_{z_t}(\w_t) - \ell_{z_t}(\wstar)) + \gamma_t \ell_{z_t}(\wstar) \\
&= \| \w_{t} - \wstar \|^2 - \gamma_t (\ell_{z_t}(\w_t) - \ell_{z_t}(\wstar)) + \gamma_t \ell_{z_t}(\wstar) \stepcounter{equation}\tag{\theequation}\label{eq:alig_cvx_basic_iterate_bound}
\end{align*}
\end{leftlinebox}
\begin{leftlinebox}[nobreak=false]
We now consider different cases, according to the value that $\gamma_t$ takes: $\gamma_t = \frac{\ell_{z_t}(\w_t)}{\| \nabla \ell_{z_t}(\w_t) \|^2 + \delta}$ or $\gamma_t = \eta$.
\begin{leftlinebox}[nobreak=false]
First, suppose that $\gamma_t = \frac{\ell_{z_t}(\w_t)}{\| \nabla \ell_{z_t}(\w_t) \|^2 + \delta}$.
Then we have:
\begin{align*}
&\| \w_{t+1} - \wstar \|^2 \\
&\leq \| \w_{t} - \wstar \|^2 - \gamma_t \Big( \ell_{z_t}(\w_t) - 2 \ell_{z_t}(\wstar) \Big)\\
&= \| \w_{t} - \wstar \|^2 - \dfrac{1}{\| \nabla \ell_{z_t}(\w_t) \|^2 + \delta} \Big( \ell_{z_t}(\w_t)^2 - 2 \ell_{z_t}(\w_t) \ell_{z_t}(\wstar) \Big) \\
&= \| \w_{t} - \wstar \|^2 - \dfrac{1}{\| \nabla \ell_{z_t}(\w_t) \|^2 + \delta} \Big( (\ell_{z_t}(\w_t) - \ell_{z_t}(\wstar))^2 - \ell_{z_t}(\wstar)^2 \Big) \\
&= \| \w_{t} - \wstar \|^2 - \dfrac{(\ell_{z_t}(\w_t) - \ell_{z_t}(\wstar))^2}{\| \nabla \ell_{z_t}(\w_t) \|^2 + \delta} + \dfrac{\ell_{z_t}(\wstar)^2}{\| \nabla \ell_{z_t}(\w_t) \|^2 + \delta} \\
&\leq \| \w_{t} - \wstar \|^2 - \dfrac{(\ell_{z_t}(\w_t) - \ell_{z_t}(\wstar))^2}{C^2 + \delta} + \dfrac{\ell_{z_t}(\wstar)^2}{\delta} \\
&\quad \eqcomment{because we have $0 \leq \| \nabla \ell_{z_t}(\w_t)\|^2 \leq C^2$} \\
&\leq \| \w_{t} - \wstar \|^2 - \dfrac{(\ell_{z_t}(\w_t) - \ell_{z_t}(\wstar))^2}{C^2 + \delta} + \dfrac{\varepsilon^2}{\delta} \eqcomment{definition of $\varepsilon$} \stepcounter{equation}\tag{\theequation} \label{eq:iterate_bound_alig_cvx_1}
\end{align*}

\end{leftlinebox}
\begin{leftlinebox}[nobreak=false]
Now suppose $\gamma_t = \eta$ and $\ell_{z_t}(\w_t) - \ell_{z_t}(\wstar) \leq 0$.
We can use $\gamma_t \leq \frac{\ell_{z_t}(\w_t)}{\| \nabla \ell_{z_t}(\w_t) \|^2 + \delta}$ to write:
\begin{equation}
\begin{split}
\| \w_{t+1} - \wstar \|^2
    &\leq \| \w_{t} - \wstar \|^2 - \gamma_t (\ell_{z_t}(\w_t) - \ell_{z_t}(\wstar)) + \gamma_t \ell_{z_t}(\wstar), \\
    &\leq \| \w_{t} - \wstar \|^2 - \frac{\ell_{z_t}(\w_t)}{\| \nabla \ell_{z_t}(\w_t) \|^2 + \delta} (\ell_{z_t}(\w_t) - \ell_{z_t}(\wstar))  + \frac{\ell_{z_t}(\w_t)}{\| \nabla \ell_{z_t}(\w_t) \|^2 + \delta} \ell_{z_t}(\wstar),
\end{split}
\end{equation}
where the last inequality has used $\gamma_t \leq \frac{\ell_{z_t}(\w_t)}{\| \nabla \ell_{z_t}(\w_t) \|^2 + \delta}$, $\ell_{z_t}(\w_t) - \ell_{z_t}(\wstar) \leq 0$ and $\ell_{z_t}(\wstar) \geq 0$.
Therefore we are exactly in the same situation as the first case (where we used $\gamma_t = \frac{\ell_{z_t}(\w_t)}{\| \nabla \ell_{z_t}(\w_t) \|^2 + \delta}$), and thus we have again:
\begin{equation} \label{eq:iterate_bound_alig_cvx_2}
    \| \w_{t+1} - \wstar \|^2 \leq \| \w_{t} - \wstar \|^2 - \dfrac{(\ell_{z_t}(\w_t) - \ell_{z_t}(\wstar))^2}{C^2 + \delta} + \dfrac{\varepsilon^2}{\delta}.
\end{equation}
\end{leftlinebox}
\begin{leftlinebox}[nobreak=false]
Now suppose that $\gamma_t = \eta$ and $\ell_{z_t}(\w_t) - \ell_{z_t}(\wstar) \geq 0$.
The inequality (\ref{eq:alig_cvx_basic_iterate_bound}) gives:
\begin{equation} \label{eq:iterate_bound_alig_cvx_3}
\begin{split}
&\kern-1em \| \w_{t+1} - \wstar \|^2 \\
    &\leq \| \w_{t} - \wstar \|^2 - \gamma_t (\ell_{z_t}(\w_t) - \ell_{z_t}(\wstar)) + \gamma_t \ell_{z_t}(\wstar), \\
    &= \| \w_{t} - \wstar \|^2 - \eta (\ell_{z_t}(\w_t) - \ell_{z_t}(\wstar)) + \gamma_t \ell_{z_t}(\wstar), \eqcomment{$\gamma_t = \eta$} \\
    &\leq  \| \w_{t} - \wstar \|^2 - \eta (\ell_{z_t}(\w_t) - \ell_{z_t}(\wstar)) + \gamma_t \varepsilon, \eqcomment{definition of $\varepsilon$, $\gamma_t \geq 0$} \\
    &\leq  \| \w_{t} - \wstar \|^2 - \eta (\ell_{z_t}(\w_t) - \ell_{z_t}(\wstar)) + \varepsilon \dfrac{\ell_{z_t}(\w_t)}{\| \nabla \ell_{z_t}(\w_t) \|^2 + \delta}, \\
    &\quad \eqcomment{because $\gamma_t \leq \frac{\ell_{z_t}(\w_t)}{\| \nabla \ell_{z_t}(\w_t) \|^2 + \delta}$, $\varepsilon \geq 0$} \\
    &\leq  \| \w_{t} - \wstar \|^2 - \eta (\ell_{z_t}(\w_t) - \ell_{z_t}(\wstar)) + \varepsilon \dfrac{\ell_{z_t}(\w_t)}{\delta}, \\
    &\quad \eqcomment{because $\| \nabla \ell_{z_t}(\w_t) \|^2 \geq 0$} \\
    &= \| \w_{t} - \wstar \|^2 - \eta (\ell_{z_t}(\w_t) - \ell_{z_t}(\wstar)) + \varepsilon \dfrac{\ell_{z_t}(\w_t) - \ell_{z_t}(\wstar) + \ell_{z_t}(\wstar)}{\delta}, \\
    &\leq \| \w_{t} - \wstar \|^2 - \eta (\ell_{z_t}(\w_t) - \ell_{z_t}(\wstar)) + \varepsilon \dfrac{\ell_{z_t}(\w_t) - \ell_{z_t}(\wstar) + \varepsilon}{\delta}, \\
    &\quad \eqcomment{because $\ell_{z_t}(\wstar) \leq \varepsilon$} \\
    &= \| \w_{t} - \wstar \|^2 - \left(\eta - \dfrac{\varepsilon}{\delta}\right) (\ell_{z_t}(\w_t) - \ell_{z_t}(\wstar)) + \dfrac{\varepsilon^2}{\delta}.
\end{split}
\end{equation}
\end{leftlinebox}
\end{leftlinebox}
\begin{leftlinebox}[nobreak=false]
We now introduce $\mathcal{I}_T$ and $\mathcal{J}_T$ as follows:
\begin{equation}
\begin{split}
    \mathcal{I}_T &\triangleq \left\{ t \in \{0, ..., T\} : \gamma_t = \eta \ \text{and} \ \ell_{z_t}(\w_t) - \ell_{z_t}(\wstar) \geq 0 \right\} \\
    \mathcal{J}_T &\triangleq \{0, ..., T\} \ \backslash \ \mathcal{I}_T
\end{split}
\end{equation}
Then, by combining inequalities (\ref{eq:iterate_bound_alig_cvx_1}), (\ref{eq:iterate_bound_alig_cvx_2}) and (\ref{eq:iterate_bound_alig_cvx_3}), and using a telescopic sum, we obtain:
\begin{equation}
\begin{split}
\| \w_{T+1} - \wstar \|^2
    &\leq \| \w_{0} - \wstar \|^2 + \sum\limits_{t \in \mathcal{J}_T} \left( -\dfrac{(\ell_{z_t}(\w_t) - \ell_{z_t}(\wstar))^2}{C^2 + \delta} + \dfrac{\varepsilon^2}{\delta}\right) \\
    &\qquad + \sum\limits_{t \in \mathcal{I}_T} \left( -\left(\eta - \dfrac{\varepsilon}{\delta}\right) (\ell_{z_t}(\w_t) - \ell_{z_t}(\wstar)) + \dfrac{\varepsilon^2}{\delta} \right)
\end{split}
\end{equation}
Using $\| \w_{T+1} - \wstar \|^2 \geq 0$, we obtain:
\begin{equation} \label{eq:alig_cvx_two_terms_bounded}
\begin{split}
\dfrac{1}{C^2 + \delta} \sum\limits_{t \in \mathcal{J}_T} (\ell_{z_t}(\w_t) - \ell_{z_t}(\wstar))^2 + \left(\eta - \dfrac{\varepsilon}{\delta}\right) \sum\limits_{t \in \mathcal{I}_T} (\ell_{z_t}(\w_t) - \ell_{z_t}(\wstar)) \\
    \leq \| \w_{0} - \wstar \|^2 + (T + 1) \dfrac{\varepsilon^2}{\delta}
\end{split}
\end{equation}
\end{leftlinebox}
\begin{leftlinebox}[nobreak=false]
In particular, the inequality (\ref{eq:alig_cvx_two_terms_bounded}) gives that:
\begin{equation} \label{eq:alig_cvx_sum_normal}
\left(\eta - \dfrac{\varepsilon}{\delta}\right) \sum\limits_{t \in \mathcal{I}_T} (\ell_{z_t}(\w_t) - \ell_{z_t}(\wstar))
    \leq \| \w_{0} - \wstar \|^2 + (T + 1) \dfrac{\varepsilon^2}{\delta}.
\end{equation}
Furthermore, for every $t \in \mathcal{I}_T$, we have $(\ell_{z_t}(\w_t) - \ell_{z_t}(\wstar)) \geq 0$, which yields $\left(\eta - \frac{\varepsilon}{\delta}\right) \sum\limits_{t \in \mathcal{I}_T} (\ell_{z_t}(\w_t) - \ell_{z_t}(\wstar)) \geq 0$ since $\eta > \frac{\epsilon}{\delta}$.
Thus the inequality (\ref{eq:alig_cvx_two_terms_bounded}) also gives:
\begin{equation}
\dfrac{1}{C^2 + \delta} \sum\limits_{t \in \mathcal{J}_T} (\ell_{z_t}(\w_t) - \ell_{z_t}(\wstar))^2
    \leq \| \w_{0} - \wstar \|^2 + (T + 1) \dfrac{\varepsilon^2}{\delta}.
\end{equation}
Using the Cauchy-Schwarz inequality, we can further write:
\begin{equation}
\left( \sum\limits_{t \in \mathcal{J}_T} \ell_{z_t}(\w_t) - \ell_{z_t}(\wstar) \right)^2
    \leq |\mathcal{J}_T| \sum\limits_{t \in \mathcal{J}_T} (\ell_{z_t}(\w_t) - \ell_{z_t}(\wstar))^2.
\end{equation}
Therefore we have:
\begin{equation} \label{eq:alig_cvx_sum_squares}
\begin{split}
\sum\limits_{t \in \mathcal{J}_T} \ell_{z_t}(\w_t) - \ell_{z_t}(\wstar)
    &\leq \sqrt{|\mathcal{J}_T| \sum\limits_{t \in \mathcal{J}_T} (\ell_{z_t}(\w_t) - \ell_{z_t}(\wstar))^2}, \\
    &\leq \sqrt{|\mathcal{J}_T| (C^2 + \delta) \left( \| \w_{0} - \wstar \|^2 + (T + 1) \dfrac{\varepsilon^2}{\delta} \right)}. \\
\end{split}
\end{equation}
\end{leftlinebox}
\begin{leftlinebox}[nobreak=false]
We can now put together inequalities (\ref{eq:alig_cvx_sum_normal}) and (\ref{eq:alig_cvx_sum_squares}) by writing:
\begin{equation}
\begin{split}
&\kern-1em \sum\limits_{t=0}^T \ell_{z_t}(\w_t) - \ell_{z_t}(\wstar) \\
    &= \sum\limits_{t \in \mathcal{I}_T} \ell_{z_t}(\w_t) - \ell_{z_t}(\wstar) + \sum\limits_{t \in \mathcal{J}_T} \ell_{z_t}(\w_t) - \ell_{z_t}(\wstar) \\
    &\leq \dfrac{1}{\eta - \frac{\varepsilon}{\delta}} \left( \| \w_{0} - \wstar \|^2 + (T + 1) \dfrac{\varepsilon^2}{\delta} \right) \\
    &\quad + \sqrt{|\mathcal{J}_T| (C^2 + \delta) \left( \| \w_{0} - \wstar \|^2 + (T + 1) \dfrac{\varepsilon^2}{\delta} \right)} \\
    &\leq \dfrac{1}{\eta - \frac{\varepsilon}{\delta}} \left( \| \w_{0} - \wstar \|^2 + (T + 1) \dfrac{\varepsilon^2}{\delta} \right) \\
    &\quad + \sqrt{(T + 1) (C^2 + \delta) \left( \| \w_{0} - \wstar \|^2 + (T + 1) \dfrac{\varepsilon^2}{\delta} \right)}
\end{split}
\end{equation}
Dividing by $T+1$ and taking the expectation (over $z_1, ..., z_T$), we obtain:
\begin{equation}
\begin{split}
&\kern-1em \E \left[f\left(\dfrac{1}{T+1} \sum\limits_{t=0}^T \w_t \right) \right] - \fstar \\
    &\leq \dfrac{1}{T+1} \sum\limits_{t=0}^T \E [f(\w_t)] - \fstar, \eqcomment{$f$ is convex} \\
    &\leq \dfrac{\| \w_{0} - \wstar \|^2}{(\eta - \frac{\varepsilon}{\delta}) (T + 1)} + \dfrac{\varepsilon^2}{\delta (\eta - \frac{\varepsilon}{\delta})} + \sqrt{(C^2 + \delta) \left( \dfrac{\| \w_{0} - \wstar \|^2}{T + 1} + \dfrac{\varepsilon^2}{\delta} \right)}, \\
    &\leq \dfrac{\| \w_{0} - \wstar \|^2}{(\eta - \frac{\varepsilon}{\delta}) (T + 1)} + \dfrac{\varepsilon^2}{\delta (\eta - \frac{\varepsilon}{\delta})} + \sqrt{ \dfrac{(C^2 + \delta) \| \w_{0} - \wstar \|^2}{T + 1}} + \varepsilon \sqrt{\dfrac{C^2}{\delta} + 1}.
\end{split}
\end{equation}
\end{leftlinebox}
\end{proof}

When $\eta$ is small, the convergence error of Theorem \ref{th:alig_cvx_large_eta} is large.
This is corrected in the following result which is informative in the regime where $\eta$ is small:

\begin{restatable}{theorem}{thaligcvxsmalleta}\label{th:alig_cvx_small_eta}
We assume that $\Omega$ is a convex set, and that for every $z \in \Z$, $\ell_z$ is convex and $C$-Lipschitz.
Let $\wstar$ be a solution of (\ref{eq:main_problem}) such that $\forall z \in \Z, \: \ell_z(\wstar) \leq \varepsilon$.
Then if we apply ALI-G with a maximal learning-rate of $\eta$ to $f$, we have:
\begin{equation}
\E \left[f\left(\frac{1}{T+ 1} \sum\limits_{t=0}^T \w_t \right) \right] - \fstar
    \leq \dfrac{\| \w_{0} - \wstar \|^2}{\eta (T + 1)} + 2 \varepsilon + \sqrt{ \dfrac{(C^2 + \delta) \| \w_{0} - \wstar \|^2}{T + 1}} + 2 \eta \varepsilon \sqrt{C^2 + \delta}.
\end{equation}
\end{restatable}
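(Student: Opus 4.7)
The plan is to adapt the proof of Theorem \ref{th:alig_cvx_large_eta} with one crucial modification: in the bound on $\gamma_t \ell_{z_t}(\wstar)$, I would use the uniform estimate $\gamma_t \leq \eta$ instead of $\gamma_t \leq \ell_{z_t}(\w_t)/\delta$. The latter was responsible for the coefficient $\varepsilon/\delta$ that blows up for small $\eta$; replacing it with $\gamma_t \ell_{z_t}(\wstar) \leq \eta\varepsilon$ gives a bound that is informative precisely in the small-$\eta$ regime.

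Starting from inequality (\ref{eq:alig_cvx_basic_iterate_bound}), which reads $\|\w_{t+1} - \wstar\|^2 \leq \|\w_t - \wstar\|^2 - \gamma_t d_t + \gamma_t \ell_{z_t}(\wstar)$ with $d_t := \ell_{z_t}(\w_t) - \ell_{z_t}(\wstar)$, I apply $\gamma_t \ell_{z_t}(\wstar) \leq \eta\varepsilon$ and then split into cases based on whether $\gamma_t = \eta$ or $\gamma_t$ equals the Polyak value, further subdividing by the sign of $d_t$. When $\gamma_t = \eta$ and $d_t \geq 0$, the per-step useful decrement is $-\eta d_t$; when $\gamma_t$ is the Polyak value and $d_t \geq 0$, the standard Polyak algebra (using $\ell_{z_t}(\w_t) \geq d_t \geq 0$) gives the useful decrement $-d_t^2/(C^2+\delta)$; when $d_t < 0$, the fact that $\ell_{z_t}(\w_t) \geq 0$ forces $|d_t| \leq \ell_{z_t}(\wstar) \leq \varepsilon$, so $-\gamma_t d_t \leq \eta\varepsilon$ contributes no useful decrement but only an additional $\eta\varepsilon$ of error. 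The total per-iteration error is therefore at most $2\eta\varepsilon$.

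Telescoping these per-iteration bounds and using $\|\w_{T+1}-\wstar\|^2 \geq 0$ yields the two-term inequality
\begin{equation*}
\eta \sum_{t \in \mathcal{I}_T} d_t + \tfrac{1}{C^2+\delta}\sum_{t \in \mathcal{J}_T} d_t^2 \leq \|\w_0 - \wstar\|^2 + 2(T+1)\eta\varepsilon,
\end{equation*}
where $\mathcal{I}_T$ collects the indices with $\gamma_t = \eta$ and $d_t \geq 0$, and $\mathcal{J}_T$ collects those with the Polyak value active and $d_t \geq 0$. Since iterations with $d_t < 0$ only help, $\sum_{t=0}^T d_t \leq \sum_{\mathcal{I}_T} d_t + \sum_{\mathcal{J}_T} d_t$; I would bound the first sum directly from the displayed inequality and the second via Cauchy--Schwarz, $\sum_{\mathcal{J}_T} d_t \leq \sqrt{|\mathcal{J}_T|\sum_{\mathcal{J}_T} d_t^2}$. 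Dividing by $T+1$, taking expectation over the sample draws, and invoking convexity of $f$ to pass the average inside the function gives the advertised bound.

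The main obstacle will be extracting exactly the form $2\eta\varepsilon\sqrt{C^2+\delta}$ from the Cauchy--Schwarz step: the quantity $\sqrt{\|\w_0-\wstar\|^2 + 2(T+1)\eta\varepsilon}$ multiplied by $\sqrt{(T+1)(C^2+\delta)}$ and divided by $T+1$ naturally produces, via subadditivity of the square root, a residual term of order $\sqrt{\eta\varepsilon(C^2+\delta)}$ rather than $\eta\varepsilon\sqrt{C^2+\delta}$. Recovering the cleaner scaling of the stated bound will likely require either a sharper rearrangement that exploits the uniform upper bound $d_t \leq \eta(C^2+\delta)$ valid on $\mathcal{J}_T$ (since $\gamma_t < \eta$ there forces $\ell_{z_t}(\w_t) < \eta(\|\nabla \ell_{z_t}(\w_t)\|^2 + \delta) \leq \eta(C^2+\delta)$), or pairing this estimate with Theorem \ref{th:alig_cvx_large_eta}, which is tight in the complementary large-$\eta$ regime, so that one of the two bounds dominates for every value of $\eta\varepsilon$.
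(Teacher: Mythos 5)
Your proposal reproduces the paper's proof essentially step for step: the same starting inequality, the same three-way case split (on the sign of $\ell_{z_t}(\vw_t)-\ell_{z_t}(\wstar)$ and on which branch of the $\min$ defines $\gamma_t$), the same per-iteration error of at most $2\eta\varepsilon$, the same telescoped two-term bound over the two nonnegative index sets, and the same Cauchy--Schwarz finish. The obstacle you flag at the end is real but is present in the paper as well: the paper passes from $\sqrt{(C^2+\delta)\left(\tfrac{\|\vw_0-\wstar\|^2}{T+1}+2\eta\varepsilon\right)}$ to $\sqrt{\tfrac{(C^2+\delta)\|\vw_0-\wstar\|^2}{T+1}}+2\eta\varepsilon\sqrt{C^2+\delta}$ without comment, whereas subadditivity of the square root only yields $\sqrt{2\eta\varepsilon}\,\sqrt{C^2+\delta}$ for the second term, so your derivation is exactly as complete as the paper's up to that final cosmetic (and, as stated, unjustified) rewriting.
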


\begin{proof} \
\begin{leftlinebox}[nobreak=false]
We consider the update at time $t$, which we condition on the draw of $z_t \in \Z$.
We re-use the inequality (\ref{eq:alig_cvx_basic_iterate_bound}) from the proof of Theorem \ref{th:alig_cvx_large_eta}:
\begin{equation} \label{eq:alig_cvx_small_eta_iterate_bound}
\| \w_{t+1} - \wstar \|^2 \\
    \leq \| \w_{t} - \wstar \|^2 - \gamma_t (\ell_{z_t}(\w_t) - \ell_{z_t}(\wstar)) + \gamma_t \ell_{z_t}(\wstar)
\end{equation}
\end{leftlinebox}
\begin{leftlinebox}[nobreak=false]
We consider again different cases, according to the value of $\gamma_t$ and the sign of $\ell_{z_t}(\w_t) - \ell_{z_t}(\wstar)$.
\begin{leftlinebox}[nobreak=false]
Suppose that $\ell_{z_t}(\w_t) - \ell_{z_t}(\wstar) < 0$.
Then the inequality (\ref{eq:alig_cvx_small_eta_iterate_bound}) gives:
\begin{equation} \label{eq:iterate_bound_alig_cvx_small_eta_1}
\begin{split}
&\kern-1em \| \w_{t+1} - \wstar \|^2 \\
    &\leq \| \w_{t} - \wstar \|^2 - \gamma_t (\ell_{z_t}(\w_t) - \ell_{z_t}(\wstar)) + \gamma_t \ell_{z_t}(\wstar), \\
    &= \| \w_{t} - \wstar \|^2 - \gamma_t \ell_{z_t}(\w_t) + 2 \gamma_t \ell_{z_t}(\wstar), \\
    &\leq \| \w_{t} - \wstar \|^2 + 2 \gamma_t \ell_{z_t}(\wstar), \eqcomment{$\gamma_t, \ell_{z_t}(\w_t) \geq 0$}\\
    &\leq \| \w_{t} - \wstar \|^2 + 2 \eta \varepsilon, \eqcomment{$\gamma_t \leq \eta$, definition of $\varepsilon$} \\
\end{split}
\end{equation}
\end{leftlinebox}
\begin{leftlinebox}[nobreak=false]
Now suppose $\ell_{z_t}(\w_t) - \ell_{z_t}(\wstar) \geq 0$ and $\gamma_t = \eta$.
Then the inequality (\ref{eq:alig_cvx_small_eta_iterate_bound}) gives:
\begin{equation} \label{eq:iterate_bound_alig_cvx_small_eta_2}
\begin{split}
\| \w_{t+1} - \wstar \|^2
    &\leq \| \w_{t} - \wstar \|^2 - \gamma_t (\ell_{z_t}(\w_t) - \ell_{z_t}(\wstar)) + \gamma_t \ell_{z_t}(\wstar), \\
    &= \| \w_{t} - \wstar \|^2 - \eta (\ell_{z_t}(\w_t) - \ell_{z_t}(\wstar)) + \eta \ell_{z_t}(\wstar), \\
    &\quad \eqcomment{because $\gamma_t = \eta$} \\
    &\leq \| \w_{t} - \wstar \|^2 - \eta (\ell_{z_t}(\w_t) - \ell_{z_t}(\wstar)) + \eta \varepsilon, \\
    &\quad \eqcomment{definition of $\varepsilon$, $\eta \geq 0$} \\
\end{split}
\end{equation}
\end{leftlinebox}
\begin{leftlinebox}[nobreak=false]
Finally, suppose that $\ell_{z_t}(\w_t) - \ell_{z_t}(\wstar) \geq 0$ and $\gamma_t = \frac{\ell_{z_t}(\w_t)}{\| \nabla \ell_{z_t}(\w_t) \|^2 + \delta}$.
Then the inequality (\ref{eq:alig_cvx_small_eta_iterate_bound}) gives:
\begin{equation} \label{eq:iterate_bound_alig_cvx_small_eta_3}
\begin{split}
&\kern-1em \| \w_{t+1} - \wstar \|^2 \\
    &\leq \| \w_{t} - \wstar \|^2 - \gamma_t (\ell_{z_t}(\w_t) - \ell_{z_t}(\wstar)) + \gamma_t \ell_{z_t}(\wstar), \\
    &\leq \| \w_{t} - \wstar \|^2 - \gamma_t (\ell_{z_t}(\w_t) - \ell_{z_t}(\wstar)) + \eta \ell_{z_t}(\wstar), \\
    &\quad \eqcomment{because $\gamma_t \leq \eta$, $\ell_{z_t}(\wstar) \geq 0$} \\
    &\leq \| \w_{t} - \wstar \|^2 - \gamma_t (\ell_{z_t}(\w_t) - \ell_{z_t}(\wstar)) + \eta \varepsilon, \eqcomment{definition of $\varepsilon$, $\eta \geq 0$} \\
    &= \| \w_{t} - \wstar \|^2 - \dfrac{\ell_{z_t}(\w_t)}{\| \nabla \ell_{z_t}(\w_t) \|^2 + \delta} (\ell_{z_t}(\w_t) - \ell_{z_t}(\wstar)) + \eta \varepsilon, \\
    &\quad \eqcomment{because $\gamma_t = \frac{\ell_{z_t}(\w_t)}{\| \nabla \ell_{z_t}(\w_t) \|^2 + \delta}$} \\
    &\leq \| \w_{t} - \wstar \|^2 - \dfrac{(\ell_{z_t}(\w_t) - \ell_{z_t}(\wstar))^2}{\| \nabla \ell_{z_t}(\w_t) \|^2 + \delta} + \eta \varepsilon, \\
    &\quad \eqcomment{because $\ell_{z_t}(\w_t) \geq \ell_{z_t}(\w_t) - \ell_{z_t}(\wstar) \geq 0$} \\
    &\leq \| \w_{t} - \wstar \|^2 - \dfrac{(\ell_{z_t}(\w_t) - \ell_{z_t}(\wstar))^2}{C^2 + \delta} + \eta \varepsilon, \eqcomment{$\| \nabla \ell_{z_t}(\w_t) \|^2 \leq C^2$} \\
\end{split}
\end{equation}
\end{leftlinebox}
\end{leftlinebox}
\begin{leftlinebox}[nobreak=false]
We now introduce $\mathcal{I}_T$ and $\mathcal{J}_T$ as follows:
\begin{equation}
\begin{split}
    \mathcal{I}_T &\triangleq \left\{ t \in \{0, ..., T\} : \ell_{z_t}(\w_t) - \ell_{z_t}(\wstar) < 0 \right\} \\
    \mathcal{J}_T &\triangleq \left\{ t \in \{0, ..., T\} : \gamma_t = \frac{\ell_{z_t}(\w_t)}{\| \nabla \ell_{z_t}(\w_t) \|^2 + \delta}\ \text{and} \ \ell_{z_t}(\w_t) - \ell_{z_t}(\wstar) \geq 0 \right\} \\
    \mathcal{K}_T &\triangleq \{0, ..., T\} \ \backslash \ \mathcal{I}_T \cup \mathcal{J}_T
\end{split}
\end{equation}
Then, by combining inequalities (\ref{eq:iterate_bound_alig_cvx_small_eta_1}), (\ref{eq:iterate_bound_alig_cvx_small_eta_2}) and (\ref{eq:iterate_bound_alig_cvx_small_eta_3}), and using a telescopic sum, we obtain:
\begin{equation}
\begin{split}
\| \w_{T+1} - \wstar \|^2
    &\leq \| \w_{0} - \wstar \|^2 + |\mathcal{I}_T| 2\eta \varepsilon + \sum\limits_{t \in \mathcal{J}_T} \left( -\dfrac{(\ell_{z_t}(\w_t) - \ell_{z_t}(\wstar))^2}{C^2 + \delta} + \eta \varepsilon\right) \\
    &\qquad + \sum\limits_{t \in \mathcal{K}_T} \left( -\eta (\ell_{z_t}(\w_t) - \ell_{z_t}(\wstar)) + \eta \varepsilon \right)
\end{split}
\end{equation}
Using $\| \w_{T+1} - \wstar \|^2 \geq 0$, we obtain:
\begin{equation}
\begin{split}
&\kern-1em \dfrac{1}{C^2 + \delta} \sum\limits_{t \in \mathcal{J}_T} (\ell_{z_t}(\w_t) - \ell_{z_t}(\wstar))^2 + \eta \sum\limits_{t \in \mathcal{K}_T} (\ell_{z_t}(\w_t) - \ell_{z_t}(\wstar)) \\
    &\leq \| \w_{0} - \wstar \|^2 + (T + 1) 2 \eta \varepsilon
\end{split}
\end{equation}
\end{leftlinebox}
\begin{leftlinebox}[nobreak=false]
Since $\forall t \in \mathcal{K}_t, \ell_{z_t}(\w_t) - \ell_{z_t}(\wstar) \geq 0$, both LHS terms are non-negative and thus each of them is smaller or equal to the RHS:
\begin{equation} \label{eq:alig_cvx_small_eta_sum_normal}
\eta \sum\limits_{t \in \mathcal{K}_T} (\ell_{z_t}(\w_t) - \ell_{z_t}(\wstar))
    \leq \| \w_{0} - \wstar \|^2 + 2 (T + 1) \eta \varepsilon,
\end{equation}
and:
\begin{equation}
\dfrac{1}{C^2 + \delta} \sum\limits_{t \in \mathcal{J}_T} (\ell_{z_t}(\w_t) - \ell_{z_t}(\wstar))^2
    \leq \| \w_{0} - \wstar \|^2 + 2 (T + 1) \eta \varepsilon.
\end{equation}
Using the Cauchy-Schwarz inequality, we can further write:
\begin{equation}
\left( \sum\limits_{t \in \mathcal{J}_T} \ell_{z_t}(\w_t) - \ell_{z_t}(\wstar) \right)^2
    \leq |\mathcal{J}_T| \sum\limits_{t \in \mathcal{J}_T} (\ell_{z_t}(\w_t) - \ell_{z_t}(\wstar))^2.
\end{equation}
Therefore we have:
\begin{equation} \label{eq:alig_cvx_small_eta_sum_squares}
\begin{split}
\sum\limits_{t \in \mathcal{J}_T} \ell_{z_t}(\w_t) - \ell_{z_t}(\wstar)
    &\leq \sqrt{|\mathcal{J}_T| \sum\limits_{t \in \mathcal{J}_T} (\ell_{z_t}(\w_t) - \ell_{z_t}(\wstar))^2}, \\
    &\leq \sqrt{|\mathcal{J}_T| (C^2 + \delta) \left( \| \w_{0} - \wstar \|^2 + 2 (T + 1) \eta \varepsilon \right)}. \\
\end{split}
\end{equation}
\end{leftlinebox}
\begin{leftlinebox}[nobreak=false]
We can now put together inequalities (\ref{eq:alig_cvx_small_eta_sum_normal}) and (\ref{eq:alig_cvx_small_eta_sum_squares}) by writing:
\begin{equation}
\begin{split}
\kern-1em \sum\limits_{t=0}^T \ell_{z_t}(\w_t) - \ell_{z_t}(\wstar)
    &\leq \sum\limits_{t \in \mathcal{K}_T} \ell_{z_t}(\w_t) - \ell_{z_t}(\wstar) + \sum\limits_{t \in \mathcal{J}_T} \ell_{z_t}(\w_t) - \ell_{z_t}(\wstar), \eqcomment{only negative contributions in $\mathcal{I}_t$}\\
    &\leq \dfrac{1}{\eta} \left( \| \w_{0} - \wstar \|^2 + 2 (T + 1) \eta \varepsilon \right) + \sqrt{|\mathcal{J}_T| (C^2 + \delta) \left( \| \w_{0} - \wstar \|^2 + 2 (T + 1) \eta \varepsilon \right)}, \\
    &\leq \dfrac{1}{\eta} \left( \| \w_{0} - \wstar \|^2 + 2 (T + 1) \eta \varepsilon \right) + \sqrt{(T + 1) (C^2 + \delta) \left( \| \w_{0} - \wstar \|^2 + 2 (T + 1) \eta \varepsilon \right)}.
\end{split}
\end{equation}
\end{leftlinebox}
\begin{leftlinebox}[nobreak=false]
Dividing by $T+1$ and taking the expectation, we obtain:
\begin{equation}
\begin{split}
&\kern-1em \E \left[f\left(\dfrac{1}{T+1} \sum\limits_{t=0}^T \w_t \right) \right]- \fstar \\
    &\leq \dfrac{1}{T+1} \sum\limits_{t=0}^T \E[f(\w_t)] - \fstar, \eqcomment{$f$ is convex} \\
    &\leq \dfrac{\| \w_{0} - \wstar \|^2}{\eta (T + 1)} + 2 \varepsilon + \sqrt{(C^2 + \delta) \left( \dfrac{\| \w_{0} - \wstar \|^2}{T + 1} + 2 \eta \varepsilon \right)}, \\
    &\leq \dfrac{\| \w_{0} - \wstar \|^2}{\eta (T + 1)} + 2 \varepsilon + \sqrt{ \dfrac{(C^2 + \delta) \| \w_{0} - \wstar \|^2}{T + 1}} + 2 \eta \varepsilon \sqrt{C^2 + \delta}.
\end{split}
\end{equation}
\end{leftlinebox}
\end{proof}

\subsection{Smooth Convex Functions}
We now tackle the convex and $\beta$-smooth case.
Our proof techniques naturally produce the separation $\eta \geq \frac{1}{2 \beta}$ and $\eta \leq \frac{1}{2 \beta}$.

\begin{lemma} \label{lemma:smooth_bound}
    Let $z \in \Z$.
    Assume that $\ell_{z}$ is $\beta$-smooth and non-negative on $\mathbb{R}^p$.
    Then we have:
    \begin{equation} \label{eq:smooth_inequality}
        \forall \: \w \in \mathbb{R}^p, \ \ell_{z}(\w) \geq \frac{1}{2 \beta} \| \nabla \ell_{z}(\w) \|^2
    \end{equation}
    Note that we do not assume that $\ell_z$ is convex.
\end{lemma}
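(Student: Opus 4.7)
The plan is to exploit the descent lemma from $\beta$-smoothness together with non-negativity, without invoking convexity. The key observation is that one $\beta$-smooth gradient step from any point $\w$ produces an upper bound on the function value at the new point, and this upper bound must itself be non-negative since $\ell_z \geq 0$ everywhere.

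First I would recall the standard consequence of $\beta$-smoothness on $\mathbb{R}^p$: for any $\w, \vw' \in \mathbb{R}^p$,
\begin{equation*}
\ell_z(\vw') \leq \ell_z(\w) + \nabla \ell_z(\w)^\top (\vw' - \w) + \tfrac{\beta}{2} \| \vw' - \w \|^2.
\end{equation*}
This follows from the standard integration argument and does \emph{not} require convexity of $\ell_z$.

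Next I would plug in the explicit choice $\vw' = \w - \tfrac{1}{\beta} \nabla \ell_z(\w)$, which is the minimizer of the right-hand side in $\vw'$. Substituting yields
\begin{equation*}
\ell_z(\vw') \leq \ell_z(\w) - \tfrac{1}{\beta} \| \nabla \ell_z(\w) \|^2 + \tfrac{\beta}{2} \cdot \tfrac{1}{\beta^2} \| \nabla \ell_z(\w) \|^2 = \ell_z(\w) - \tfrac{1}{2\beta} \| \nabla \ell_z(\w) \|^2.
\end{equation*}

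Finally, I would invoke the hypothesis $\ell_z \geq 0$ at the point $\vw'$ to get $0 \leq \ell_z(\vw')$, and combine with the previous display to conclude $\ell_z(\w) \geq \tfrac{1}{2\beta} \| \nabla \ell_z(\w) \|^2$. There is no real obstacle here; the only subtlety worth emphasizing is that the result does not require convexity, since the descent lemma and the evaluation at a particular point both only use smoothness and non-negativity.
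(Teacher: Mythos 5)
Your proof is correct and is essentially identical to the paper's: both use the descent lemma from $\beta$-smoothness, substitute $\vw' = \w - \tfrac{1}{\beta}\nabla \ell_z(\w)$, and then invoke non-negativity of $\ell_z$ at that point. The only cosmetic difference is that the paper applies the non-negativity bound before specializing the point, whereas you substitute first; the computation is the same.
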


\begin{proof} \
\begin{leftlinebox}[nobreak=false]
Let $\vw \in \mathbb{R}^p$. By Lemma 3.4 of \citep{Bubeck2015}, we have:
\begin{equation}
\forall \: \vu \in \mathbb{R}^p, \: | \ell_z(\vu) - \ell_z(\vw) - \nabla \ell_z(\vw)^\top (\vu - \vw)| \leq \dfrac{\beta}{2} \| \vu - \vw \|^2.
\end{equation}
Therefore we can write:
\begin{equation}
\forall \: \vu \in \mathbb{R}^p, \: \ell_z(\vu) \leq \ell_z(\vw) + \nabla \ell_z(\vw)^\top (\vu - \vw) + \dfrac{\beta}{2} \| \vu - \vw \|^2.
\end{equation}
And since $\forall \: \vu \in \mathbb{R}^p, \: \ell_z(\vu) \geq 0$, we have:
\begin{equation}
\forall \: \vu \in \mathbb{R}^p, \: 0 \leq \ell_z(\vw) + \nabla \ell_z(\vw)^\top (\vu - \vw) + \dfrac{\beta}{2} \| \vu - \vw \|^2.
\end{equation}
We now choose $\vu = \vw - \dfrac{1}{\beta} \nabla \ell_z(\vw)$, which yields:
\begin{equation}
0 \leq \ell_z(\vw) - \dfrac{1}{\beta} \| \nabla \ell_z(\vw)\|^2 + \dfrac{1}{2 \beta} \| \nabla \ell_z(\vw) \|^2,
\end{equation}
which gives the desired result.
\end{leftlinebox}
\end{proof}

\begin{lemma} \label{lemma:smooth_gamma_bound}
    Let $z \in \Z$.
    Assume that $\ell_{z}$ is $\beta$-smooth and non-negative on $\mathbb{R}^p$.
    Then we have:
    \begin{equation}
        \forall \: \w \in \mathbb{R}^p, \ \dfrac{\ell_{z}(\w)}{\| \nabla \ell_{z}(\w) \|^2 + \delta} \geq \dfrac{1}{2 \beta} -  \dfrac{\delta}{ 4 \beta^2 \ell_{z}(\w)}
    \end{equation}
\end{lemma}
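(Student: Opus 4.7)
The plan is to derive the inequality directly from Lemma \ref{lemma:smooth_bound} combined with an elementary algebraic manipulation; no new analytic ingredients are required. First I would dispose of the degenerate case $\ell_z(\w) = 0$: by Lemma \ref{lemma:smooth_bound}, this forces $\nabla \ell_z(\w) = \vzero$, so the left-hand side is $0$ and the right-hand side is $-\infty$ (with the convention $1/0 = +\infty$), making the inequality vacuous. In the remainder I assume $\ell_z(\w) > 0$.

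Next I would apply Lemma \ref{lemma:smooth_bound} in the form $\|\nabla \ell_z(\w)\|^2 \leq 2\beta \ell_z(\w)$. Adding $\delta$ and taking reciprocals (both quantities are positive) yields the monotone bound
\begin{equation*}
\frac{\ell_z(\w)}{\|\nabla \ell_z(\w)\|^2 + \delta} \;\geq\; \frac{\ell_z(\w)}{2\beta \, \ell_z(\w) + \delta}.
\end{equation*}
The remaining task is therefore purely algebraic: show that for $L := \ell_z(\w) > 0$,
\begin{equation*}
\frac{L}{2\beta L + \delta} \;\geq\; \frac{1}{2\beta} - \frac{\delta}{4\beta^2 L}.
\end{equation*}

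Clearing denominators by multiplying both sides by the positive quantity $4\beta^2 L(2\beta L + \delta)$, this reduces to $4\beta^2 L^2 \geq (2\beta L - \delta)(2\beta L + \delta) = 4\beta^2 L^2 - \delta^2$, i.e.\ to $\delta^2 \geq 0$, which obviously holds. Chaining the two inequalities gives the claim. Since every step is an equivalence up to the trivial observation $\delta^2 \geq 0$, there is no real obstacle; the only subtlety worth flagging in the write-up is the edge case $\ell_z(\w) = 0$ and the fact that all denominators are strictly positive once $L > 0$, which legitimizes the cross-multiplication.
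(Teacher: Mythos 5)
Your proposal is correct and follows essentially the same route as the paper: both first apply Lemma~\ref{lemma:smooth_bound} to get $\frac{\ell_z(\w)}{\|\nabla \ell_z(\w)\|^2 + \delta} \geq \frac{\ell_z(\w)}{2\beta\ell_z(\w) + \delta}$, and then finish with elementary algebra (the paper uses an add-and-subtract decomposition of the numerator while you cross-multiply to reduce to $\delta^2 \geq 0$, which are equivalent). Your explicit treatment of the $\ell_z(\w) = 0$ edge case is a small addition the paper glosses over.
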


\begin{proof} \
\begin{leftlinebox}[nobreak=false]
Let $\w \in \mathbb{R}^p$.
We apply Lemma \ref{lemma:smooth_bound} and we write successively:

\begin{equation}
\begin{split}
\dfrac{\ell_{z}(\w)}{\| \nabla \ell_{z}(\w) \|^2 + \delta}
    &\geq \dfrac{\ell_{z}(\w)}{ 2 \beta \ell_{z}(\w) + \delta}, \eqcomment{Lemma \ref{lemma:smooth_bound}} \\
    &= \dfrac{\ell_{z}(\w) + \frac{\delta}{2 \beta} - \frac{\delta}{2 \beta}}{ 2 \beta (\ell_{z}(\w) + \frac{\delta}{2 \beta})}, \\
    &= \dfrac{1}{2 \beta} - \dfrac{\frac{\delta}{2 \beta}}{ 2 \beta (\ell_{z}(\w) + \frac{\delta}{2 \beta})}, \\
    &\geq \dfrac{1}{2 \beta} - \dfrac{\delta}{4 \beta^2 \ell_{z}(\w)}. \eqcomment{$\delta \geq 0$} \\
\end{split}
\end{equation}
\end{leftlinebox}
\end{proof}

\begin{restatable}{theorem}{thaligcvxsmoothlargeeta}\label{th:alig_cvx_smooth_large_eta}
We assume that $\Omega$ is a convex set, and that for every $z \in \Z$, $\ell_z$ is convex and $\beta$-smooth.
Let $\wstar$ be a solution of (\ref{eq:main_problem}) such that $\forall z \in \Z, \: \ell_z(\wstar) \leq \varepsilon$, and suppose that $\delta > 2 \beta \varepsilon$.
Further assume that $\eta \geq \frac{1}{2 \beta}$.
Then if we apply ALI-G with a maximal learning-rate of $\eta$ to $f$, we have:
\begin{equation}
\E \left[f\left(\frac{1}{T+ 1} \sum\limits_{t=0}^T \w_t \right) \right] - \fstar
    \leq \dfrac{\delta}{\beta(1 - \frac{2 \beta \varepsilon}{\delta})} + \dfrac{2 \beta}{1 - \frac{2 \beta \varepsilon}{\delta}} \dfrac{\|\w_{0} - \wstar\|^2}{T + 1}.
\end{equation}
\end{restatable}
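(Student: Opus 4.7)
The plan is to derive a one-step descent inequality that contracts $\|\w_{t+1}-\wstar\|^2$ toward $\|\w_t-\wstar\|^2$ modulo a controlled loss gap, then telescope, take expectations, and apply Jensen's inequality. Concretely, the target single-step inequality is of the form
$$\|\w_{t+1} - \wstar\|^2 \leq \|\w_t - \wstar\|^2 + \tfrac{\delta}{2\beta^2} - \Bigl(\tfrac{1}{2\beta} - \tfrac{\varepsilon}{\delta}\Bigr)\bigl(\ell_{z_t}(\w_t) - \ell_{z_t}(\wstar)\bigr),$$
so that the announced bound falls out after summing, dividing by $T+1$, and using convexity of $f$.

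First I would carry out the standard expansion. Using non-expansiveness of $\Pi_\Omega$ onto the convex set $\Omega$, convexity of $\ell_{z_t}$ in the cross term, and the step-size cap $\gamma_t \le \ell_{z_t}(\w_t)/(\|\nabla\ell_{z_t}(\w_t)\|^2+\delta)$ (which gives $\gamma_t^2 \|\nabla\ell_{z_t}(\w_t)\|^2 \le \gamma_t \ell_{z_t}(\w_t)$, as in the proof of Theorem~\ref{th:alig_cvx_large_eta}), I obtain
$$\|\w_{t+1} - \wstar\|^2 \leq \|\w_t - \wstar\|^2 - \gamma_t \ell_{z_t}(\w_t) + 2\gamma_t \ell_{z_t}(\wstar).$$

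The heart of the argument is to combine two one-sided estimates on $\gamma_t$ that fuse smoothness with the interpolation tolerance. On the upper side, $\ell_{z_t}(\wstar) \le \varepsilon$ together with $\gamma_t \le \ell_{z_t}(\w_t)/\delta$ yields $2\gamma_t \ell_{z_t}(\wstar) \le 2\varepsilon\ell_{z_t}(\w_t)/\delta$. On the lower side, Lemma~\ref{lemma:smooth_gamma_bound} supplies $\ell_{z_t}(\w_t)/(\|\nabla\ell_{z_t}(\w_t)\|^2+\delta) \ge \tfrac{1}{2\beta}-\tfrac{\delta}{4\beta^2 \ell_{z_t}(\w_t)}$, and the hypothesis $\eta \ge \tfrac{1}{2\beta}$ is used precisely so that the $\min$ in the definition of $\gamma_t$ does not clip this lower bound away; multiplying through by $\ell_{z_t}(\w_t)\ge 0$ gives $\gamma_t \ell_{z_t}(\w_t) \ge \tfrac{\ell_{z_t}(\w_t)}{2\beta} - \tfrac{\delta}{4\beta^2}$. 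Plugging both estimates back and using $\ell_{z_t}(\w_t) \ge \ell_{z_t}(\w_t) - \ell_{z_t}(\wstar)$ (which follows from $\ell_{z_t}(\wstar)\ge 0$ since each $\ell_z$ is nonnegative) delivers the targeted descent inequality, with the loss gap multiplied by a factor proportional to $\tfrac{1}{2\beta} - \tfrac{\varepsilon}{\delta}$.

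To finish, I would sum the inequality across $t = 0,\dots,T$, drop the nonnegative $\|\w_{T+1} - \wstar\|^2$, take expectations using $\mathbb{E}[\ell_{z_t}(\w_t)] = \mathbb{E}[f(\w_t)]$ and $\mathbb{E}[\ell_{z_t}(\wstar)] = \fstar$, divide by $T+1$, and invoke Jensen's inequality with the convexity of $f$ to transfer the average inside $f$. The main obstacle is bookkeeping the constants so that the coefficient of the loss gap remains strictly positive and matches the announced $(1-2\beta\varepsilon/\delta)^{-1}$ prefactor: the assumption $\delta > 2\beta\varepsilon$ in the statement is exactly what makes this coefficient positive once the upper bound on $2\gamma_t \ell_{z_t}(\wstar)$ is merged with the lower bound on $\gamma_t \ell_{z_t}(\w_t)$, and the hypothesis $\eta \ge \tfrac{1}{2\beta}$ is exactly what lets Lemma~\ref{lemma:smooth_gamma_bound}'s lower bound on $\gamma_t$ survive the $\min$ operation.
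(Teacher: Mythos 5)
Your strategy tracks the paper's closely: the same non-expansiveness/convexity expansion, the same appeal to Lemma~\ref{lemma:smooth_gamma_bound} with $\eta \geq \tfrac{1}{2\beta}$ guaranteeing that the clipping is harmless, and the same telescope--expectation--Jensen finish. The one place you deviate is also where the argument breaks. Starting from
\[
\|\w_{t+1}-\wstar\|^2 \leq \|\w_t-\wstar\|^2 - \gamma_t\,\ell_{z_t}(\w_t) + 2\gamma_t\,\ell_{z_t}(\wstar),
\]
you push \emph{both} copies of $\gamma_t\ell_{z_t}(\wstar)$ through the crude bound $\gamma_t \leq \ell_{z_t}(\w_t)/\delta$, giving $2\gamma_t\ell_{z_t}(\wstar) \leq \tfrac{2\varepsilon}{\delta}\ell_{z_t}(\w_t)$. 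Combined with your (correct) lower bound $\gamma_t\ell_{z_t}(\w_t) \geq \tfrac{1}{2\beta}\ell_{z_t}(\w_t) - \tfrac{\delta}{4\beta^2}$, the coefficient that ends up multiplying the loss gap is $\tfrac{1}{2\beta} - \tfrac{2\varepsilon}{\delta}$, not $\tfrac{1}{2\beta} - \tfrac{\varepsilon}{\delta}$ as you assert. That coefficient is positive only when $\delta > 4\beta\varepsilon$, so under the stated hypothesis $\delta > 2\beta\varepsilon$ the telescoping step can fail outright; and even when it goes through, the prefactor you obtain is $(1 - 4\beta\varepsilon/\delta)^{-1}$ rather than the announced $(1 - 2\beta\varepsilon/\delta)^{-1}$.

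The repair is the paper's asymmetric split $-\gamma_t\ell_{z_t}(\w_t) + 2\gamma_t\ell_{z_t}(\wstar) = -\gamma_t\bigl(\ell_{z_t}(\w_t) - \ell_{z_t}(\wstar)\bigr) + \gamma_t\ell_{z_t}(\wstar)$: one copy of $\gamma_t\ell_{z_t}(\wstar)$ is absorbed into the loss-gap term, which is lower-bounded by $\tfrac{1}{2\beta}\bigl(\ell_{z_t}(\w_t)-\ell_{z_t}(\wstar)\bigr) - \tfrac{\delta}{4\beta^2}$ in all cases (this costs a short case analysis on the sign of $\ell_{z_t}(\w_t)-\ell_{z_t}(\wstar)$, using $\gamma_t \leq \varepsilon/\delta \leq \tfrac{1}{2\beta}$ when the gap is negative), while only the single remaining copy is sent through $\gamma_t\ell_{z_t}(\wstar) \leq \tfrac{\varepsilon}{\delta}\bigl(\ell_{z_t}(\w_t)-\ell_{z_t}(\wstar)\bigr) + \tfrac{\varepsilon^2}{\delta}$. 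This recovers the coefficient $\tfrac{1}{2\beta}-\tfrac{\varepsilon}{\delta}$ and hence the stated constants. Your idea of lower-bounding $\gamma_t\ell_{z_t}(\w_t)$ directly, by multiplying the lower bound on $\gamma_t$ by the nonnegative quantity $\ell_{z_t}(\w_t)$, is an appealing way to avoid the case analysis, but it locks you into the symmetric decomposition and therefore cannot reach the theorem as stated.
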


\begin{proof} \
\begin{leftlinebox}[nobreak=false]
We re-use the inequality (\ref{eq:alig_cvx_basic_iterate_bound}) from the proof of Theorem \ref{th:alig_cvx_large_eta}:
\begin{equation} \label{eq:smooth_iterate_largeeta_bound}
\| \w_{t+1} - \wstar \|^2 \leq \| \w_{t} - \wstar \|^2 - \gamma_t (\ell_{z_t}(\w_t) - \ell_{z_t}(\wstar)) + \gamma_t \ell_{z_t}(\wstar)
\end{equation}
\end{leftlinebox}
As previously, we lower bound $\gamma_t (\ell_{z_t}(\w_t) - \ell_{z_t}(\wstar))$ and upper bound $\gamma_t \ell_{z_t}(\wstar)$ individually.

\begin{leftlinebox}[nobreak=false]
We begin with $\gamma_t (\ell_{z_t}(\w_t) - \ell_{z_t}(\wstar))$.
We remark that either $\gamma_t = \frac{\ell_{z_t}(\w_t)}{\| \nabla \ell_{z_t}(\w_t) \|^2 + \delta}$ or $\gamma_t = \eta$.
    \begin{leftlinebox}[nobreak=false]
    Suppose $\ell_{z_t}(\w_t) - \ell_{z_t}(\wstar) \geq 0$ and $\gamma_t = \frac{\ell_{z_t}(\w_t)}{\| \nabla \ell_{z_t}(\w_t) \|^2 + \delta}$.
    Then we can write:
    \begin{equation}
    \begin{split}
    &\kern-1em \gamma_t (\ell_{z_t}(\w_t) - \ell_{z_t}(\wstar)) \\
    & = \dfrac{\ell_{z_t}(\w_t)}{\| \nabla \ell_{z_t}(\w_t) \|^2 + \delta} (\ell_{z_t}(\w_t) - \ell_{z_t}(\wstar)), \eqcomment{definition of $\gamma_t$} \\
    & \geq \left( \dfrac{1}{2 \beta} - \dfrac{\delta}{4 \beta^2 \ell_{z}(\w_t)} \right) (\ell_{z_t}(\w_t) - \ell_{z_t}(\wstar)) \\
    &\quad \eqcomment{using Lemma \ref{lemma:smooth_gamma_bound}, $\ell_{z_t}(\w_t) - \ell_{z_t}(\wstar) \geq 0$} \\
    & = \dfrac{1}{2 \beta} (\ell_{z_t}(\w_t) - \ell_{z_t}(\wstar)) - \dfrac{\delta}{4 \beta^2} \dfrac{\ell_{z_t}(\w_t) - \ell_{z_t}(\wstar)}{\ell_{z_t}(\w_t)}\\
    & \geq \dfrac{1}{2 \beta} (\ell_{z_t}(\w_t) - \ell_{z_t}(\wstar)) - \dfrac{\delta}{4 \beta^2} \eqcomment{$\ell_{z_t}(\wstar) \geq 0$, $\ell_{z_t}(\w_t) \geq 0$}\\
    \end{split}
    \end{equation}
    \end{leftlinebox}

    \begin{leftlinebox}[nobreak=false]
    Now suppose $\ell_{z_t}(\w_t) - \ell_{z_t}(\wstar) \geq 0$ and $\gamma_t = \eta$.
    Then we have:
    \begin{align*}
    \gamma_t (\ell_{z_t}(\w_t) - \ell_{z_t}(\wstar))
        &= \eta (\ell_{z_t}(\w_t) - \ell_{z_t}(\wstar)) \\
        &\geq \eta (\ell_{z_t}(\w_t) - \ell_{z_t}(\wstar)) - \dfrac{\delta}{4 \beta^2} \\
        &\geq \frac{1}{2 \beta} (\ell_{z_t}(\w_t) - \ell_{z_t}(\wstar)) - \dfrac{\delta}{4 \beta^2} \\
        &\quad \eqcomment{because $\eta \geq \frac{1}{2 \beta}$, $\ell_{z_t}(\w_t) - \ell_{z_t}(\wstar) \geq 0$}.
    \stepcounter{equation}\tag{\theequation}
    \end{align*}
    \end{leftlinebox}

    \begin{leftlinebox}[nobreak=false]
    Now suppose $\ell_{z_t}(\w_t) - \ell_{z_t}(\wstar) \leq 0$.
    We have:
    \begin{equation} \label{eq:smooth_gamma_t_bound}
    \begin{split}
    \gamma_t
    &\leq \dfrac{\ell_{z_t}(\w_t)}{\| \nabla \ell_{z_t}(\w_t) \|^2 + \delta} \\
    &\leq \dfrac{\ell_{z_t}(\wstar)}{\| \nabla \ell_{z_t}(\w_t) \|^2 + \delta} \eqcomment{$\ell_{z_t}(\w_t) - \ell_{z_t}(\wstar) \leq 0$}\\
    &\leq \dfrac{\varepsilon}{\| \nabla \ell_{z_t}(\w_t) \|^2 + \delta} \eqcomment{definition of $\varepsilon$} \\
    &\leq \dfrac{\varepsilon}{\delta} \eqcomment{$\| \nabla \ell_{z_t}(\w_t) \| \geq 0$} \\
    &\leq \dfrac{1}{2 \beta} \eqcomment{$\delta \geq 2 \beta \varepsilon$} \\
    \end{split}
    \end{equation}
    We now write:
    \begin{equation}
    \begin{split}
    \gamma_t \left( \ell_{z_t}(\w_t) - \ell_{z_t}(\wstar) \right)
        &\geq \frac{1}{2 \beta} (\ell_{z_t}(\w_t) - \ell_{z_t}(\wstar)) \eqcomment{$\ell_{z_t}(\w_t) - \ell_{z_t}(\wstar) \leq 0$, $\gamma_t \leq \frac{1}{2 \beta}$} \\
        &\geq \frac{1}{2 \beta} (\ell_{z_t}(\w_t) - \ell_{z_t}(\wstar)) - \dfrac{\delta}{4 \beta^2} \\
    \end{split}
    \end{equation}
    \end{leftlinebox}

In conclusion, in all cases, it holds true that:
\begin{equation} \label{eq:smooth_largeeta_bound_first_term}
\gamma_t (\ell_{z_t}(\w_t) - \ell_{z_t}(\wstar)) \geq \frac{1}{2 \beta} (\ell_{z_t}(\w_t) - \ell_{z_t}(\wstar)) - \dfrac{\delta}{4 \beta^2}
\end{equation}

\begin{leftlinebox}[nobreak=false]
We now upper bound $\gamma_t \ell_{z_t}(\wstar)$:
\begin{equation} \label{eq:smooth_largeeta_bound_second_term}
\begin{split}
\gamma_t \ell_{z_t}(\wstar)
    &\leq \dfrac{\ell_{z_t}(\w_t) \ell_{z_t}(\wstar)}{\| \nabla \ell_{z_t}(\w_t) \|^2 + \delta}, \eqcomment{definition of $\gamma_t$ and $\ell_{z_t}(\wstar) \geq 0$} \\
    &\leq \dfrac{\ell_{z_t}(\w_t) \ell_{z_t}(\wstar)}{\delta}, \eqcomment{$\| \nabla \ell_{z_t}(\w_t) \| \geq 0$} \\
    &\leq \dfrac{(\ell_{z_t}(\w_t) - \ell_{z_t}(\wstar) + \varepsilon)\varepsilon}{\delta}, \eqcomment{definition of $\varepsilon$ twice} \\
    &= \dfrac{\varepsilon}{\delta} (\ell_{z_t}(\w_t) - \ell_{z_t}(\wstar)) + \dfrac{\varepsilon^2}{\delta}.
\end{split}
\end{equation}
\end{leftlinebox}

\end{leftlinebox}
\begin{leftlinebox}[nobreak=false]
We now put together inequalities (\ref{eq:smooth_iterate_largeeta_bound}), (\ref{eq:smooth_largeeta_bound_first_term}) and (\ref{eq:smooth_largeeta_bound_second_term}):
\begin{equation}
\begin{split}
&\kern-1em \| \w_{t+1} - \wstar \|^2 \\
    &\leq  \| \w_{t} - \wstar \|^2 - \frac{1}{2 \beta} (\ell_{z_t}(\w_t) - \ell_{z_t}(\wstar)) + \dfrac{\delta}{4 \beta^2} +  \dfrac{\varepsilon}{\delta} (\ell_{z_t}(\w_t) - \ell_{z_t}(\wstar)) + \dfrac{\varepsilon^2}{\delta}, \\
    &=  \| \w_{t} - \wstar \|^2 - \left(\frac{1}{2 \beta} - \dfrac{\varepsilon}{\delta} \right) (\ell_{z_t}(\w_t) - \ell_{z_t}(\wstar)) + \dfrac{\delta}{4 \beta^2} + \dfrac{\varepsilon^2}{\delta}.
\end{split}
\end{equation}
Therefore we have:
    \begin{equation} \label{eq:smooth_telescopic_sum}
    \left( \dfrac{1}{2 \beta} - \dfrac{\varepsilon}{\delta} \right) \left( \ell_{z_t}(\w_t) - \ell_{z_t}(\wstar) \right) - \left(\dfrac{\delta}{4 \beta^2} + \dfrac{\varepsilon^2}{\delta} \right) \leq  \|\w_{t} - \wstar\|^2 - \|\w_{t+1} - \wstar\|^2.
    \end{equation}
    \end{leftlinebox}

    \begin{leftlinebox}[nobreak=false]
    By summing over $t$ and taking the expectation over every $z_t$, we obtain:
    \begin{equation}
    \begin{split}
    &\kern-1em  \sum\limits_{t=0}^{T} \left( \dfrac{\delta - 2 \beta \varepsilon}{2 \beta \delta} \left( \E[f(\w_t)] - f(\wstar) \right) - \dfrac{\delta^2 + 4 \beta^2 \varepsilon^2}{4 \beta^2 \delta}  \right) \\
        &\leq \|\w_{0} - \wstar\|^2 - \mathbb{E}\left[\|\w_{T+1} - \wstar\|^2\right], \\
        &\leq \|\w_{0} - \wstar\|^2.
    \end{split}
    \end{equation}
    By assumption, we have that $\delta - 2 \beta \varepsilon > 0$.
    Dividing by $T+1$ and using the convexity of $f$, we finally obtain:
    \begin{equation}
    \begin{split}
    \E \left[f\left(\frac{1}{T+ 1} \sum\limits_{t=0}^T \w_t \right)\right] - \fstar
        &\leq \frac{1}{T+ 1} \sum\limits_{t=0}^T \E[f(\w_t)] - \fstar \eqcomment{convexity of $f$}, \\
        &= \dfrac{2 \beta \delta}{\delta - 2 \beta \varepsilon} \dfrac{\delta^2 + 4 \beta^2 \varepsilon^2}{4 \beta^2 \delta} + \dfrac{2 \beta \delta}{\delta - 2 \beta \varepsilon} \dfrac{\|\w_{0} - \wstar\|^2}{T + 1}, \\
        &= \dfrac{\delta^2 + 4 \beta^2 \varepsilon^2}{2 \beta(\delta - 2 \beta \varepsilon)} + \dfrac{2 \beta \delta}{\delta - 2 \beta \varepsilon} \dfrac{\|\w_{0} - \wstar\|^2}{T + 1}, \\
        &\leq \dfrac{\delta^2}{\beta(\delta - 2 \beta \varepsilon)} + \dfrac{2 \beta \delta}{\delta - 2 \beta \varepsilon} \dfrac{\|\w_{0} - \wstar\|^2}{T + 1}, \eqcomment{$\delta - 2 \beta \varepsilon \geq 0$} \\
        &= \dfrac{\delta}{\beta(1 - \frac{2 \beta \varepsilon}{\delta})} + \dfrac{2 \beta}{1 - \frac{2 \beta \varepsilon}{\delta}} \dfrac{\|\w_{0} - \wstar\|^2}{T + 1}. \\
    \end{split}
    \end{equation}
    \end{leftlinebox}
\end{proof}

\begin{restatable}{theorem}{thaligcvxsmoothsmalleta}\label{th:alig_cvx_smooth_small_eta}
We assume that $\Omega$ is a convex set, and that for every $z \in \Z$, $\ell_z$ is convex and $\beta$-smooth.
Let $\wstar$ be a solution of (\ref{eq:main_problem}) such that $\forall z \in \Z, \: \ell_z(\wstar) \leq \varepsilon$, and suppose that $\delta > 2 \beta \varepsilon$.
Further assume that $\eta \leq \frac{1}{2 \beta}$.
Then if we apply ALI-G with a maximal learning-rate of $\eta$ to $f$, we have:
\begin{equation}
\E \left[f\left(\dfrac{1}{T+1} \sum\limits_{t=0}^T \w_t \right)\right]  - \fstar \leq \dfrac{\| \w_{0} - \wstar \|^2}{\eta (T+1)} + \dfrac{\delta}{2 \beta} + \varepsilon.
\end{equation}
\end{restatable}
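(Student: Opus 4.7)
The plan is to mimic the structure of the proof of Theorem \ref{th:alig_cvx_smooth_large_eta}, starting from the basic per-iterate inequality (\ref{eq:alig_cvx_basic_iterate_bound}),
\[
\|\w_{t+1} - \wstar\|^2 \leq \|\w_t - \wstar\|^2 - \gamma_t(\ell_{z_t}(\w_t) - \ell_{z_t}(\wstar)) + \gamma_t \ell_{z_t}(\wstar),
\]
and to replace $\gamma_t$ by the constant $\eta$ throughout the right-hand side at the cost of an additive per-iteration error of order $\eta(\delta/(2\beta) + \varepsilon)$. The last term is immediate from $\gamma_t \leq \eta$ and $\ell_{z_t}(\wstar) \leq \varepsilon$, giving $\gamma_t \ell_{z_t}(\wstar) \leq \eta \varepsilon$.

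The main obstacle is bounding $(\eta - \gamma_t)(\ell_{z_t}(\w_t) - \ell_{z_t}(\wstar))$ by $\eta \delta/(2\beta)$ uniformly in $t$. We split on the sign of $\ell_{z_t}(\w_t) - \ell_{z_t}(\wstar)$. When $\ell_{z_t}(\w_t) \leq \ell_{z_t}(\wstar)$, the difference is non-positive while $\eta - \gamma_t \geq 0$, so the product is non-positive and the bound is trivial. When $\ell_{z_t}(\w_t) > \ell_{z_t}(\wstar) \geq 0$, we may assume $\gamma_t$ is not clipped to $\eta$ (otherwise $\eta - \gamma_t = 0$), and invoke Lemma \ref{lemma:smooth_bound} to write $\|\nabla \ell_{z_t}(\w_t)\|^2 \leq 2\beta \ell_{z_t}(\w_t)$, which implies $\gamma_t \geq \ell_{z_t}(\w_t)/(2\beta \ell_{z_t}(\w_t) + \delta)$. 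Simplifying $\eta - \ell_{z_t}(\w_t)/(2\beta \ell_{z_t}(\w_t) + \delta)$ produces a numerator $(2\beta\eta - 1)\ell_{z_t}(\w_t) + \eta\delta$; here the hypothesis $\eta \leq 1/(2\beta)$ is essential, rendering the coefficient of $\ell_{z_t}(\w_t)$ non-positive and yielding $\eta - \gamma_t \leq \eta\delta/(2\beta \ell_{z_t}(\w_t) + \delta)$. Multiplying by $\ell_{z_t}(\w_t) - \ell_{z_t}(\wstar) \leq \ell_{z_t}(\w_t)$ and observing that $\ell_{z_t}(\w_t)/(2\beta \ell_{z_t}(\w_t) + \delta) \leq 1/(2\beta)$ delivers the claimed $(\eta - \gamma_t)(\ell_{z_t}(\w_t) - \ell_{z_t}(\wstar)) \leq \eta \delta/(2\beta)$.

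Combining these two estimates transforms the basic inequality into
\[
\|\w_{t+1} - \wstar\|^2 \leq \|\w_t - \wstar\|^2 - \eta(\ell_{z_t}(\w_t) - \ell_{z_t}(\wstar)) + \eta\Big(\tfrac{\delta}{2\beta} + \varepsilon\Big).
\]
Telescoping for $t = 0, \ldots, T$, using $\|\w_{T+1} - \wstar\|^2 \geq 0$, dividing by $\eta(T+1)$, taking expectation over $z_0, \ldots, z_T$, and finally applying convexity of $f$ to the averaged iterate will give the stated bound. The delicate step is the algebraic manipulation producing $\eta - \gamma_t \leq \eta\delta/(2\beta \ell_{z_t}(\w_t) + \delta)$, where the constraint $\eta \leq 1/(2\beta)$ is used precisely to absorb the otherwise-positive contribution proportional to $\ell_{z_t}(\w_t)$; this is also where the proof for small $\eta$ diverges qualitatively from the large-$\eta$ analysis, which instead exploited Lemma \ref{lemma:smooth_gamma_bound} to compare $\gamma_t$ to $1/(2\beta)$ rather than to $\eta$.
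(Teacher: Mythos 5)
Your proposal is correct and follows essentially the same route as the paper's own proof: start from the basic iterate inequality, bound $\gamma_t \ell_{z_t}(\wstar) \le \eta\varepsilon$, and use Lemma~\ref{lemma:smooth_bound} together with $\eta \le \tfrac{1}{2\beta}$ to show the replacement of $\gamma_t$ by $\eta$ costs at most $\tfrac{\eta\delta}{2\beta}$ per step, then telescope, take expectations, and invoke convexity of $f$. The only superficial difference is that you substitute $\|\nabla\ell_{z_t}(\w_t)\|^2 \le 2\beta\ell_{z_t}(\w_t)$ into the denominator of $\gamma_t$ and do the algebra on $\ell_{z_t}(\w_t)/(2\beta\ell_{z_t}(\w_t)+\delta)$, whereas the paper keeps $\|\nabla\ell_{z_t}(\w_t)\|^2$ explicit and closes the loop by noting $\gamma_t\le\eta$; both routes produce the identical per-iterate bound.
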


\begin{proof} \
\begin{leftlinebox}[nobreak=false]
Similarly to the beginning of previous proofs, we have that:
\begin{equation} \label{eq:smooth_iterate_smalleta_bound}
\| \w_{t+1} - \wstar \|^2 \leq \| \w_{t} - \wstar \|^2 - \gamma_t (\ell_{z_t}(\w_t) - \ell_{z_t}(\wstar)) + \gamma_t \ell_{z_t}(\wstar)
\end{equation}
\end{leftlinebox}
As previously, we lower bound $\gamma_t (\ell_{z_t}(\w_t) - \ell_{z_t}(\wstar))$ and upper bound $\gamma_t \ell_{z_t}(\wstar)$ individually.

\begin{leftlinebox}[nobreak=false]
We begin with $\gamma_t (\ell_{z_t}(\w_t) - \ell_{z_t}(\wstar))$.
We remark that either $\gamma_t = \frac{\ell_{z_t}(\w_t)}{\| \nabla \ell_{z_t}(\w_t) \|^2 + \delta}$ or $\gamma_t = \eta$.
    \begin{leftlinebox}[nobreak=false]
    Suppose $\gamma_t = \frac{\ell_{z_t}(\w_t)}{\| \nabla \ell_{z_t}(\w_t) \|^2 + \delta}$ and $\ell_{z_t}(\w_t) - \ell_{z_t}(\wstar) \geq 0$.
    First we write:
    \begin{equation}
    \begin{split}
    \gamma_t
        &= \dfrac{\ell_{z_t}(\w_t)}{\| \nabla \ell_{z_t}(\w_t) \|^2 + \delta} \\
        &= \dfrac{\ell_{z_t}(\w_t) + \frac{\delta}{2 \beta}}{\| \nabla \ell_{z_t}(\w_t) \|^2 + \delta} - \dfrac{\frac{\delta}{2 \beta}}{\| \nabla \ell_{z_t}(\w_t) \|^2 + \delta}\\
        &\geq \dfrac{\frac{\| \nabla \ell_{z_t}(\w_t) \|^2}{2 \beta} + \frac{\delta}{2 \beta}}{\| \nabla \ell_{z_t}(\w_t) \|^2 + \delta} - \dfrac{\delta}{2 \beta} \dfrac{1}{\| \nabla \ell_{z_t}(\w_t) \|^2 + \delta} \eqcomment{Lemma \ref{lemma:smooth_bound}}\\
        &= \dfrac{1}{2 \beta} - \dfrac{\delta}{2 \beta} \dfrac{1}{\| \nabla \ell_{z_t}(\w_t) \|^2 + \delta} \\
        &\geq \eta - \dfrac{\delta}{2 \beta} \dfrac{1}{\| \nabla \ell_{z_t}(\w_t) \|^2 + \delta} \eqcomment{$\eta \leq \frac{1}{2 \beta}$}\\
    \end{split}
    \end{equation}
    Since $\ell_{z_t}(\w_t) - \ell_{z_t}(\wstar) \geq 0$, this yields:
    \begin{equation}
    \begin{split}
    \gamma_t (\ell_{z_t}(\w_t) - \ell_{z_t}(\wstar))
        &\geq \left(\eta - \dfrac{\delta}{2 \beta} \dfrac{1}{\| \nabla \ell_{z_t}(\w_t) \|^2 + \delta} \right) (\ell_{z_t}(\w_t) - \ell_{z_t}(\wstar)) \\
        &= \eta (\ell_{z_t}(\w_t) - \ell_{z_t}(\wstar)) - \dfrac{\delta}{2 \beta} \dfrac{\ell_{z_t}(\w_t) - \ell_{z_t}(\wstar)}{\| \nabla \ell_{z_t}(\w_t) \|^2 + \delta}  \\
        &\geq \eta (\ell_{z_t}(\w_t) - \ell_{z_t}(\wstar)) - \dfrac{\delta}{2 \beta} \dfrac{\ell_{z_t}(\w_t)}{\|\nabla  \ell_{z_t}(\w_t) \|^2 + \delta} \\
        &\quad \eqcomment{because $\ell_{z_t}(\wstar) \geq 0$} \\
    \end{split}
    \end{equation}
    We now notice that since $\gamma_t = \frac{\ell_{z_t}(\w_t)}{\| \nabla \ell_{z_t}(\w_t) \|^2 + \delta}$, and $\gamma_t \leq \eta$, then necessarily $\frac{\ell_{z_t}(\w_t)}{\| \nabla \ell_{z_t}(\w_t) \|^2 + \delta} \leq \eta$.
    This gives:
    \begin{equation}
    \gamma_t (\ell_{z_t}(\w_t) - \ell_{z_t}(\wstar))
        \geq \eta (\ell_{z_t}(\w_t) - \ell_{z_t}(\wstar)) - \dfrac{\eta \delta}{2 \beta}
    \end{equation}
    \end{leftlinebox}

    \begin{leftlinebox}[nobreak=false]
    Now suppose $\gamma_t = \eta$ and $\ell_{z_t}(\w_t) - \ell_{z_t}(\wstar) \geq 0$.
    Then we have:
    \begin{align*}
    \gamma_t (\ell_{z_t}(\w_t) - \ell_{z_t}(\wstar))
        &= \eta (\ell_{z_t}(\w_t) - \ell_{z_t}(\wstar)) \\
        &\geq \eta (\ell_{z_t}(\w_t) - \ell_{z_t}(\wstar)) - \dfrac{\eta \delta}{2 \beta}.
    \stepcounter{equation}\tag{\theequation}
    \end{align*}
    \end{leftlinebox}
    \begin{leftlinebox}[nobreak=false]
    Now suppose $\ell_{z_t}(\w_t) - \ell_{z_t}(\wstar) \leq 0$.
    Since $\gamma_t \leq \eta$ by definition, we have that:
    \begin{equation}
    \begin{split}
    \gamma_t \left( \ell_{z_t}(\w_t) - \ell_{z_t}(\wstar) \right)
        &\geq \eta (\ell_{z_t}(\w_t) - \ell_{z_t}(\wstar)) \eqcomment{$\ell_{z_t}(\w_t) - \ell_{z_t}(\wstar) \leq 0$} \\
        &\geq \eta (\ell_{z_t}(\w_t) - \ell_{z_t}(\wstar)) - \dfrac{\eta \delta}{2 \beta}.
    \end{split}
    \end{equation}

    \end{leftlinebox}

In conclusion, in all cases, it holds true that:
\begin{equation} \label{eq:smooth_smalleta_bound_first_term}
\gamma_t (\ell_{z_t}(\w_t) - \ell_{z_t}(\wstar)) \geq \eta (\ell_{z_t}(\w_t) - \ell_{z_t}(\wstar)) - \dfrac{\eta \delta}{2 \beta}
\end{equation}

We upper bound $\gamma_t \ell_{z_t}(\wstar)$ as follows:
\begin{equation} \label{eq:smooth_smalleta_bound_second_term}
\begin{split}
\gamma_t \ell_{z_t}(\wstar)
    &\leq \eta \ell_{z_t}(\wstar) \eqcomment{$\ell_{z_t}(\wstar) \geq 0$} \\
    &\leq \eta \varepsilon \eqcomment{definition of $\varepsilon$}
\end{split}
\end{equation}
We combine inequalities (\ref{eq:smooth_iterate_smalleta_bound}), (\ref{eq:smooth_smalleta_bound_first_term}) and (\ref{eq:smooth_smalleta_bound_second_term}) and obtain:
\begin{equation}
    \| \w_{t+1} - \wstar \|^2 \leq  \| \w_{t} - \wstar \|^2 - \eta (\ell_{z_t}(\w_t) - \ell_{z_t}(\wstar)) + \dfrac{\eta \delta}{2 \beta} + \eta \varepsilon.
\end{equation}
By taking the expectation and using a telescopic sum, we obtain:
\begin{equation}
    0 \leq \| \w_{T+1} - \wstar \|^2 \leq  \| \w_{0} - \wstar \|^2 - \sum\limits_{t=0}^T \left( \eta (\E[f(\w_t)] - \fstar) + \dfrac{\eta \delta}{2 \beta} + \eta \varepsilon \right).
\end{equation}
Re-arranging and using the convexity of $f$, we finally obtain:
\begin{equation}
    \E \left[f\left(\dfrac{1}{T+1} \sum\limits_{t=0}^T \w_t \right)\right] - \fstar \leq \dfrac{\| \w_{0} - \wstar \|^2}{\eta (T+1)} + \dfrac{\delta}{2 \beta} + \varepsilon.
\end{equation}
\end{leftlinebox}
\end{proof}

\subsection{Smooth and Strongly Convex Functions}

Finally, we consider the $\alpha$-strongly convex and $\beta$-smooth case.
Again, our proof yields a natural separation between $\eta \geq \frac{1}{2 \beta}$ and $\eta \leq \frac{1}{2 \beta}$.

\begin{lemma} \label{lemma:strglycvx_gamma_bound}
    Let $z \in \Z$.
    Assume that $\ell_{z}$ is $\alpha$-strongly convex, non-negative on $\mathbb{R}^p$, and such that $ \inf \ell_{z} \leq \varepsilon$.
    In addition, suppose that $\delta \geq 2 \alpha \varepsilon$.
    Then we have:
    \begin{equation}
        \forall \: \w \in \mathbb{R}^p, \ \dfrac{\ell_{z}(\w)}{\| \nabla \ell_{z}(\w) \|^2 + \delta} \leq \dfrac{1}{2 \alpha}.
    \end{equation}
\end{lemma}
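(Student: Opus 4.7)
The plan is to leverage the fact that $\alpha$-strong convexity implies a Polyak--Lojasiewicz-type inequality, which directly upper bounds $\ell_z(\w) - \inf \ell_z$ by a constant times $\|\nabla \ell_z(\w)\|^2$. Combining this bound with the assumption $\inf \ell_z \leq \varepsilon$ gives a numerator bound of the form $\varepsilon + \tfrac{1}{2\alpha}\|\nabla \ell_z(\w)\|^2$, and then the condition $\delta \geq 2\alpha\varepsilon$ is exactly what is needed to cancel the offending $\varepsilon$ term after dividing by $\|\nabla \ell_z(\w)\|^2 + \delta$.

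Concretely, first I would fix $\w \in \mathbb{R}^p$ and let $\w^\circ$ be a minimizer of $\ell_z$ (which exists since $\ell_z$ is strongly convex). From $\alpha$-strong convexity, the standard derivation yields
\begin{equation*}
\ell_z(\w^\circ) \geq \ell_z(\w) + \nabla \ell_z(\w)^\top (\w^\circ - \w) + \tfrac{\alpha}{2}\|\w^\circ - \w\|^2,
\end{equation*}
and minimizing the quadratic lower bound on the right-hand side over $\w^\circ$ produces the PL inequality
\begin{equation*}
\ell_z(\w) - \inf \ell_z \leq \tfrac{1}{2\alpha} \|\nabla \ell_z(\w)\|^2.
\end{equation*}
Using $\inf \ell_z \leq \varepsilon$, I then obtain $\ell_z(\w) \leq \varepsilon + \tfrac{1}{2\alpha}\|\nabla \ell_z(\w)\|^2$.

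Second, since both numerator and denominator of $\ell_z(\w)/(\|\nabla \ell_z(\w)\|^2 + \delta)$ are non-negative (using $\ell_z \geq 0$ and $\delta > 0$), I can directly bound
\begin{equation*}
\frac{\ell_z(\w)}{\|\nabla \ell_z(\w)\|^2 + \delta} \leq \frac{\varepsilon + \tfrac{1}{2\alpha}\|\nabla \ell_z(\w)\|^2}{\|\nabla \ell_z(\w)\|^2 + \delta} = \frac{1}{2\alpha} \cdot \frac{2\alpha\varepsilon + \|\nabla \ell_z(\w)\|^2}{\|\nabla \ell_z(\w)\|^2 + \delta}.
\end{equation*}
The assumption $\delta \geq 2\alpha\varepsilon$ then shows that the second fraction is at most $1$, which yields the claimed bound of $1/(2\alpha)$.

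There is no real obstacle here: the main step is simply recognizing that $2\alpha\varepsilon \leq \delta$ is the precise condition that closes the inequality after invoking the PL bound. The only subtlety is to handle the corner case $\nabla \ell_z(\w) = 0$, which is trivial since then the numerator is bounded by $\varepsilon$, the denominator equals $\delta \geq 2\alpha\varepsilon$, and the ratio is at most $1/(2\alpha)$ as required.
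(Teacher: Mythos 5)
Your proof is correct and follows essentially the same strategy as the paper: derive the Polyak--Lojasiewicz inequality from $\alpha$-strong convexity, then use it to bound the numerator and invoke $\delta \geq 2\alpha\varepsilon$ to close the argument. The only difference is in the last algebraic step: the paper introduces the auxiliary function $\psi(x) = (\tfrac{1}{2\alpha}x + \varepsilon)/(x + \delta)$, shows $\psi' \geq 0$, and takes the limit $x \to \infty$ to get the bound, whereas you factor out $\tfrac{1}{2\alpha}$ and compare numerator and denominator directly via $2\alpha\varepsilon + \|\nabla\ell_z(\w)\|^2 \leq \delta + \|\nabla\ell_z(\w)\|^2$. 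Your finish is the more elementary of the two---it avoids the calculus detour entirely---but the mathematical content is the same, so this is a presentational simplification rather than a different route.
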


\begin{proof} \
\begin{leftlinebox}[nobreak=false]
Let \(\w \in \mathbb{R}^p\) and suppose that $\ell_{z}$ reaches its minimum at $\wbar \in \mathbb{R}^p$ (this minimum exists because of strong convexity).
By definition of strong convexity, we have that:
\begin{equation}
    \forall \ \hat{\w} \in \mathbb{R}^p, \ \ell_{z}(\hat{\w}) \geq \ell_{z}(\w) + \nabla \ell_{z}(\w)^\top (\hat{\w} - \w) + \dfrac{\alpha}{2} \| \hat{\w} - \w \|^2
\end{equation}
We minimize the right hand-side over $\hat{\w}$, which gives:
\begin{equation}
\begin{split}
\forall \hat{\w} \in \mathbb{R}^p, \ \ell_{z}(\hat{\w})
    &\geq \ell_{z}(\w) + \nabla \ell_{z}(\w)^\top (\hat{\w} - \w) + \dfrac{\alpha}{2} \| \hat{\w} - \w \|^2 \\
    &\geq \ell_{z}(\w)  - \dfrac{1}{2 \alpha} \| \nabla \ell_{z}(\w) \|^2
\end{split}
\end{equation}
Thus by choosing $\hat{\w} = \wbar$ and re-ordering, we obtain the following result (a.k.a. the Polyak-Lojasiewicz inequality):
\begin{equation}
    \ell_{z}(\w) - \ell_{z}(\wbar) \leq \dfrac{1}{2 \alpha} \| \nabla \ell_{z}(\w) \|^2
\end{equation}
\end{leftlinebox}
Therefore we can write:
\begin{leftlinebox}[nobreak=false]
\begin{equation}
    \dfrac{\ell_{z}(\w)}{\| \nabla \ell_{z}(\w) \|^2 + \delta} \leq \dfrac{\ell_{z}(\w) - \ell_{z}(\wbar) + \varepsilon}{\| \nabla \ell_{z}(\w) \|^2 + \delta} \leq \dfrac{\frac{1}{2 \alpha} \| \nabla \ell_{z}(\w)\|^2 + \varepsilon}{\| \nabla \ell_{z}(\w) \|^2 + \delta}.
\end{equation}
We introduce the function \(\psi: x \in \mathbb{R}^+ \mapsto \dfrac{\frac{1}{2 \alpha} x + \varepsilon}{x + \delta} \), and we compute its derivative:
\begin{equation}
\begin{split}
\psi'(x)
    &= \dfrac{\frac{1}{2 \alpha} (x + \delta) - \frac{1}{2 \alpha} x - \varepsilon}{(x + \delta)^2}, \\
    &= \dfrac{\frac{\delta}{2 \alpha} - \varepsilon}{(x + \delta)^2} \geq 0. \eqcomment{$\delta \geq 2 \alpha \varepsilon$}
\end{split}
\end{equation}
Therefore $\psi$ is monotonically increasing.
As a result, we have:
\begin{equation}
    \forall \ x \in \mathbb{R}^+, \ \psi(x) \leq \lim\limits_{x \to \infty} \psi(x) = \dfrac{1}{2\alpha}.
\end{equation}
Therefore we have that:
\begin{equation}
    \dfrac{\frac{1}{2 \alpha} \| \nabla \ell_{z}(\w)\|^2 + \varepsilon}{\| \nabla \ell_{z}(\w) \|^2 + \delta} = \psi \left(\| \nabla \ell_{z}(\w) \|^2 \right) \leq \dfrac{1}{2 \alpha},
\end{equation}
which concludes the proof.
\end{leftlinebox}
\end{proof}

\begin{lemma} \label{lemma:parallelogram_inequality}
For any $a, b \in \mathbb{R}^p$, we have that:
\begin{equation}
    \|a \|^2 + \|b \|^2 \geq \dfrac{1}{2} \| a - b\|^2
\end{equation}
\end{lemma}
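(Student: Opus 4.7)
The plan is to derive this as an immediate consequence of the parallelogram identity. I would first observe that for any $a, b \in \mathbb{R}^p$,
\begin{equation*}
\|a - b\|^2 + \|a + b\|^2 = 2\|a\|^2 + 2\|b\|^2,
\end{equation*}
which follows from expanding both squared norms in terms of inner products, since the cross terms $-2\langle a, b \rangle$ and $+2\langle a, b \rangle$ cancel.

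Next I would use the trivial fact that $\|a + b\|^2 \geq 0$, so the parallelogram identity yields $\|a - b\|^2 \leq 2\|a\|^2 + 2\|b\|^2$. Dividing both sides by $2$ gives the claim. Alternatively, one can proceed without invoking the parallelogram identity by name: starting from $0 \leq \|a + b\|^2 = \|a\|^2 + 2\langle a, b\rangle + \|b\|^2$, rearrange to get $-2\langle a, b \rangle \leq \|a\|^2 + \|b\|^2$, then add $\|a\|^2 + \|b\|^2$ to both sides to obtain $\|a\|^2 - 2\langle a, b\rangle + \|b\|^2 \leq 2(\|a\|^2 + \|b\|^2)$, i.e.\ $\|a-b\|^2 \leq 2(\|a\|^2 + \|b\|^2)$.

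There is no real obstacle here; the lemma is a one-line consequence of the non-negativity of $\|a+b\|^2$ combined with the standard polarization of the inner product. The only thing to be careful about is the factor of $\tfrac{1}{2}$, which comes from the factor of $2$ on the right-hand side of the parallelogram identity after dividing through.
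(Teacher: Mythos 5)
Your proof is correct and is essentially the same as the paper's: the paper also notes the result is an instance of the parallelogram law and verifies it by showing $\|a\|^2 + \|b\|^2 - \tfrac{1}{2}\|a-b\|^2 = \tfrac{1}{2}\|a+b\|^2 \geq 0$, which is exactly the non-negativity argument you give.
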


\begin{proof}
This is a simple application of the parallelogram law, but we give the proof here for completeness.
\begin{align*}
\|a \|^2 + \|b \|^2 - \dfrac{1}{2} \| a - b\|^2
    &= \|a \|^2 + \|b \|^2 - \dfrac{1}{2} \| a\|^2 -\dfrac{1}{2} \| b\|^2 + a^\top b \\
    &= \dfrac{1}{2} \| a\|^2 + \dfrac{1}{2} \| b\|^2 + a^\top b \\
    &= \dfrac{1}{2} \| a + b \|^2 \\
    &\geq 0 \\
\end{align*}
\end{proof}

\begin{lemma} \label{lemma:strglycvx_fun_bound}
    Let $z \in \Z$.
    Assume that $\ell_{z}$ is $\alpha$-strongly convex and achieves its (possibly constrained) minimum at $\wstar \in \Omega$.
    Then we have:
    \begin{equation}
        \forall \: \w \in \Omega, \ \ell_{z}(\w)  - \ell_{z}(\wstar) \geq \dfrac{\alpha}{2} \| \w - \wstar \|^2
    \end{equation}
\end{lemma}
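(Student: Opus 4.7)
The plan is to combine the defining inequality of $\alpha$-strong convexity with the first-order optimality condition for $\wstar$ on the convex feasible set $\Omega$. Throughout, I will assume (as is done implicitly in the paper's problem setting of Section~\ref{sec:pb_setting}) that $\Omega$ is a convex subset of $\mathbb{R}^p$ on which Euclidean projections are well defined.

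First, I would write down the standard characterization of $\alpha$-strong convexity of $\ell_z$: for every pair $\w, \w' \in \mathbb{R}^p$,
\begin{equation*}
    \ell_z(\w) \geq \ell_z(\w') + \nabla \ell_z(\w')^\top (\w - \w') + \tfrac{\alpha}{2} \| \w - \w' \|^2.
\end{equation*}
Specializing to $\w' = \wstar$ gives, for every $\w \in \Omega \subseteq \mathbb{R}^p$,
\begin{equation*}
    \ell_z(\w) - \ell_z(\wstar) \geq \nabla \ell_z(\wstar)^\top (\w - \wstar) + \tfrac{\alpha}{2} \| \w - \wstar \|^2.
\end{equation*}

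Next, I would control the linear term $\nabla \ell_z(\wstar)^\top (\w - \wstar)$ using the assumption that $\wstar$ minimizes $\ell_z$ on $\Omega$. The standard first-order optimality condition for a differentiable function over a convex set states that $\nabla \ell_z(\wstar)^\top (\w - \wstar) \geq 0$ for all $\w \in \Omega$; this is obtained by considering the curve $t \mapsto \wstar + t(\w - \wstar) \in \Omega$ for $t \in [0,1]$ and evaluating the derivative of $t \mapsto \ell_z(\wstar + t(\w - \wstar))$ at $t = 0^+$, which must be nonnegative since $\wstar$ is a minimizer. Plugging this nonnegativity into the previous display immediately yields the desired conclusion
\begin{equation*}
    \ell_z(\w) - \ell_z(\wstar) \geq \tfrac{\alpha}{2} \| \w - \wstar \|^2.
\end{equation*}

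There is no real obstacle here; the only subtlety worth mentioning explicitly is the constrained optimality step, since Lemma~\ref{lemma:strglycvx_gamma_bound} earlier in the appendix uses an unconstrained-style Polyak--Lojasiewicz argument (minimizing the strongly convex quadratic lower bound freely over $\mathbb{R}^p$), and that shortcut is not available here when $\wstar$ lies on the boundary of $\Omega$ with nonzero gradient. Using the variational inequality $\nabla \ell_z(\wstar)^\top(\w - \wstar) \geq 0$ handles the boundary case cleanly, while recovering the usual $\nabla \ell_z(\wstar) = 0$ case when $\wstar$ is interior.
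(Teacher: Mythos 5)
Your argument is correct and is essentially identical to the paper's own proof: both apply the $\alpha$-strong convexity inequality at $\wstar$ and then discard the linear term using the first-order optimality condition $\nabla \ell_z(\wstar)^\top(\w - \wstar) \geq 0$ over the convex set $\Omega$. Your explicit justification of that variational inequality via the directional derivative along $t \mapsto \wstar + t(\w - \wstar)$ is a welcome addition that the paper leaves implicit.
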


\begin{proof}
By definition of strong-convexity \cite{Bubeck2015}, we have:
\begin{equation}
\forall \: \w \in \Omega, \: \ell_z(\w) - \ell_z(\wstar) - \nabla \ell_z(\wstar)^\top (\w - \wstar) \geq \dfrac{\alpha}{2} \| \w - \wstar \|^2.
\end{equation}
In addition, since $\wstar$ minimizes $\ell_z$, then necessarily:
\begin{equation}
    \forall \: \w \in \Omega, \: \nabla \ell_z(\wstar)^\top (\w - \wstar) \geq 0.
\end{equation}
Combining the two equations gives the desired result.
\end{proof}

\begin{restatable}{theorem}{thaligstronglycvxlargeeta}\label{th:alig_cvx_strongly_large_eta}
We assume that $\Omega$ is a convex set, and that for every $z \in \Z$, $\ell_z$ is $\alpha$-strongly convex and $\beta$-smooth.
Let $\wstar$ be a solution of (\ref{eq:main_problem}) such that $\forall z \in \Z, \: \ell_z(\wstar) \leq \varepsilon$, and suppose that $\delta > 2 \beta \varepsilon$.
Further assume that $\eta \geq \frac{1}{2 \beta}$.
Then if we apply ALI-G with a maximal learning-rate of $\eta$ to $f$, we have:
\begin{equation}
\mathbb{E}[f(\w_{T+1})] - \fstar
    \leq \beta \exp\left(- \dfrac{\alpha t}{4 \beta} \right)  \| \w_{0} - \wstar \|^2 + \dfrac{\delta}{\alpha} + 2 \dfrac{\beta}{\alpha} \varepsilon + 2 \dfrac{\beta^2}{\alpha^2} \varepsilon.
\end{equation}
\end{restatable}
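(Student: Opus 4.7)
The plan is to combine the sample-wise iterate descent inequality established in the proof of Theorem \ref{th:alig_cvx_smooth_large_eta} with the strong-convexity (Polyak--Lojasiewicz-type) lower bound on the suboptimality, in order to obtain a linear contraction in expected squared distance to $\wstar$, which we then unroll and convert to a function-value bound via smoothness.

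First, exactly as in the proof of Theorem \ref{th:alig_cvx_smooth_large_eta}, under the hypotheses $\delta > 2\beta\varepsilon$ and $\eta \geq \tfrac{1}{2\beta}$ one obtains the per-iteration bound
\begin{equation*}
\|\w_{t+1} - \wstar\|^2 \leq \|\w_t - \wstar\|^2 - \left(\tfrac{1}{2\beta} - \tfrac{\varepsilon}{\delta}\right)\bigl(\ell_{z_t}(\w_t) - \ell_{z_t}(\wstar)\bigr) + \tfrac{\delta}{4\beta^2} + \tfrac{\varepsilon^2}{\delta},
\end{equation*}
by lower-bounding $\gamma_t(\ell_{z_t}(\w_t)-\ell_{z_t}(\wstar))$ via Lemma \ref{lemma:smooth_gamma_bound} and upper-bounding $\gamma_t \ell_{z_t}(\wstar)$ using the interpolation tolerance. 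Taking conditional expectation over $z_t$ replaces the individual loss gap by $f(\w_t) - \fstar$. Then strong convexity of $f$ (inherited from the $\ell_z$), in the form $f(\w_t) - \fstar \geq \tfrac{\alpha}{2}\|\w_t - \wstar\|^2$ from Lemma \ref{lemma:strglycvx_fun_bound}, turns the descent inequality into the linear recursion
\begin{equation*}
\E[\|\w_{t+1} - \wstar\|^2] \leq (1 - c)\,\E[\|\w_t - \wstar\|^2] + A,
\end{equation*}
with $c := \tfrac{\alpha}{2}\bigl(\tfrac{1}{2\beta} - \tfrac{\varepsilon}{\delta}\bigr)$ and $A := \tfrac{\delta}{4\beta^2} + \tfrac{\varepsilon^2}{\delta}$.

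Second, unrolling and using $(1-c)^{T+1} \leq \exp(-c(T+1))$ yields
\begin{equation*}
\E[\|\w_{T+1} - \wstar\|^2] \leq \exp(-c(T+1))\,\|\w_0 - \wstar\|^2 + A/c.
\end{equation*}
Finally, since $\wstar$ minimizes $f$ and $f$ is $\beta$-smooth, one has $f(\w) - \fstar \leq \tfrac{\beta}{2}\|\w - \wstar\|^2$, so this last display converts directly into the desired upper bound on $\E[f(\w_{T+1})] - \fstar$. The main obstacle is bookkeeping: one must lower-bound $\tfrac{1}{2\beta} - \tfrac{\varepsilon}{\delta}$ in a way compatible with the stated exponential rate $\alpha/(4\beta)$, split $A$ into its $\delta$-dependent and $\varepsilon$-dependent contributions, and expand $A/c$ to recover the three additive residuals $\tfrac{\delta}{\alpha}$, $\tfrac{2\beta\varepsilon}{\alpha}$, and $\tfrac{2\beta^2\varepsilon}{\alpha^2}$; this requires using $\delta > 2\beta\varepsilon$ to tame the factor $1/c$ after the inversion.
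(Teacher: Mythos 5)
Your overall architecture (per-iterate descent inequality $\to$ strong-convexity lower bound $\to$ linear recursion $\to$ unroll $\to$ smoothness conversion) matches the paper's, but two of your concrete steps do not go through as claimed, and both are exactly where the paper does something different.

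First, you import the descent inequality from Theorem \ref{th:alig_cvx_smooth_large_eta}, whose loss-gap coefficient is $\tfrac{1}{2\beta}-\tfrac{\varepsilon}{\delta}$, giving a contraction constant $c=\tfrac{\alpha}{2}\bigl(\tfrac{1}{2\beta}-\tfrac{\varepsilon}{\delta}\bigr)$. The hypothesis $\delta>2\beta\varepsilon$ only guarantees $c>0$; it does \emph{not} give $c\geq\tfrac{\alpha}{4\beta}$ (that would require $\delta\geq 4\beta\varepsilon$), so as $\delta\downarrow 2\beta\varepsilon$ your rate degenerates and $A/c$ blows up, whereas the stated bound has the clean rate $\alpha/(4\beta)$ and residuals free of any $\bigl(1-\tfrac{2\beta\varepsilon}{\delta}\bigr)^{-1}$ factor. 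The paper avoids this by \emph{not} absorbing $\gamma_t\ell_{z_t}(\wstar)$ into the loss-gap term: it instead invokes Lemma \ref{lemma:strglycvx_gamma_bound} (a consequence of the Polyak--Lojasiewicz inequality from $\alpha$-strong convexity) to bound $\gamma_t\leq\tfrac{1}{2\alpha}$, hence $\gamma_t\ell_{z_t}(\wstar)\leq\tfrac{\varepsilon}{2\alpha}$, keeping the loss-gap coefficient at exactly $\tfrac{1}{2\beta}$ and the additive error at $\tfrac{\delta}{4\beta^2}+\tfrac{\varepsilon}{2\alpha}$. That is what produces the contraction factor $1-\tfrac{\alpha}{4\beta}$ and, after multiplying the residual by $\tfrac{4\beta}{\alpha}$ and by the smoothness constant, the terms $\tfrac{\delta}{\alpha}$ and $\tfrac{2\beta^2\varepsilon}{\alpha^2}$.

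Second, your final conversion "$f(\w)-\fstar\leq\tfrac{\beta}{2}\|\w-\wstar\|^2$ since $\wstar$ minimizes $f$'' is not valid here: $\wstar$ is the \emph{constrained} minimizer of $f$ over $\Omega$ and, since interpolation is only approximate, $\nabla f(\wstar)\neq\bm{0}$ in general; the first-order term $\nabla f(\wstar)^\top(\w-\wstar)$ is nonnegative on $\Omega$, which is the wrong direction for an upper bound. The paper's inequality (\ref{eq:strgly_cvx_iterate_distance_to_function_distance}) routes through the unconstrained minimizer $\wbar$ of $f$, uses Lemma \ref{lemma:parallelogram_inequality} and strong convexity to control $\|\wstar-\wbar\|^2$ by $\tfrac{2}{\alpha}f(\wstar)\leq\tfrac{2\varepsilon}{\alpha}$, and ends with $f(\w)-f(\wstar)\leq\beta\|\w-\wstar\|^2+\tfrac{2\beta\varepsilon}{\alpha}$. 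This is precisely the origin of the leading constant $\beta$ (rather than your $\beta/2$) and of the third residual $2\tfrac{\beta}{\alpha}\varepsilon$ in the statement, which your argument cannot produce. With these two repairs — Lemma \ref{lemma:strglycvx_gamma_bound} in place of the smooth-convex bound on $\gamma_t\ell_{z_t}(\wstar)$, and the $\wbar$-detour in place of the naive smoothness bound — your outline becomes the paper's proof.
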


\begin{proof} \
\begin{leftlinebox}[nobreak=false]
We condition the update on $z_t$ drawn at random.
The beginning of the proof is identical to that of Theorem \ref{th:alig_cvx_smooth_large_eta} (and in particular requires $\delta > 2 \beta \varepsilon$).
In addition, we remark that $\delta > 2 \beta \varepsilon \geq 2 \alpha \varepsilon$, because it always holds true that $\beta \geq \alpha$.
Combining inequalities (\ref{eq:alig_cvx_basic_iterate_bound}) and (\ref{eq:smooth_largeeta_bound_first_term}), we obtain:
\begin{align*}
\kern-1em \| \w_{t+1} - \wstar \|^2
    &\leq  \| \w_{t} - \wstar \|^2 - \dfrac{1}{2 \beta} (\ell_{z_t}(\w_t) - \ell_{z_t}(\wstar)) + \dfrac{\delta}{4 \beta^2} + \gamma_t \ell_{z_t}(\wstar), \\
    &\leq  \| \w_{t} - \wstar \|^2 - \dfrac{1}{2 \beta} (\ell_{z_t}(\w_t) - \ell_{z_t}(\wstar)) + \dfrac{\delta}{4 \beta^2} + \gamma_t \varepsilon, \eqcomment{definition of $\varepsilon$} \\
    &\leq  \| \w_{t} - \wstar \|^2 - \dfrac{1}{2 \beta} (\ell_{z_t}(\w_t) - \ell_{z_t}(\wstar)) + \dfrac{\delta}{4 \beta^2} + \dfrac{\varepsilon}{2 \alpha}. \eqcomment{Lemma \ref{lemma:strglycvx_gamma_bound}}
    \label{eq:strgly_cvx_iterate_bound} \stepcounter{equation}\tag{\theequation}
\end{align*}
\end{leftlinebox}

Taking the expectation over $z_t | z_{t-1}$, we obtain:
\begin{align*}
\kern-1em \mathbb{E}_{z_t | z_{t-1}}[\| \w_{t+1} - \wstar \|^2]
&\leq \| \w_{t} - \wstar \|^2 - \dfrac{1}{2 \beta} (f(\w_t) - f(\wstar)) + \dfrac{\delta}{4 \beta^2} + \dfrac{\varepsilon}{2 \alpha}, \\
&\leq \| \w_{t} - \wstar \|^2 - \dfrac{\alpha}{4 \beta} \| \w_t - \wstar \|^2 + \dfrac{\delta}{4 \beta^2} + \dfrac{\varepsilon}{2 \alpha}. \eqcomment{by lemma \ref{lemma:strglycvx_fun_bound}}
\end{align*}

Now taking expectation over every $z_t$, we use a trivial induction over $t$ and write:
\begin{leftlinebox}[nobreak=false]
\begin{align*}
&\kern-1em \mathbb{E} [\| \w_{t+1} - \wstar \|^2] \\
    &\leq \left( 1 - \dfrac{\alpha}{4 \beta} \right)  \mathbb{E} [\| \w_{t} - \wstar \|^2] + \dfrac{\delta}{4 \beta^2} + \dfrac{\varepsilon}{2 \alpha},\\
    &\leq \left( 1 - \dfrac{\alpha}{4 \beta} \right)^t  \| \w_{0} - \wstar \|^2 + \sum\limits_{k=0}^{t} \left(1 - \dfrac{\alpha}{4 \beta} \right)^{t -k} \left( \dfrac{\delta}{4 \beta^2} + \dfrac{\varepsilon}{2 \alpha} \right), \\
    &\leq \left( 1 - \dfrac{\alpha}{4 \beta} \right)^t  \| \w_{0} - \wstar \|^2 + \sum\limits_{k=0}^{\infty} \left(1 - \dfrac{\alpha}{4 \beta} \right)^{k} \left( \dfrac{\delta}{4 \beta^2} + \dfrac{\varepsilon}{2 \alpha} \right), \\
    &= \left( 1 - \dfrac{\alpha}{4 \beta} \right)^t  \| \w_{0} - \wstar \|^2 + \dfrac{1}{\frac{\alpha}{4 \beta}} \left( \dfrac{\delta}{4 \beta^2} + \dfrac{\varepsilon}{2 \alpha} \right), \\
    &= \left( 1 - \dfrac{\alpha}{4 \beta} \right)^t  \| \w_{0} - \wstar \|^2 + \dfrac{4 \beta}{\alpha} \left( \dfrac{\delta}{4 \beta^2} + \dfrac{\varepsilon}{2 \alpha} \right).
    \stepcounter{equation}\tag{\theequation}
\end{align*}
\end{leftlinebox}
Given an arbitrary $\w \in \mathbb{R}^p$, we now wish to relate the distance $\|\w - \wstar \|^2$ to the function values $f(\w) - f(\wstar)$.
\begin{leftlinebox}
Since each $\ell_z$ is $\alpha$-strongly convex and $\beta$-smooth, so is $f = \mathbb{E}_z[\ell_z]$.
We introduce $\wbar$ the minimizer of $f$ on its unconstrained domain $\mathbb{R}^p$.
Then we can write that for any $\w \in \mathbb{R}^p$:
\begin{align*}
&\kern-1em f(\w) - f(\wstar) \\
    &\leq f(\w) - f(\wbar), \eqcomment{$f(\wbar) \leq f(\wstar)$} \\
    &\leq \nabla f(\wbar)^\top(\w - \wbar) + \dfrac{\beta}{2} \|\w - \wbar \|^2, \eqcomment{$f$ is $\beta$-smooth}\\
    &= \dfrac{\beta}{2} \|\w - \wbar \|^2, \eqcomment{$\nabla f(\wbar) = \bm{0}$} \\
    &\leq \beta (\|\w - \wstar \|^2 + \|\wstar - \wbar \|^2 ), \eqcomment{Lemma \ref{lemma:parallelogram_inequality}} \\
    &\leq \beta \|\w - \wstar \|^2 + \dfrac{2 \beta}{\alpha} \left(f(\wstar) - f(\wbar) \right), \eqcomment{$f$ is $\alpha$-strongly convex} \\
    &\leq \beta \|\w - \wstar \|^2 + \dfrac{2 \beta}{\alpha} f(\wstar), \eqcomment{$0 \leq f(\wbar)$} \\
    &\leq \beta \|\w - \wstar \|^2 + 2\dfrac{\beta \varepsilon}{\alpha}, \eqcomment{definition of $\varepsilon$}
    \label{eq:strgly_cvx_iterate_distance_to_function_distance} \stepcounter{equation}\tag{\theequation}
\end{align*}
\end{leftlinebox}

Taking the expectation, we can combine the results to obtain the final result:
\begin{leftlinebox}[nobreak=false]
\begin{align*}
\kern-1em \mathbb{E} [f(\w_{t+1})] - f(\wstar)
&\leq \beta \mathbb{E} [\|\w_{t+1} - \wstar \|^2] + 2\dfrac{\beta \varepsilon}{\alpha}, \\
&\leq \beta \left( \left( 1 - \dfrac{\alpha}{4 \beta} \right)^t  \| \w_{0} - \wstar \|^2 + \dfrac{4 \beta}{\alpha} \left(\dfrac{\delta}{4 \beta^2} + \dfrac{\varepsilon}{2 \alpha} \right) \right) + 2\dfrac{\beta \varepsilon}{\alpha}, \\
&= \beta \left( 1 - \dfrac{\alpha}{4 \beta} \right)^t  \| \w_{0} - \wstar \|^2 + \dfrac{4 \beta}{\alpha} \left(\dfrac{\delta}{4 \beta} + \dfrac{\varepsilon \beta}{2 \alpha} \right) + 2\dfrac{\beta \varepsilon}{\alpha}, \\
&= \beta \left( 1 - \dfrac{\alpha}{4 \beta} \right)^t  \| \w_{0} - \wstar \|^2 + \dfrac{\delta}{\alpha} + 2 \dfrac{\beta}{\alpha} \varepsilon + 2 \dfrac{\beta^2}{\alpha^2} \varepsilon, \\
&\leq \beta \exp\left(- \dfrac{\alpha t}{4 \beta} \right)  \| \w_{0} - \wstar \|^2 + \dfrac{\delta}{\alpha} + 2 \dfrac{\beta}{\alpha} \varepsilon + 2 \dfrac{\beta^2}{\alpha^2} \varepsilon.
\end{align*}
\end{leftlinebox}

\end{proof}

\begin{restatable}{theorem}{thaligstronglycvxsmalleta}\label{th:alig_cvx_strongly_small_eta}
We assume that $\Omega$ is a convex set, and that for every $z \in \Z$, $\ell_z$ is $\alpha$-strongly convex and $\beta$-smooth.
Let $\wstar$ be a solution of (\ref{eq:main_problem}) such that $\forall z \in \Z, \: \ell_z(\wstar) \leq \varepsilon$, and suppose that $\delta > 2 \beta \varepsilon$.
Further assume that $\eta \leq \frac{1}{2 \beta}$.
Then if we apply ALI-G with a maximal learning-rate of $\eta$ to $f$, we have:
\begin{equation}
\mathbb{E}[f(\w_{T+1})] - \fstar
    \leq \beta \exp \left(\dfrac{-\alpha \eta T }{2} \right)  \| \w_{0} - \wstar \|^2 + \dfrac{\delta}{ \alpha} + \dfrac{4 \varepsilon \beta}{\alpha}.
\end{equation}
\end{restatable}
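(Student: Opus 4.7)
The plan is to combine the iterate recursion developed for Theorem \ref{th:alig_cvx_smooth_small_eta} (small $\eta$, smooth convex) with the strong convexity arguments used in Theorem \ref{th:alig_cvx_strongly_large_eta}. The starting point will be the per-step inequality already established in the small-$\eta$ smooth case, namely
\begin{equation*}
\| \w_{t+1} - \wstar \|^2 \leq \| \w_t - \wstar \|^2 - \eta \bigl( \ell_{z_t}(\w_t) - \ell_{z_t}(\wstar) \bigr) + \tfrac{\eta \delta}{2\beta} + \eta \varepsilon,
\end{equation*}
which is derived by lower-bounding $\gamma_t (\ell_{z_t}(\w_t) - \ell_{z_t}(\wstar))$ via (\ref{eq:smooth_smalleta_bound_first_term}) and upper-bounding $\gamma_t \ell_{z_t}(\wstar)$ via (\ref{eq:smooth_smalleta_bound_second_term}). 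Since the hypotheses here are strictly stronger than in that theorem (strong convexity implies convexity) and $\delta > 2 \beta \varepsilon$ still holds, this inequality applies verbatim.

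Next I would take the expectation conditional on the history up to time $t$. Using $\mathbb{E}_{z_t}[\ell_{z_t}(\w_t) - \ell_{z_t}(\wstar)] = f(\w_t) - \fstar$ and then invoking Lemma \ref{lemma:strglycvx_fun_bound} (strong convexity of $f$) to bound $f(\w_t) - \fstar \geq \tfrac{\alpha}{2}\|\w_t - \wstar\|^2$ gives the contraction
\begin{equation*}
\mathbb{E}\bigl[\| \w_{t+1} - \wstar \|^2\bigr] \leq \Bigl(1 - \tfrac{\alpha \eta}{2}\Bigr)\,\mathbb{E}\bigl[\| \w_t - \wstar \|^2\bigr] + \tfrac{\eta \delta}{2\beta} + \eta \varepsilon.
\end{equation*}
Note that $\eta \leq \tfrac{1}{2\beta} \leq \tfrac{1}{2\alpha}$ so the contraction factor lies in $(0,1)$.

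Unrolling this linear recursion and bounding the resulting geometric series by its infinite sum yields
\begin{equation*}
\mathbb{E}\bigl[\| \w_{T+1} - \wstar \|^2\bigr] \leq \Bigl(1 - \tfrac{\alpha \eta}{2}\Bigr)^{T+1}\| \w_0 - \wstar \|^2 + \tfrac{2}{\alpha \eta}\Bigl(\tfrac{\eta \delta}{2\beta} + \eta \varepsilon\Bigr),
\end{equation*}
and the standard inequality $(1-x)^T \leq \exp(-xT)$ converts the first term into the desired exponential. The residual term simplifies to $\tfrac{\delta}{\alpha \beta} + \tfrac{2\varepsilon}{\alpha}$.

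Finally, I would convert this bound on iterate distance into a bound on the function gap by reusing the iterate-to-function inequality (\ref{eq:strgly_cvx_iterate_distance_to_function_distance}) derived in the proof of Theorem \ref{th:alig_cvx_strongly_large_eta}, which states $f(\w) - \fstar \leq \beta \| \w - \wstar \|^2 + \tfrac{2\beta \varepsilon}{\alpha}$ for any $\w \in \mathbb{R}^p$. Multiplying the iterate bound by $\beta$ and adding $\tfrac{2\beta \varepsilon}{\alpha}$ produces the constant terms $\tfrac{\delta}{\alpha} + \tfrac{2\beta \varepsilon}{\alpha} + \tfrac{2\beta \varepsilon}{\alpha} = \tfrac{\delta}{\alpha} + \tfrac{4\beta \varepsilon}{\alpha}$, matching the statement. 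There is no real obstacle here: all the machinery (per-step inequality, strong-convexity-to-function-gap conversion, geometric unrolling) has already been assembled in the preceding theorems; the only care needed is to verify that the hypotheses $\delta > 2\beta \varepsilon$ and $\eta \leq \tfrac{1}{2\beta}$ allow direct reuse of inequality (\ref{eq:smooth_smalleta_bound_first_term}) without modification, which they do.
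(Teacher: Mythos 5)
Your proposal is correct and follows essentially the same route as the paper's proof: the same per-step inequality inherited from the small-$\eta$ smooth case, the same contraction via Lemma \ref{lemma:strglycvx_fun_bound}, the same geometric unrolling, and the same final conversion through inequality (\ref{eq:strgly_cvx_iterate_distance_to_function_distance}). The only (immaterial) difference is your exponent $T+1$ versus the paper's looser $T$ in the unrolled recursion, both of which yield the stated bound.
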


\begin{proof}
Re-using inequalities (\ref{eq:smooth_iterate_smalleta_bound}) and (\ref{eq:smooth_smalleta_bound_first_term}) from the proof of Theorem \ref{th:alig_cvx_smooth_small_eta}, we can write:
\begin{equation}
\begin{split}
\kern-1em \| \w_{t+1} - \wstar \|^2
    &\leq  \| \w_{t} - \wstar \|^2 - \eta (\ell_{z_t}(\w_t) - \ell_{z_t}(\wstar)) + \dfrac{\eta \delta}{2 \beta} + \gamma_t \ell_{z_t}(\wstar), \\
    &\leq  \| \w_{t} - \wstar \|^2 - \eta (\ell_{z_t}(\w_t) - \ell_{z_t}(\wstar)) + \dfrac{\eta \delta}{2 \beta} + \eta \varepsilon \\
    &\quad \eqcomment{using $\gamma_t \leq \eta$, $0 \leq \ell_{z_t}(\wstar) \leq \varepsilon$}.
\end{split}
\end{equation}
Taking the expectation over $z_t|z_{t-1}$, we obtain:
\begin{equation}
\kern-1em \mathbb{E}_{z_t|z_{t-1}} [\| \w_{t+1} - \wstar \|^2]
    \leq \| \w_{t} - \wstar \|^2 - \eta (f(\w_t) - f(\wstar)) + \dfrac{\eta \delta}{2 \beta} + \eta \varepsilon.
\end{equation}
Therefore, we can write:
\begin{equation}
\begin{split}
\kern-1em \mathbb{E}_{z_t|z_{t-1}} [\| \w_{t+1} - \wstar \|^2]
    &\leq  \| \w_{t} - \wstar \|^2 - \dfrac{\alpha \eta}{2} \| \w_t - \wstar \|^2 + \dfrac{\eta \delta}{2 \beta} + \eta \varepsilon, \eqcomment{Lemma \ref{lemma:strglycvx_fun_bound}} \\
    &= \left(1 - \dfrac{\alpha \eta}{2} \right)  \| \w_{t} - \wstar \|^2 + \dfrac{\eta \delta}{2 \beta} + \eta \varepsilon.
\end{split}
\end{equation}
Then a trivial induction gives that:
\begin{equation}
\begin{split}
\mathbb{E}[\| \w_{T+1} - \wstar \|^2]
    &\leq \left(1 - \dfrac{\alpha \eta}{2} \right)^T  \| \w_{0} - \wstar \|^2 + \left( \dfrac{\eta \delta}{2 \beta} + \eta \varepsilon \right)\sum\limits_{t=0}^T \left(1 - \dfrac{\alpha \eta}{2} \right)^t, \\
    &\leq \left(1 - \dfrac{\alpha \eta}{2} \right)^T  \| \w_{0} - \wstar \|^2 + \left( \dfrac{\eta \delta}{2 \beta} + \eta \varepsilon \right)\sum\limits_{t=0}^\infty \left(1 - \dfrac{\alpha \eta}{2} \right)^t, \\
    &= \left(1 - \dfrac{\alpha \eta}{2} \right)^T  \| \w_{0} - \wstar \|^2 + \left( \dfrac{\eta \delta}{2 \beta} + \eta \varepsilon \right) \dfrac{1}{1 - \left(1 - \dfrac{\alpha \eta}{2} \right)}, \\
    &= \left(1 - \dfrac{\alpha \eta}{2} \right)^T  \| \w_{0} - \wstar \|^2 + \dfrac{\delta}{\alpha \beta} + \dfrac{2 \varepsilon}{\alpha}. \\
\end{split}
\end{equation}
We now re-use the inequality (\ref{eq:strgly_cvx_iterate_distance_to_function_distance}) in expectation to write:
\begin{equation}
\begin{split}
\mathbb{E}[f(\w_{T+1})] - \fstar
    &\leq \beta \mathbb{E}[\| \w_{T+1} - \wstar \|^2] + \dfrac{2 \beta \varepsilon}{\alpha}, \\
    &\leq \beta \left(1 - \dfrac{\alpha \eta}{2} \right)^T  \| \w_{0} - \wstar \|^2 + \dfrac{\delta}{\alpha} + \dfrac{4 \varepsilon \beta}{\alpha}, \\
    &\leq \beta \exp \left(\dfrac{-\alpha \eta T }{2} \right)  \| \w_{0} - \wstar \|^2 + \dfrac{\delta}{\alpha} + \dfrac{4 \varepsilon \beta}{\alpha}. \\
\end{split}
\end{equation}
\end{proof}

\section{Detailed Non-Convex Results}
\label{app:sec:detailed_noncvx_results}

The Restricted Secant Inequality (RSI) is a milder assumption than convexity.
It can be defined as follows:
\begin{definition}
    Let $f: \mathbb{R}^p \to \mathbb{R}$ be a lower-bounded differentiable function achieving its minimum at $\wstar$.
    We say that $f$ satisfies the RSI if there exists $\alpha > 0$ such that:
    \begin{equation}
        \forall \vw \in \mathbb{R}^p, \: \nabla f (\w)^\top (\vw - \wstar) \geq \alpha \| \vw - \wstar \|^2.
    \end{equation}
\end{definition}

The RSI is sometimes used to prove convergence of optimization algorithms without assuming convexity \citep{Vaswani2019a}.

As we prove below, the Polyak step-size may fail to converge under the RSI assumption, even in a non-stochastic setting with the exact minimum known.

\begin{figure}[ht]
\centering
\footnotesize
\input{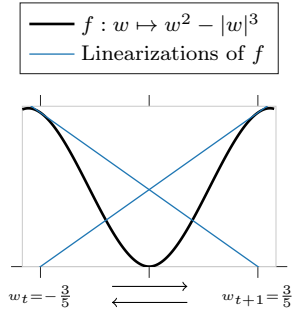}
\caption{\em Illustration of the function $f$, which satisfies the RSI. When starting at $w =- 3 / 5$, gradient descent with the Polyak step-size oscillates between $w= - 3 / 5$ and $w = 3 / 5$.}
\label{fig:rsi2}
\end{figure}

\begin{restatable}{proposition}{rsi_example}\label{prop:rsi_example}
Let $f: w \in [\frac{-3}{5}; \frac{3}{5}] \mapsto w^2 - |w|^3$.
Then $f$ satisfies the RSI with $\alpha = \frac{1}{5}$.
\end{restatable}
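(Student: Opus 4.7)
The plan is to verify the defining inequality $f'(w)(w - w_\star) \geq \alpha (w-w_\star)^2$ directly by a straightforward case analysis, using the explicit piecewise-polynomial form of $f$. Note first that $f$ achieves its minimum at $w_\star = 0$, since on $[-3/5, 3/5]$ we have $f(w) = w^2(1 - |w|) \geq 0$ with equality only at $w=0$.

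First I would write out the derivative piecewise. For $w \geq 0$, $f(w) = w^2 - w^3$, so $f'(w) = 2w - 3w^2$. For $w \leq 0$, $f(w) = w^2 + w^3$, so $f'(w) = 2w + 3w^2$. Both expressions agree (and equal $0$) at $w=0$, so $f$ is differentiable on the whole interval and the RSI inequality makes sense at every point.

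Second I would check the inequality $w f'(w) \geq \tfrac{1}{5} w^2$ in each case. For $w > 0$,
\begin{equation*}
w f'(w) = w(2w - 3w^2) = w^2(2 - 3w) \geq w^2 \bigl(2 - \tfrac{9}{5}\bigr) = \tfrac{1}{5} w^2,
\end{equation*}
where the inequality uses $w \leq 3/5$. For $w < 0$, symmetrically,
\begin{equation*}
w f'(w) = w(2w + 3w^2) = w^2(2 + 3w) \geq w^2 \bigl(2 - \tfrac{9}{5}\bigr) = \tfrac{1}{5} w^2,
\end{equation*}
using $w \geq -3/5$. The case $w = 0$ is trivial since both sides vanish. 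Combining the three cases yields the RSI with constant $\alpha = 1/5$.

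There is essentially no obstacle: the calculation is a direct verification, and the constant $\alpha = 1/5$ emerges naturally as the value $2 - 3 \cdot (3/5)$ attained at the boundary of the domain, which is exactly why the interval $[-3/5, 3/5]$ is the largest symmetric interval around $0$ on which $f$ satisfies the RSI with a positive constant. The only mild subtlety is that the domain is the interval $[-3/5,3/5]$ rather than all of $\mathbb{R}$ as in the definition, but this is consistent with the setup of Figure~\ref{fig:rsi}, where the iterates of the Polyak step-size remain in this interval.
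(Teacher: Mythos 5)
Your proof is correct and follows essentially the same route as the paper: the paper simply packages your two cases into one line using the sign function $\sigma(w)$, writing $\nabla f(w)\,w - \tfrac{1}{5}w^2 = 3w^2\bigl(\tfrac{3}{5} - \sigma(w)w\bigr) \geq 0$, which is exactly your computation with $\sigma(w)w = |w|$.

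One incidental remark in your discussion is inaccurate: $[-3/5,3/5]$ is not the largest symmetric interval on which $f$ satisfies the RSI with \emph{some} positive constant. Your own computation $w f'(w) = (2 - 3|w|)w^2$ shows the RSI holds with the (smaller) positive constant $2-3a$ on any $[-a,a]$ with $a < 2/3$. The value $1/5$ is just the sharp RSI constant for the specific domain $[-3/5,3/5]$; that domain is chosen in the paper because the Polyak step-size oscillates between the endpoints $\pm 3/5$, not because the RSI fails on any larger symmetric interval. This has no bearing on the validity of the proof itself.
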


\begin{proof}
First we note that $f$ achieves its minimum at $w_{\star} = 0$, and that $f(w_{\star}) = 0$.
In addition, we introduce the sign function $\sigma(w)$, which is equal to $1$ if $w \geq 0$, and $-1$ otherwise.
Now let $w \in [\frac{-3}{5}; \frac{3}{5}]$.
Then we have that:
\begin{equation}
\begin{split}
\nabla f (w) (w - w_{\star}) - \frac{1}{5} (w -  w_{\star})^2,
    &= (2 w - 3 \sigma(w) w^2) (w - 0) - \frac{1}{5} (w -  0)^2, \\
    &= \frac{9}{5} w^2 - 3 \sigma(w) w^3, \\
    &= 3 w^2 (\frac{3}{5} - \sigma(w) w), \\
    &\geq 0.
\end{split}
\end{equation}
\end{proof}

\begin{restatable}{proposition}{polyak_oscillation}\label{prop:polyak_oscillation}
Assume that we apply the Polyak step-size to $f: w \in [\frac{-3}{5}; \frac{3}{5}] \mapsto w^2 - |w|^3$, starting from the initial point $w_0 = -3/5$.
Then the iterates oscillate between $-3/5$ and $3/5$.
\end{restatable}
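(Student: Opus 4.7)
The plan is a short direct calculation followed by a symmetry argument. First, I would record the two ingredients needed to apply the Polyak update. Since $f$ attains its minimum at $w = 0$ with $f_\star = f(0) = 0$, the update at a point $w_t$ takes the form $w_{t+1} = w_t - \gamma_t \nabla f(w_t)$ with $\gamma_t = f(w_t)/|\nabla f(w_t)|^2$. Also $\nabla f(w) = 2w - 3\,\mathrm{sign}(w)\,w^2$ away from zero.

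Next I would carry out the computation at $w_0 = -3/5$. I expect
\[
f(-3/5) = \tfrac{9}{25} - \tfrac{27}{125} = \tfrac{18}{125}, \qquad \nabla f(-3/5) = -\tfrac{6}{5} + \tfrac{27}{25} = -\tfrac{3}{25},
\]
so that $|\nabla f(-3/5)|^2 = 9/625$ and therefore $\gamma_0 = (18/125)/(9/625) = 10$. Plugging in gives
\[
w_1 = -\tfrac{3}{5} - 10 \cdot \left(-\tfrac{3}{25}\right) = -\tfrac{3}{5} + \tfrac{6}{5} = \tfrac{3}{5}.
\]
So after one step the iterate has moved from $-3/5$ to $3/5$.

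To finish, I would invoke symmetry rather than redo the arithmetic. The function $f$ is even, so $f(3/5) = f(-3/5)$ and $\nabla f(3/5) = -\nabla f(-3/5)$. Hence the Polyak step-size $\gamma$ is the same at $3/5$ as at $-3/5$, while the gradient flips sign, so the update at $w_1 = 3/5$ yields $w_2 = 3/5 - 10 \cdot (3/25) = -3/5$. By induction, $w_{2k} = -3/5$ and $w_{2k+1} = 3/5$ for all $k \geq 0$, establishing the claimed oscillation.

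There is no genuine obstacle here; the only thing to be mindful of is that the formulas for $\nabla f$ and $f$ use $|w|^3$, so one must handle the sign correctly when differentiating. Once the first step is verified numerically, the evenness of $f$ makes the induction immediate.
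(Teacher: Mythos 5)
Your proposal is correct and follows essentially the same route as the paper: an explicit computation showing $w_0=-3/5$ maps to $w_1=3/5$, followed by a symmetry argument (you spell out the evenness of $f$ and the sign flip of the gradient, which the paper leaves implicit). All the arithmetic checks out.
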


\begin{proof}
We show that, starting with $w_0 = -\frac{3}{5}$, we obtain $w_1 = \frac{3}{5}$.
This will prove oscillation of the iterates by symmetry of the problem.
Since $w_0 = \frac{-3}{5}$, we have $f(w_0) = \frac{9}{25} - \frac{27}{125} = \frac{18}{125}$.
Furthermore, $\nabla f(w_0) = 2 (\frac{-3}{5}) + 3 (\frac{9}{25}) = \frac{-3}{25}$.
Therefore:
\begin{equation}
\begin{split}
w_1
    &= w_0 - \dfrac{f(w_0)}{(\nabla f (w_0))^2} \nabla f (w_0), \\
    &= w_0 - \dfrac{f(w_0)}{\nabla f (w_0)}, \\
    &= \frac{-3}{5} + \dfrac{\frac{18}{125}}{\frac{3}{25}}, \\
    &= \frac{-3}{5} + \frac{6}{5}, \\
    &= \frac{3}{5}.
\end{split}
\end{equation}
\end{proof}

\begin{restatable}{theorem}{thrsismooth}\label{th:alig_rsi}
We assume that $\Omega = \mathbb{R}^p$, and that for every $z \in \Z$, $\ell_z$ is $\beta$-smooth and satisfies the RSI with constant $\alpha$.
We further assume that there exists $\wstar$ a solution of (\ref{eq:main_problem}) such that $\forall z \in \Z, \: \ell_z(\wstar) = 0$.
Let $\eta$ be such that $\frac{1}{2 \beta} \leq \eta \leq \frac{2 \alpha}{\beta^2}$.
Then if we apply ALI-G with a maximal learning-rate of $\eta$ to $f$, we have:
\begin{equation}
f(\w_{T+1}) - \fstar
    \leq  \frac{\beta}{2} \exp \left( \left(-\dfrac{\alpha}{\beta} + \dfrac{\eta \beta}{2} \right) T \right) \| \w_{0} - \wstar \|^2.
\end{equation}

Note: this result assumes perfect interpolation, and thus we set $\delta = 0$ (no small constant for numerical stability).
\end{restatable}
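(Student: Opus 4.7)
}
The plan is to derive a one-step contraction on $\|\w_t - \wstar\|^2$ and then iterate. Expanding the update $\w_{t+1} = \w_t - \gamma_t \nabla \ell_{z_t}(\w_t)$ gives
\begin{equation*}
    \|\w_{t+1} - \wstar\|^2 = \|\w_t - \wstar\|^2 - 2\gamma_t \nabla \ell_{z_t}(\w_t)^\top (\w_t - \wstar) + \gamma_t^2 \|\nabla \ell_{z_t}(\w_t)\|^2.
\end{equation*}
Since interpolation is perfect, $\wstar$ minimizes each $\ell_{z_t}$ with $\ell_{z_t}(\wstar) = 0$ and $\nabla \ell_{z_t}(\wstar) = \bm{0}$. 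Hence RSI applied to $\ell_{z_t}$ lower-bounds the linear term by $2\gamma_t \alpha \|\w_t - \wstar\|^2$, provided we have a lower bound on $\gamma_t$ itself.

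The key step — and the only nontrivial one — is establishing $\gamma_t \geq \tfrac{1}{2\beta}$. I will split on which branch of the $\min$ is active. If $\gamma_t = \eta$, the hypothesis $\eta \geq \tfrac{1}{2\beta}$ gives the bound directly. Otherwise $\gamma_t = \ell_{z_t}(\w_t)/\|\nabla \ell_{z_t}(\w_t)\|^2$, and I will invoke Lemma \ref{lemma:smooth_bound} (which only uses non-negativity and $\beta$-smoothness) to write $\|\nabla \ell_{z_t}(\w_t)\|^2 \leq 2\beta \ell_{z_t}(\w_t)$, yielding $\gamma_t \geq \tfrac{1}{2\beta}$. (The edge case $\ell_{z_t}(\w_t) = 0$ implies the update is trivial.) For the quadratic term I use the other side of the $\min$: $\gamma_t^2 \|\nabla \ell_{z_t}(\w_t)\|^2 \leq \gamma_t \ell_{z_t}(\w_t) \leq \eta \ell_{z_t}(\w_t)$, and then $\beta$-smoothness at $\wstar$ (where $\nabla \ell_{z_t}(\wstar) = \bm{0}$ and $\ell_{z_t}(\wstar) = 0$) gives $\ell_{z_t}(\w_t) \leq \tfrac{\beta}{2} \|\w_t - \wstar\|^2$.

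Combining these three bounds yields the one-step contraction
\begin{equation*}
    \|\w_{t+1} - \wstar\|^2 \leq \left(1 - \tfrac{\alpha}{\beta} + \tfrac{\eta \beta}{2}\right) \|\w_t - \wstar\|^2.
\end{equation*}
The upper bound $\eta \leq \tfrac{2\alpha}{\beta^2}$ ensures the contraction factor is at most $1$ and non-negative (noting $\alpha \leq \beta$, so $-\alpha/\beta \geq -1$). Iterating and using $(1+x)^{T} \leq \exp(xT)$ for $x \in [-1,0]$ gives
\begin{equation*}
    \|\w_{T+1} - \wstar\|^2 \leq \exp\!\left(\left(-\tfrac{\alpha}{\beta} + \tfrac{\eta \beta}{2}\right) T\right) \|\w_0 - \wstar\|^2,
\end{equation*}
where I have absorbed one factor using $-\alpha/\beta + \eta\beta/2 \leq 0$. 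Finally, applying $\beta$-smoothness of $f$ at $\wstar$ gives $f(\w_{T+1}) - \fstar \leq \tfrac{\beta}{2}\|\w_{T+1} - \wstar\|^2$, which delivers the stated bound.

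The main obstacle is the case analysis for the lower bound on $\gamma_t$, as it is the step that both motivates the clipping by $\eta$ and explains the lower bound $\eta \geq \tfrac{1}{2\beta}$ in the hypothesis. Everything else is routine manipulation of the defining inequalities for $\gamma_t$ together with RSI and $\beta$-smoothness. I do not expect interpolation tolerance $\varepsilon > 0$ to be tractable here because the bound $\ell_{z_t}(\w_t) \leq \tfrac{\beta}{2}\|\w_t - \wstar\|^2$ relies on $\nabla \ell_{z_t}(\wstar) = \bm{0}$, which fails without perfect interpolation; this matches the open question noted in the text.
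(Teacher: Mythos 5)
Your proposal is correct and follows essentially the same route as the paper's proof: the same one-step expansion, the same use of $\gamma_t^2\|\nabla\ell_{z_t}(\w_t)\|^2\leq\gamma_t\ell_{z_t}(\w_t)\leq\eta\ell_{z_t}(\w_t)\leq\tfrac{\eta\beta}{2}\|\w_t-\wstar\|^2$, the same lower bound $\gamma_t\geq\tfrac{1}{2\beta}$ via Lemma \ref{lemma:smooth_bound} combined with RSI, and the same final conversion to function values by $\beta$-smoothness. Your handling of the edge case and of the sign of the contraction factor is, if anything, slightly more careful than the paper's.
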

\begin{proof}
We consider the update at time $t$, which we condition on the draw of $z_t \in \Z$.
Since we consider $\delta=0$, we have $\gamma_t = \min \left\{\frac{\ell_{z_t}(\w_t)}{\| \nabla \ell_{z_t}(\w_t) \|^2 }, \eta \right\}$. We suppose $\nabla \ell_{z_t}(\w_t) \neq \bm{0}$.
\begin{equation} \label{eq:rsi}
\begin{split}
\| \w_{t+1} - \wstar \|^2
&= \| \Pi_\Omega(\w_t - \gamma_t \nabla \ell_{z_t}(\w_t)) - \wstar \|^2, \\
&\leq \| \w_t - \gamma_t \nabla \ell_{z_t}(\w_t) - \wstar \|^2, \eqcomment{$\Pi_\Omega$ projection} \\
&= \| \w_{t} - \wstar \|^2 - 2 \gamma_t \nabla \ell_{z_t}(\w_t)^\top(\w_{t} - \wstar) + \gamma_t^2 \| \nabla \ell_{z_t}(\w_t) \|^2, \\
&\leq \| \w_{t} - \wstar \|^2 - 2 \gamma_t \nabla \ell_{z_t}(\w_t)^\top(\w_{t} - \wstar) + \gamma_t \ell_{z_t}(\w_t), \eqcomment{since $\gamma_t \leq \frac{\ell_{z_t}(\w_t)}{\| \nabla \ell_{z_t}(\w_t) \|^2}$}\\
&\leq \| \w_{t} - \wstar \|^2 - 2 \gamma_t \nabla \ell_{z_t}(\w_t)^\top(\w_{t} - \wstar) + \gamma_t \dfrac{\beta}{2} \| \w_{t} - \wstar \|^2, \eqcomment{Lemma 3.4 of \cite{Bubeck2015}} \\
&\leq \| \w_{t} - \wstar \|^2 - 2 \gamma_t \alpha \| \w_{t} - \wstar \|^2 + \gamma_t \dfrac{\beta}{2} \| \w_{t} - \wstar \|^2, \eqcomment{RSI inequality} \\
&= \Big( 1 - 2 \gamma_t \alpha + \gamma_t \dfrac{\beta}{2} \Big) \| \w_{t} - \wstar \|^2.
\end{split}
\end{equation}
Since we know that $\frac{\ell_{z_t}(\w_t)}{\| \nabla \ell_{z_t}(\w_t) \|^2} \geq \frac{1}{2 \beta}$ (Lemma \ref{lemma:smooth_bound}) and $\eta \geq \frac{1}{2 \beta}$, we have that $\gamma_t \geq \frac{1}{2 \beta}$.
Then, using both $\gamma_t \geq \frac{1}{2 \beta}$ and $\gamma_t \leq \eta$, we can write:
\begin{equation}
\| \w_{t+1} - \wstar \|^2
\leq \left(1 - \dfrac{\alpha}{\beta} + \dfrac{\eta \beta}{2} \right) \| \w_{t} - \wstar \|^2.
\end{equation}
With a trivial induction we obtain:
\begin{equation}
\begin{split}
\| \w_{T+1} - \wstar \|^2
&\leq \left(1 - \dfrac{\alpha}{\beta} + \dfrac{\eta \beta}{2} \right)^T \| \w_{0} - \wstar \|^2, \\
&\leq \exp \left( \left(-\dfrac{\alpha}{\beta} + \dfrac{\eta \beta}{2} \right) T \right) \| \w_{0} - \wstar \|^2. \\
\end{split}
\end{equation}
Since $f$ is $\beta$-smooth and the problem is unconstrained by assumption, we have $f(\w_{T+1}) \leq \frac{\beta}{2} \| \w_{T+1} - \wstar \|^2$ (by Lemma 3.4 of \cite{Bubeck2015}), and we obtain the desired result.
\end{proof}

\begin{restatable}{theorem}{thrsismooth_smalleta}\label{th:alig_rsi_small_eta}
We assume that $\Omega = \mathbb{R}^p$, and that for every $z \in \Z$, $\ell_z$ is $\beta$-smooth and satisfies the RSI with constant $\alpha$.
We further assume that there exists $\wstar$ a solution of (\ref{eq:main_problem}) such that $\forall z \in \Z, \: \ell_z(\wstar) = 0$.
Let $\eta$ be such that $0 < \eta \leq \frac{1}{2 \beta}$.
Then if we apply ALI-G with a maximal learning-rate of $\eta$ to $f$, we have:
\begin{equation}
f(\w_{T+1}) - \fstar
    \leq  \frac{\beta}{2} \exp \left( \left(- \eta \left(2 \alpha - \frac{\beta}{2} \right) \right) T \right) \| \w_{0} - \wstar \|^2.
\end{equation}

Note: this result assumes perfect interpolation, and thus we set $\delta = 0$ (no small constant for numerical stability).
\end{restatable}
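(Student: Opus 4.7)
The plan is to show that in the regime $0 < \eta \leq \tfrac{1}{2\beta}$, the clipping is always active, so that ALI-G reduces to plain SGD with the constant stepsize $\eta$, and then to reuse the contraction argument already established in the proof of Theorem \ref{th:alig_rsi}.

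First I would argue that $\gamma_t = \eta$ at every iteration. Indeed, each $\ell_{z_t}$ is $\beta$-smooth, non-negative, and (under the perfect interpolation assumption $\ell_{z_t}(\wstar) = 0$) lower bounded by zero, so Lemma \ref{lemma:smooth_bound} yields
\begin{equation}
\frac{\ell_{z_t}(\w_t)}{\|\nabla \ell_{z_t}(\w_t)\|^2} \geq \frac{1}{2\beta} \geq \eta,
\end{equation}
whenever $\nabla \ell_{z_t}(\w_t) \neq \bm{0}$ (and if the gradient vanishes, $\gamma_t = 0$ can be taken to equal $\eta \cdot 0$ without changing the update). Hence $\gamma_t = \eta$ always, and the ALI-G update simplifies to $\w_{t+1} = \w_t - \eta \nabla \ell_{z_t}(\w_t)$.

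Next I would follow exactly the chain of inequalities (\ref{eq:rsi}) in the proof of Theorem \ref{th:alig_rsi}: expand $\|\w_{t+1} - \wstar\|^2$, drop the projection (since $\Omega = \mathbb{R}^p$), use $\gamma_t \leq \ell_{z_t}(\w_t)/\|\nabla\ell_{z_t}(\w_t)\|^2$ to bound the quadratic term $\gamma_t^2 \|\nabla \ell_{z_t}(\w_t)\|^2 \leq \gamma_t \ell_{z_t}(\w_t)$, apply smoothness via Lemma~3.4 of \cite{Bubeck2015} together with $\ell_{z_t}(\wstar)=0$ and $\nabla \ell_{z_t}(\wstar)=\bm{0}$ to get $\ell_{z_t}(\w_t) \leq \tfrac{\beta}{2}\|\w_t - \wstar\|^2$, and finally use the RSI bound $\nabla \ell_{z_t}(\w_t)^\top(\w_t - \wstar) \geq \alpha \|\w_t - \wstar\|^2$. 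Substituting $\gamma_t = \eta$ yields the one-step contraction
\begin{equation}
\|\w_{t+1} - \wstar\|^2 \leq \left( 1 - \eta\Bigl(2\alpha - \tfrac{\beta}{2}\Bigr)\right) \|\w_{t} - \wstar\|^2.
\end{equation}

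Finally, a trivial induction together with $1-x \leq e^{-x}$ gives
\begin{equation}
\|\w_{T+1} - \wstar\|^2 \leq \exp\!\left(-\eta\Bigl(2\alpha - \tfrac{\beta}{2}\Bigr)T\right)\|\w_{0} - \wstar\|^2,
\end{equation}
and one more application of $\beta$-smoothness at $\wstar$ (together with $f(\wstar) = 0$ and $\nabla f(\wstar) = \bm{0}$) converts this iterate-space bound into the claimed function-value bound. I do not anticipate a serious obstacle since the mechanics mirror Theorem \ref{th:alig_rsi}; the only subtle point is the initial verification that $\gamma_t = \eta$ is forced, which is where the hypothesis $\eta \leq \tfrac{1}{2\beta}$ is used. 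One should also remark that the resulting rate is only meaningful when $\alpha > \beta/4$, so that the contraction factor is strictly less than one.
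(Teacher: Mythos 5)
Your proposal is correct and follows essentially the same route as the paper's proof: establish that the hypothesis $\eta \leq \frac{1}{2\beta}$ together with Lemma \ref{lemma:smooth_bound} forces $\gamma_t = \eta$, reuse the one-step bound (\ref{eq:rsi}) to obtain the contraction factor $1 - \eta(2\alpha - \beta/2)$, and conclude by induction, the bound $1-x \leq e^{-x}$, and $\beta$-smoothness at $\wstar$. The only differences are cosmetic (your explicit handling of the zero-gradient case, which the paper sidesteps via RSI forcing $\w_t = \wstar$, and your remark that the rate is vacuous unless $\alpha > \beta/4$).
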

\begin{proof}
We consider the update at time $t$, which we condition on the draw of $z_t \in \Z$.
Since we consider $\delta=0$, we have $\gamma_t = \min \left\{\frac{\ell_{z_t}(\w_t)}{\| \nabla \ell_{z_t}(\w_t) \|^2 }, \eta \right\}$.
We suppose $\nabla \ell_{z_t}(\w_t) \neq \bm{0}$.
We re-use equation (\ref{eq:rsi}) to write:
\begin{equation}
\| \w_{t+1} - \wstar \|^2 \leq \Big( 1 - 2 \gamma_t \alpha + \gamma_t \dfrac{\beta}{2} \Big) \| \w_{t} - \wstar \|^2.
\end{equation}
Since we know that $\frac{\ell_{z_t}(\w_t)}{\| \nabla \ell_{z_t}(\w_t) \|^2} \geq \frac{1}{2 \beta}$ (Lemma \ref{lemma:smooth_bound}) and $\eta \leq \frac{1}{2 \beta}$, we have that $\gamma_t = \eta$ necessarily.
Thus we obtain:
\begin{equation*}
\| \w_{t+1} - \wstar \|^2
\leq \Big( 1 - 2 \eta \alpha + \eta \dfrac{\beta}{2} \Big) \| \w_{t} - \wstar \|^2.
\end{equation*}
With a trivial induction we obtain:
\begin{align*}
\| \w_{T+1} - \wstar \|^2
&\leq \left(1 - \eta \left(2 \alpha - \frac{\beta}{2} \right) \right)^T \| \w_{0} - \wstar \|^2, \\
&\leq \exp \left( \left(- \eta \left(2 \alpha - \frac{\beta}{2} \right) \right) T \right) \| \w_{0} - \wstar \|^2. \\
\end{align*}
Since $f$ is $\beta$-smooth and the problem is unconstrained by assumption, we have $f(\w_{T+1}) \leq \frac{\beta}{2} \| \w_{T+1} - \wstar \|^2$ (by Lemma 3.4 of \cite{Bubeck2015}), and we obtain the desired result.
\end{proof}

\vfill

\section{Additional Experimental Details}

\subsection{Standard Deviation of CIFAR Results}

\begin{table}[H]
\centering
\begin{tabular}{llrr}
\toprule
Task & Optimizer &   Avg &   Std \\
\midrule
DN10 &     ADAMW &  92.6 &  0.08 \\
DN10 &      ALIG &  95.0 &  0.16 \\
DN10 &   AMSGRAD &  91.7 &  0.25 \\
DN10 &       DFW &  94.6 &  0.22 \\
DN10 &    L4ADAM &  90.8 &  0.09 \\
DN10 &     L4MOM &  91.9 &  0.17 \\
DN10 &       SGD &  95.1 &  0.21 \\
DN10 &      YOGI &  92.1 &  0.38 \\
\midrule
DN100 &     ADAMW &  69.5 &  0.54 \\
DN100 &      ALIG &  76.3 &  0.14 \\
DN100 &   AMSGRAD &  69.4 &  0.41 \\
DN100 &       DFW &  73.2 &  0.29 \\
DN100 &    L4ADAM &  60.5 &  0.64 \\
DN100 &     L4MOM &  62.6 &  1.98 \\
DN100 &       SGD &  76.3 &  0.22 \\
DN100 &      YOGI &  69.6 &  0.34 \\
\midrule
WRN10 &     ADAMW &  92.1 &  0.34 \\
WRN10 &      ALIG &  95.2 &  0.09 \\
WRN10 &   AMSGRAD &  90.8 &  0.31 \\
WRN10 &       DFW &  94.2 &  0.19 \\
WRN10 &    L4ADAM &  90.5 &  0.09 \\
WRN10 &     L4MOM &  91.6 &  0.24 \\
WRN10 &       SGD &  95.3 &  0.31 \\
WRN10 &      YOGI &  91.2 &  0.27 \\
\midrule
WRN100 &     ADAMW &  69.6 &  0.51 \\
WRN100 &      ALIG &  75.8 &  0.29 \\
WRN100 &   AMSGRAD &  68.7 &  0.70 \\
WRN100 &       DFW &  76.0 &  0.24 \\
WRN100 &    L4ADAM &  61.7 &  2.17 \\
WRN100 &     L4MOM &  61.4 &  0.86 \\
WRN100 &       SGD &  77.8 &  0.13 \\
WRN100 &      YOGI &  68.7 &  0.47 \\
\bottomrule
\end{tabular}
\caption{\em
    Test Accuracy (\%) on CIFAR including standard deviations.
    Each experiment was run three times.}
\label{tab:cifar_std}
\end{table}

\subsection{Additional Details About Training Protocol on ImageNet}

\paragraph{Data Processing.}
We use 1.23M images for training.
As mentioned in the paper, we do not use any data augmentation on this task.
Our data processing can be described as follows.
Each training image is resized so that its smaller dimension is of 224 pixels, after which we take a centered square crop of 224 by 224.
The cropped image is then centered and normalized per channel (for this, the mean and standard deviation per channel is computed across all training images), before being fed to the neural network.

\paragraph{Loss Function.}
We use the top-k truncated cross-entropy \cite{Lapin2016} as our loss function for training the model on ImageNet.
In particular, we use $k=5$ so that we optimize for the commonly used top-5 error, and we use the default temperature parameter $\tau=1$.

Our PyTorch code re-uses the implementation from \url{https://github.com/locuslab/lml}.

\end{document}